\def\eqref#1{equation~\ref{#1}}
\def\1{\bm{1}}
\DeclareMathAlphabet{\mathsfit}{\encodingdefault}{\sfdefault}{m}{sl}
\SetMathAlphabet{\mathsfit}{bold}{\encodingdefault}{\sfdefault}{bx}{n}
\DeclareMathOperator*{\argmax}{arg\,max}
\DeclareMathOperator*{\argmin}{arg\,min}
\let\classAND\AND
\let\AND\relax
\let\AND\classAND
\newtheorem{definition}{\textbf{Definition}}
\newtheorem{lemma}{\textbf{Lemma}}
\newtheorem{theorem}{\textbf{Theorem}}
\newtheorem{remark}{\textbf{Remark}}
\newacronym{T1}{T1}{\theta_1}
\newcommand{\mcs}{\mathcal{S}}
\newcommand{\mca}{\mathcal{A}}
\newcommand{\nn}{\nonumber}
\newcommand{\kp}{\mathsf P}
\newcommand{\kq}{\mathsf Q}
\newcommand{\cp}{\mathcal{P}} 
\newcommand{\var}{\textbf{Var}}
\title{Achieving the Asymptotically Optimal Sample Complexity of Offline Reinforcement Learning: A DRO-Based Approach}
\author{\name Yue Wang \email yue.wang@ucf.edu \\
      \addr Department of Electrical and Computer Engineering\\ University of Central Florida
      \AND
      \name Jinjun Xiong \email jinjun@buffalo.edu \\
      \addr Department of Computer Science and Engineering\\ University at Buffalo
      \AND
      \name Shaofeng Zou \email szou3@buffalo.edu \\
      \addr Department of Electrical Engineering \\ University at Buffalo}
\begin{document}

\maketitle

\begin{abstract}
Offline reinforcement learning aims to learn from pre-collected datasets without active exploration. This problem faces significant challenges, including limited data availability and distributional shifts. Existing approaches adopt a pessimistic stance towards uncertainty by penalizing rewards of under-explored state-action pairs to estimate value functions conservatively. In this paper, we show that the distributionally robust optimization (DRO) based approach can also address these challenges and is {asymptotically minimax optimal}. Specifically, we directly model the uncertainty in the transition kernel and construct an uncertainty set of statistically plausible transition kernels. We then show that the policy that optimizes the worst-case performance over this uncertainty set has a near-optimal performance in the underlying problem.  We first design a metric-based distribution-based uncertainty set such that with high probability the true transition kernel is in this set. We prove that to achieve a sub-optimality gap of $\epsilon$, the sample complexity is $\mathcal{O}(S^2C^{\pi^*}\epsilon^{-2}(1-\gamma)^{-4})$, where $\gamma$ is the discount factor, $S$ is the number of states, and $C^{\pi^*}$ is the single-policy clipped concentrability coefficient which quantifies the distribution shift. To achieve the optimal sample complexity, we further propose a less conservative value-function-based uncertainty set, which, however, does not necessarily include the true transition kernel. We show that an improved sample complexity of $\mathcal{O}(SC^{\pi^*}\epsilon^{-2}(1-\gamma)^{-3})$ can be obtained, which asymptotically matches with the minimax lower bound for offline reinforcement learning, and thus is asymptotically minimax optimal. 

\end{abstract}
\section{Introduction}
Reinforcement learning (RL) has achieved impressive empirical success in many domains, e.g.,  \citep{mnih2015human,silver2016mastering}. Nonetheless, most of the success stories rely on the premise that the agent can actively explore the environment and receive feedback to further promote policy improvement. This trial-and-error procedure can be costly, unsafe, or even prohibitory in many real-world applications, e.g., autonomous driving \citep{kiran2021deep} and health care \citep{yu2021reinforcement}. To address the challenge, offline (or batch) reinforcement learning \citep{lange2012batch,levine2020offline} was developed, which aims to learn a competing policy from a pre-collected dataset without access to online exploration. 

A straightforward idea for offline RL is to use the pre-collected dataset to learn an estimated model of the environment, and then learn an optimal policy for this model. This approach performs well when the dataset sufficiently explored the environment, e.g., \citep{agarwal2020model}. However, under more general offline settings, the static dataset can be limited, which results in the distribution shift challenge and inaccurate model estimation \citep{kidambi2020morel,ross2012agnostic,li2022settling}. Namely, the pre-collected dataset is often restricted to a small subset of state-action pairs, and the behavior policy used to collect the dataset induces a state-action visitation distribution that is different from the one induced by the optimal policy. This distribution shift and the limited amount of data lead to uncertainty in the estimation of the model, i.e., transition kernel and/or reward function.



To address the above challenge, one natural approach is to first quantify the uncertainty, and further take a pessimistic (conservative) approach in face of such uncertainty. Despite of the fact that the uncertainty exists in the transition kernel estimate, existing studies mostly take the approach to penalize the reward function for less-visited state-action pairs to obtain a pessimistic estimation of the value function, known as the Lower Confidence Bound (LCB) approach \citep{rashidinejad2021bridging,li2022settling,shi2022pessimistic,yan2022efficacy}. 
In this paper, we develop a direct approach to analyzing such uncertainty in the transition kernel by constructing a statistically plausible set of transition kernels, i.e., uncertainty set, and optimizing the worst-case performance over this uncertainty set. This principle is referred to as "distributionally robust optimization (DRO)" in the literature \citep{nilim2004robustness,iyengar2005robust}. This DRO-based approach directly tackles the uncertainty in the transition kernel. We show that our approach asymptotically achieves the minimax optimal sample complexity \citep{rashidinejad2021bridging}. We summarize our major contributions as follows.


\subsection{Main Contributions}
In this work, we focus on the most general partial coverage setting (see \Cref{sec:offline} for the definition). We develop a DRO-based framework that efficiently solves the offline RL problem. More importantly, we design a value-function-based uncertainty set and show that its sample complexity is minimax optimal.


\textbf{DRO-based Approach Solves Offline RL.} 
We construct a distribution-based uncertainty set centered at the empirical transition kernel to guarantee that with high probability, the true transition kernel lies within the uncertainty set. Then, optimizing the worst-case performance over the uncertainty set provides a lower bound on the performance under the true environment. 
Our uncertainty model enables easy solutions using the robust dynamic programming approach developed for robust MDP in \citep{iyengar2005robust,nilim2004robustness} within a polynomial computational complexity. We further show the sample complexity to achieve an $\epsilon$-optimal policy using our approach is $\mathcal{O}\left(\frac{ S^2C^{\pi^*}  }{(1-\gamma)^{4}\epsilon^{2}}\right)$ (up to a log factor),  where $\gamma$ is the discount factor, and $C^{\pi^*}$ is the single-policy concentrability for any comparator policy $\pi^*$ (see \Cref{def:C*}). This sample complexity matches with the best-known complexity of the distribution-based model-uncertainty method \citep{rashidinejad2021bridging,uehara2021pessimistic}, which demonstrates that our DRO framework can directly tackle the model uncertainty and effectively solve offline RL.

\textbf{Achieving Asymptotic Minimax Optimality via Design of value-function-based Uncertainty Set.}
While the approach described above is effective in achieving an $\epsilon$-optimal policy with relatively low sample complexity, it tends to exhibit excessive conservatism as its complexity surpasses the minimax lower bound for offline RL algorithms \citep{rashidinejad2021bridging} by a factor of $S(1-\gamma)^{-1}$. To close this gap, we discover that demanding the true transition kernel to be within the uncertainty set with high probability, i.e., the true environment and the worst-case one are close, can be overly pessimistic and unnecessary. What is of paramount importance is that the value function under the worst-case transition kernel within the uncertainty set (almost) lower bounds the one under the true transition kernel. Notably, this requirement is considerably less stringent than mandating that the actual kernel be encompassed by the uncertainty set.
We then design a less conservative value-function-based uncertainty set, which 
has a smaller radius and thus is a subset of the distribution-based uncertainty set. We prove that to obtain an $\epsilon$-optimal policy, the order of the sample complexity is  $\mathcal{O}\left(\frac{ SC^{\pi^*}}{(1-\gamma)^{3}\epsilon^{2}}\right)$. This complexity indicates the asymptotic minimax optimality of our approach by matching with the minimax lower bound in asymptotic order of the sample complexity for offline RL \citep{rashidinejad2021bridging} and the best result from the LCB approach \citep{li2022settling}.

\subsection{Related Works}
There has been a proliferation of works on offline RL. In this section, we mainly discuss works on model-based approaches. There are also model-free approaches, e.g., \citep{liu2020provably,kumar2020conservative,agarwal2020optimistic,yan2022efficacy,shi2022pessimistic}, which are not the focus here.


\textbf{Offline RL under global coverage.}
Existing studies on offline RL often make assumptions on the coverage of the dataset. This can be measured by the distribution shift between the behavior policy and the occupancy measure induced by the target policy, which is referred to as the concentrability coefficient \citep{munos2007performance,rashidinejad2021bridging}.
Many previous works, e.g.,  \citep{scherrer2014approximate, chen2019information,jiang2019value,wang2019neural,liao2020batch,liu2019neural,zhang2020variational,Munos2008,uehara2020minimax,duan2020minimax,xie2020batch,levine2020offline,antos2008fitted,farahmand2010error}, assume that the density ratio between the above two distributions is finite for all state-action pairs and policies, which is known as {global coverage} condition. This assumption essentially requires the behavior policy to be able to visit all possible state-action pairs, 
which is often violated in practice \citep{gulcehre2020rl,agarwal2020optimistic,fu2020d4rl}. 

\textbf{Offline RL under partial coverage.} 
Recent studies relax the above assumption of global coverage to partial coverage or single-policy concentrability. Partial coverage assumes that the density ratio between the distributions induced by a single target policy and the behavior policy is bounded for all state-action pairs. Therefore, this assumption does not require the behavior policy to be able to visit all possible state-action pairs, as long as it can visit those state-actions pairs that the target policy will visit. This partial coverage assumption is more feasible and applicable in real-world scenarios. In this paper, we focus on this practical partial coverage setting. Existing approaches under the partial coverage assumption can be divided into three categories as follows.

\begin{itemize}
    \item \textit{Regularized Policy Search.}
The first approach regularizes the policy so that the learned policy is close to the behavior policy \citep{fujimoto2019off,wu2019behavior,jaques2019way, peng2019advantage, siegel2020keep, wang2020critic,kumar2019stabilizing,fujimoto2019benchmarking,ghasemipour2020emaq,nachum2019algaedice,zhang2020gradientdice,zhang2023uncertainty}. Thus,  the learned policy is similar to the behavior policy which generates the dataset, hence this approach works well when the dataset is collected from experts \citep{wu2019behavior,fu2020d4rl}. 

\item \textit{Reward Penalization or LCB Approaches.}
One of the most widely used approaches is to penalize the reward in face of uncertainty due to the limited data, to obtain an pessimistic estimation that lower bounds the real value function, e.g.,  \citep{kidambi2020morel,yu2020mopo,yu2021combo,buckman2020importance,jin2021pessimism,xie2021policy,yin2021towards,liu2020provably,cui2022offline,chen2021pessimism,zhong2022pessimistic}. The model-based approaches, e.g., VI-LCB  \citep{rashidinejad2021bridging,li2022settling} penalizes the reward with a term that is inversely proportional to the number of samples. The tightest sample complexity is obtained in  \citep{li2022settling} by designing a value-function-based penalty term, which matches the minimax lower bound showed in \citep{rashidinejad2021bridging}. 

\item \textit{DRO-based Approaches.} 
Another approach is to first construct a set of ``statistically plausible'' MDP models based on the empirical transition kernel estimated using the dataset, and then find the policy that optimizes the worst-case performance over this set \citep{zanette2021provable,uehara2021pessimistic,rigter2022rambo,bhardwaj2023adversarial,guo2022model,hong2023pessimistic,chang2021mitigating,blanchet2023double}. However, finding such a policy under the models proposed in these works can be NP-hard, hence some heuristic approximations without theoretical optimality guarantee are used to deploy their approaches. Our work falls into this category, but the computational complexity is polynomial, and the sample complexity of our approach is minimax optimal. A recent work \citep{panaganti2023bridging} also proposes a distribution-based DRO framework similar to ours, and their sample complexity results match ours in the first part but fail to obtain the minimax optimality as our improved results in the second part.

\item \textit{Offline RL with function approximation.}
There is another substantial interest in combining offline reinforcement learning with function approximation (e.g., deep neural networks) in order to encode inductive biases and enable generalization across large, potentially continuous state spaces, with recent progress on both model-free and model-based approaches
\citep{ross2012agnostic,laroche2019safe,fujimoto2019off,kumar2019stabilizing,agarwal2020optimistic}. Besides the aforementioned challenges, an additional issue regarding the representational conditions of the function class is introduced, which assert that the function approximator is flexible enough to represent value
functions induced by certain policies. A huge body of recent works aims to understand the condition for the function classes, including  Bellman completeness condition \citep{liu2020provably, jin2021pessimism, xie2021bellman,yin2021near,rashidinejad2021bridging},  or Bellman realizability \citep{xie2021batch,chen2022offline,rashidinejad2022optimal,jiang2020minimax}. However, in this paper, we mainly consider the tabular setting, where no requirement on the function class is needed.

\end{itemize}

\textbf{Robust RL with distributional uncertainty.}
In this paper, our algorithm is based on the framework of robust MDP \citep{iyengar2005robust,nilim2004robustness,bagnell2001solving,satia1973markovian,wiesemann2013robust}, which finds the policy with the best worst-case performance over an uncertainty set of transition dynamics. When the uncertainty set is fully known, the problem can be solved by robust dynamic programming. The sample complexity of model-based approaches without full knowledge of the uncertainty sets was studied in, e.g.,  \citep{yang2021towards,xu2023improved,panaganti2022sample,shi2023curious,panaganti2022sample}, where a generative model is typically assumed. This model-based approach is further adapted to the robust offline setting in \citep{panaganti2022robust,shi2022distributionally}. Yet in these works, the challenge induced by the offline setting is addressed using the LCB approach, i.e.,  penalizing the reward functions. In contrast, we show that the DRO framework itself addresses the challenge of partial coverage in the offline setting.



\section{Preliminaries}
\subsection{Markov Decision Process (MDP)}
An MDP  can be characterized by a tuple  $(\mathcal{S},\mathcal{A}, \kp,r)$, where $\mcs$ and $\mca$ are the state and action spaces, $\mathsf P=\left\{\kp^a_s \in \Delta(\mcs), a\in\mca, s\in\mcs\right\}$\footnote{$\Delta(\mcs)$ denotes the probability simplex defined on $\mcs$.} is the transition kernel, $r:\mcs\times\mca\to [0,1]$ is the deterministic reward function\footnote{{To streamline our presentation and emphasize our novelty in solving offline RL through DRO formulation, we assume the reward function is deterministic. However, our DRO approach directly extends to the stochastic reward setting by constructing a similar uncertainty set for the reward estimation.}}, and $\gamma\in[0,1)$ is the discount factor. Specifically, $\kp^a_s=(p^a_{s,s'})_{s'\in\mathcal S}$, where $p^a_{s,s'}$ denotes the probability that the environment transits to state $s'$ if taking action $a$ at state $s$. The reward of taking action $a$ at state $s$ is given by $r(s,a)$. 
A stationary policy $\pi$ is a mapping from $\mcs$ to a distribution over $\mca$, which indicates the probabilities of the agent taking actions at each state. At each time $t$, an agent takes an action $a_t\sim\pi(s_t)$ at state $s_t$, the environment then transits to the next state $s_{t+1}$ with probability $ p^{a_t}_{s_t,s_{t+1}}$, and the agent receives reward $r(s_t,a_t)$. 

The value function of a policy $\pi$ starting from any initial state $s\in\mcs$ is defined as the expected accumulated discounted reward by following $\pi$: 
$
    V^\pi_{\kp}(s)\triangleq\mathbb{E}_\kp\left[\sum_{t=0}^{\infty}\gamma^t   r(S_t,A_t )|S_0=s,\pi\right],
$
     where $\mathbb{E}_{\kp}$ denotes the expectation when the state transits according to $\kp$.
Let $\rho$ denote the initial state distribution, and denote the value function under the initial distribution $\rho$ by $V^\pi_{\kp}(\rho)\triangleq\mathbb{E}_{s\sim\rho}[V^\pi_{\kp}(s)]$.

\subsection{Robust Markov Decision Process}\label{sec:rmdp}
In the robust MDP, the transition kernel is not fixed and lies in some uncertainty set $\cp$. 
Define the robust value function of a policy $\pi$ as the worst-case expected accumulated discounted reward  over the uncertainty set:
$
    V^{\pi}_\cp(s)\triangleq\min_{\kp\in\cp} \mathbb{E}_{ \kp}\left[\sum_{t=0}^{\infty}\gamma^t   r(S_t,A_t )|S_0=s,\pi\right].
$
Similarly, the robust action-value function for a policy $\pi$ is defined as
$
    Q^{\pi}_\cp(s,a)=\min_{\kp\in\cp}\mathbb{E}_{ \kp}\left[\sum_{t=0}^{\infty}\gamma^t r(S_t,A_t )|S_0=s,A_0=a,\pi\right].
$
The goal of robust RL is to find the optimal robust policy that maximizes the worst-case accumulated discounted reward, i.e., 
$
    \pi_r=\argmax_{\pi} V^\pi_\cp(s), \forall s\in\mcs.
$
It is shown in \citep{iyengar2005robust,nilim2004robustness,wiesemann2013robust} that the optimal robust value function is the unique solution to the optimal robust Bellman equation 
$
    V^{\pi_r}_\cp(s)=\max_a \{r(s,a)+\gamma\sigma_{\cp^a_s}(V^{\pi_r}_\cp)\},
$
where $\sigma_{\cp^a_s}(V)\triangleq \min_{p\in {\cp^a_s}} p^\top V$ denotes the support function of $V$ on a set $\cp^a_s$ and the corresponding robust Bellman operator is a $\gamma$-contraction.  

\subsection{Offline Reinforcement Learning}\label{sec:offline}
Under the offline setting, the agent cannot interact with the MDP and instead is given a pre-collected dataset $\mathcal{D}$ consisting of $N$ tuples $\{(s_i,a_i,s'_i,r_i): i=1,...,N \}$, where $r_i=r(s_i,a_i)$ is the deterministic reward, and $s'_i\sim\kp^{a_i}_{s_i}$ follows the transition kernel $\kp$ of the MDP. The $(s_i,a_i)$ pairs in $\mathcal{D}$ are generated i.i.d. according to an unknown data distribution $\mu$ over the state-action space. In this paper, we consider the setting where the reward functions $r$ is deterministic but unknown. We denote the number of samples transitions from $(s,a)$ in $\mathcal{D}$ by $N(s,a)$, i.e., $N(s,a)=\sum_{i=1}^N \textbf{1}_{(s_i,a_i)=(s,a)}$ and $\textbf{1}_{X=x}$ is the indicator function.

The goal of offline RL is to find a policy $\pi$ which optimizes the value function $V^\pi_\kp$ based on the offline dataset $\mathcal{D}$. Let $d^{\pi}$ denote the discounted occupancy distribution associated with $\pi$:
$
    d^\pi(s)=(1-\gamma)\sum^\infty_{t=0} \gamma^t \mathbb{P}(S_t=s|S_0\sim\rho, \pi,\kp).
$
In this paper, we focus on the partial coverage setting and adopt the following definition from \citep{li2022settling} to measure the distribution shift between the dataset distribution and the occupancy measure induced by a deterministic single policy $\pi^*$:
\begin{definition}\label{def:C*}(Single-policy clipped concentrability)
The single-policy clipped concentrability coefficient of a policy $\pi^*$ is defined as
\begin{align}
    C^{\pi^*}\triangleq \max_{s,a}\frac{\min\{d^{\pi^*}(s,a), \frac{1}{S} \}}{\mu(s,a)},
\end{align}
where $S\triangleq |\mcs|$ denotes the number of states.
\end{definition}
We note another unclipped version of $C^{\pi^*}$ is also commonly used in the literature, e.g., \citep{rashidinejad2021bridging,uehara2021pessimistic}, defined as
$
    \tilde{C}^{\pi^*}\triangleq \max_{s,a}\frac{d^{{\pi^*}}(s,a)}{\mu(s,a)}.
$
It is straightforward to verify that $C^{\pi^*} \leq \tilde{C}^{\pi^*}$, and all of our results remain valid if $C^{\pi^*}$ is replaced by $\tilde{C}^{\pi^*}$. A more detailed discussion can be found in \citep{li2022settling,shi2022distributionally}. 

In this paper, we always assume that $C^{\pi^*}<\infty$ being a finite number, under which we aim to find a policy $\pi$ that minimizes the sub-optimality gap compared to a comparator policy $\pi^*$ under some initial state distribution $\rho$:
$
   V^{\pi^*}_\kp(\rho)-V^\pi_\kp(\rho).
$ It is worth noting that in some references, the comparator policy is set to be the optimal policy, but  our results hold for arbitrary deterministic policies. 

\section{ Offline RL via Distributionally Robust Optimization}\label{sec:dro}
Model-based methods usually commence by estimating the transition kernel employing its maximum likelihood estimate. Nevertheless, uncertainties can arise in these estimations due to the inherent challenges associated with distribution shifts and limited data in the offline setting. For instance, the dataset may not encompass every state-action pair, and the sample size may be insufficient to estimate the transition kernel accurately.
In this paper, we directly quantify the uncertainty in the empirical estimation of the transition kernel and 
construct a set of "statistically possible" transition kernels, referred to as uncertainty sets, to encompass the actual environment. We then employ the DRO approach to optimize the worst-case performance over the uncertainty set. 
This formulation transforms the problem into a robust Markov Decision Process (MDP), as discussed in Section \ref{sec:rmdp}.

In \Cref{sec:hoff},we first introduce a direct distribution-based approach to construct the uncertainty set such that the true transition kernel is in the uncertainty set with high probability. We then present the robust value iteration algorithm to solve the DRO problem. We further theoretically characterize the bound on the sub-optimality gap and show that the sample complexity to achieve an $\epsilon$-optimality gap is $\mathcal{O}((1-\gamma)^{-4}\epsilon^{-2} S^2C^{\pi^*})$. This gap matches with the best-known sample complexity of the DRO-based approach in \cite{uehara2021pessimistic}, but with a polynomial complexity. This result shows the effectiveness of our DRO-based approach in solving the offline RL problem.

We then design a less conservative uncertainty set aiming to achieve the minimax optimal sample complexity in \Cref{sec:ber}. We theoretically establish that our approach attains an enhanced and asymptotically minimax optimal sample complexity of $\mathcal{O}\left((1-\gamma)^{-3}\epsilon^{-2}SC^{\pi^*}\right)$. Notably, this sample complexity matches with the minimax lower bound \citep{rashidinejad2021bridging} and stands on par with the best results achieved using the LCB approach \citep{li2022settling} in asymptotic order. 


Our approach starts with learning an empirical model of the transition kernel and reward from the dataset as follows. These empirical transition kernels will be used as the centroid of the uncertainty set.  
For any $(s,a)$, if $N(s,a)>0$, set 
\begin{align}
    \hat{\kp}^a_{s,s'}=\frac{\sum_{i\leq N}\textbf{1}_{(s_i,a_i,s'_i)=(s,a,s')}}{N(s,a)}, \hat{r}(s,a)=r_i(s,a);
\end{align}
And if $N(s,a)=0$, set 
\begin{align}
\hat{\kp}^a_{s,s'}=\textbf{1}_{s'=s}, \hat{r}(s,a)=0.
\end{align}
The MDP $\hat{\mathsf{M}}=(\mcs,\mca,\hat{\kp},\hat{r})$ with the empirical transition kernel $\hat{\kp}$ and empirical reward $\hat{r}$ is referred to as the empirical MDP. For unseen state-action pairs $(s,a)$ in the offline dataset, we take a conservative approach and let the estimated $\hat r(s,a)=0$, and set $s$ as an absorbing state if taking action $a$. Then the action value function at $(s,a)$ for the empirical MDP shall be zero, which discourages the choice of action $a$ at state $s$. 

For each state-action pair $(s,a)$, we construct an uncertainty set centered at the empirical transition kernel $\hat{\kp}^a_s$, with a radius inversely proportional to the number of samples in the dataset. Specifically, set the uncertainty set as $\hat{\cp}=\bigotimes_{s,a}\hat{\cp}^a_s$\footnote{{Here, $\bigotimes_{s,a}$ denotes the Cartesian product of all state-action pairs, i.e., the uncertainty set $\hat{\cp}$ is independently defined for each state-action pairs. This structure is also known as the $(s,a)$-rectangular uncertainty set.}} and
   \begin{align}\label{eq:uncertaintyset}
        \hat{\cp}^a_s=\left\{q\in\Delta(\mcs): D(q,\hat{\kp}^a_s)\leq  R^a_s \right\},
    \end{align}
where $D(\cdot,\cdot)$ is some function that measures the difference between two probability distributions, e.g., total variation, Chi-square divergence, $R^a_s$ is the radius ensuring that the uncertainty set adapts to the dataset size and the degree of confidence, which will be determined later. We then construct the robust MDP as $\hat{\mathcal{M}}=(\mcs,\mca,\hat{\cp},\hat{r})$.

As we shall show later, the optimal robust policy \begin{align}\label{eq:robustmdpobj}
    \pi_r=\argmax_\pi \min_{\mathsf P\in\hat{\mathcal P}} V_{\mathsf P}^\pi(s), \forall s\in\mcs.
\end{align}
w.r.t. $\hat{\mathcal{M}}$ performs well in the real environment and reaches a small sub-optimality gap. 
In our construction in \cref{eq:uncertaintyset}, the uncertainty set is $(s,a)$-rectangular, i.e., for different state-action pairs, the corresponding uncertainty sets are independent. With this rectangular structure, the optimal robust policy can be found by utilizing the robust value iteration algorithm or robust dynamic programming (Algorithm \ref{alg}), and the corresponding robust value iteration at each step can be solved in polynomial time \citep{wiesemann2013robust}. 
In contrast, the uncertainty set constructed in  \citep{uehara2021pessimistic,bhardwaj2023adversarial}, defined as $\mathcal{T}=\{\kq: \mathbb{E}_{\mathcal{D}}[\|\hat{\kp}^a_s-\kq^a_s\|^2]\leq \zeta\}$, does not enjoy such a rectangularity. Solving a robust MDP with such an uncertainty set can be, however, NP-hard \citep{wiesemann2013robust}. The approach developed in \citep{rigter2022rambo} to solve it is based on the heuristic approach of adversarial training, and therefore is lack of theoretical guarantee. 

\begin{algorithm}
\textbf{INPUT}: $\hat{r},\hat{\cp}, V, \mathcal{D}$
\begin{algorithmic}[1] 
\WHILE {TRUE}
\FOR{$s \in\mcs$}
\STATE {$V(s)\leftarrow \max_{a}\{ \hat{r}(s,a)+\gamma \sigma_{\hat{\cp}^a_s}(V)\}$}
\ENDFOR
\ENDWHILE
\FOR{$s\in\mcs$}
\STATE{$\pi_r(s)\in \argmax_{a\in\mca}\{ \hat{r}(s,a)+\gamma \sigma_{\hat{\cp}^a_s}(V)\}\} $} 
\ENDFOR
\end{algorithmic}
\textbf{Output}: $\pi_r$
\caption{Robust Value Iteration \citep{nilim2004robustness,iyengar2005robust}}
\label{alg}
\end{algorithm}
The algorithm converges to the optimal robust policy linearly since the robust Bellman operator is a $\gamma$-contraction \citep{iyengar2005robust}.  The computational complexity of the support function $\sigma_{\hat{\cp}^a_s}(V)$ in Lines 3 and 7 w.r.t. the uncertainty sets we constructed matches the ones of the LCB approaches \citep{rashidinejad2021bridging,li2022settling}.

In the following two sections, we specify the constructions of the uncertainty sets.

\subsection{Distribution-Based Radius} \label{sec:hoff}
In this section, we first consider a straightforward, distribution-based construction of the uncertainty set. Namely, we aim to construct an uncertainty set such that the true transition kernel falls into it, such that the DRO framework can provide a lower bound or conservative estimation of the true performance. 

We employ the total variation to construct this uncertainty set. Specifically, our uncertainty set is constructed as follows:
\begin{align}
    \hat{\cp}^a_s=\left\{q\in\Delta(\mcs): \frac{1}{2}\|q-\hat{\kp}^a_s\|_1\leq R^a_s\triangleq \min\left\{ 1, \sqrt{\frac{S\log\frac{SA}{\delta}}{8N(s,a)}}\right\} \right\}.
\end{align}
With our design, fewer samples result in a larger uncertainty set and imply that we should be more conservative in estimating the transition dynamics at this state-action pair.  Other distance function of $D$ can also be used, contingent upon the concentration inequality being applied. 

In Algorithm \ref{alg}, $\sigma_{\hat{\cp}^a_s}(V)=\min_{q\in\hat{\cp}^a_s}\{ q^\top V\}$ can be equivalently solved by solving its dual form \citep{iyengar2005robust}, which is a convex optimization problem:
    $\max_{0\leq \mu\leq V} \{\hat{\kp}^a_s(V-\mu)-R_s^a \text{Span}(V-\mu)\}$,
and $\text{Span}(X)=\max_i X(i)-\min_i X(i)$ is the span semi-norm of vector $X$. The computational complexity associated with solving it is $\mathcal{{O}}(S\log(S))$. Notably, this polynomial computational complexity is on par with the complexity of the VI-LCB approach \citep{li2022settling}.

We then show that with this distribution-based radius, the true transition kernel falls into the uncertainty set with high probability.
\begin{lemma}\label{lemma:1}
With probability at least $1-\delta$, it holds that for any $s,a$,  $\kp^a_s\in\hat{\cp}^a_s$, i.e., $\|\kp^a_s-\hat{\kp}^a_s\|\leq 2R^a_s$\footnote{{In this paper, unless stated otherwise, we denote the $l_1$-norm by $\|\cdot \|$.}}.
\end{lemma}
This result implies that the real environment $\kp$ falls into the uncertainty set $\hat{\cp}$ with high probability, and hence finding the optimal robust policy of $\hat{\mathcal{M}}$ provides a worst-case performance guarantee. We further present our result of the sub-optimality gap in the following theorem.
\begin{theorem}\label{thm:1}
Consider an arbitrary deterministic comparator policy $\pi^*$. With probability at least $1-2\delta$, the output policy $\pi_r$ of \Cref{alg} satisfies
\begin{align}\label{eq:bound1}
    V^{\pi^*}_\kp(\rho)-V^{\pi_r}_\kp(\rho)  \leq  \frac{16SC^{\pi^*}\log\frac{NS}{\delta}}{(1-\gamma)^2N}+\sqrt{\frac{96S^2C^{\pi^*}\log\frac{SA}{\delta}}{(1-\gamma)^4N}}.
\end{align}
\end{theorem}

    To achieve an $\epsilon$-optimality gap, a dataset of size
$
        N=\mathcal{O}\left((1-\gamma)^{-4}\epsilon^{-2} S^2C^{\pi^*} \right)
$
is required. This sample complexity matches with the sample complexity for  LCB methods \citep{rashidinejad2021bridging} and model uncertainty type approach \citep{panaganti2023bridging,uehara2021pessimistic}. 
It suggests that our DRO-based approach can effectively address the offline RL problem.

However, there is still a gap between this sample complexity and the minimax lower bound in \citep{rashidinejad2021bridging} and the best-known sample complexity of LCB-based method \citep{li2022settling}, which is $\mathcal{O}\left((1-\gamma)^{-3}\epsilon^{-2} SC^{\pi^*}\right) $. We will address this problem via a value function-based uncertainty set design in the next subsection. 

\begin{remark}
    The choice of total variation for constructing the uncertainty set and obtain the results are not essential, and alternative distance functions or divergence, e.g., including Chi-square divergence, KL-divergence and Wasserstein distance can also be used with their corresponding concentration inequalities \citep{canonne2020short,bhandari2021linear,arora2023near}. Our results and methods can  be further generalized to large-scale problems when a low-dimensional latent representation is presented, e.g., linear MDPs or low-rank MDPs \citep{panaganti2023bridging}.
\end{remark}



\subsection{Value-Function-Based Radius}\label{sec:ber}
As discussed above, using a distribution-based radius is able to achieve an $\epsilon$-optimal policy, however, with an unnecessarily large sample complexity. Compared with the minimax lower bound and the tightest result obtained in \citep{li2022settling}, there exists a gap of order $\mathcal{O}(S(1-\gamma)^{-1})$. This gap is mainly because the distribution-based radius is overly conservative such that the true transition kernel $\kp$ falls into $\hat{\cp}$ with high probability (Lemma \ref{lemma:1}). Therefore, it holds that $V^{\pi_r}_{\hat{\cp}}\leq V^{\pi_r}_\kp$ and the sub-optimality gap can be bounded as 
\begin{align}
    V^{\pi^*}_\kp-V^{\pi_r}_\kp=\underbrace{V^{\pi^*}_\kp-V^{\pi_r}_{\hat{\cp}}}_{\Delta_1}+\underbrace{V^{\pi_r}_{\hat{\cp}}-V^{\pi_r}_\kp}_{\Delta_2\leq 0}\leq \Delta_1.
\end{align}
Since the radius is large, although $\Delta_2$ is bounded by $0$, $V_{\hat{\cp}}$ is more conservative and hence the first term $\Delta_1$ becomes larger, resulting in a loose bound with an order of $\mathcal{O}\left(\frac{1}{\sqrt{N(1-\gamma)^4}}\right)$.

This result can be improved from two aspects. Firstly, we note that the uncertainty set under the distribution-based construction is too large to include $\kp$ with high probability. Although this construction implies $\Delta_2\leq 0$, as the price of it, the large radius implies a loose bound on $\Delta_1$. Another observation is that both two terms are 
in fact the differences between the \textit{expectations} under two different distributions. Instead of merely using the distance of the two distributions to bound them, we can utilize other tighter concentration inequalities like Bernstein's inequality, to obtain an involved but tighter bound.

Toward this goal, we construct a smaller and less conservative uncertainty set such that: (1). It implies a tighter bound on $\Delta_1$ combining with Bernstein's inequality; And (2). Although non-zero, the term $\Delta_2$ can also be bounded by some tight error bounds.

To do so, we further note that  $\Delta_2=\underbrace{V^{\pi_r}_{\hat{\cp}}-V^{\pi_r}_{\hat{\kp}}}_{(a)}+\underbrace{V^{\pi_r}_{\hat{\kp}}-V^{\pi_r}_\kp}_{(b)}$. Term $(b)$ can be viewed as an estimation error which is from the inaccurate estimation from the dataset, which is independent from the designing of the uncertaint set (ignoring the dependence of $\pi_r$); And Term $(a)$ is the difference between robust value function and the value function under the centroid transition kernel $\hat{\kp}$, which is always negative and can be bounded using the dual-form solutions for specific uncertainty sets \citep{iyengar2005robust}. We hence choose a radius such that: (1). the negative bound on $(a)$ cancels with the higher-order terms in the bound on $(b)$ and further implies a tighter bound on $\Delta_2$; And (2). the bound on $\Delta_1$ is also tight by utilizing Bernstein's inequality.

Since our design is based on the value functions instead of distributions, we refer to it as the value-function-based construction. Our construction is as follows. 
\begin{align}
    \hat{\cp}^a_s=\left\{q\in\Delta(\mcs): \chi^2(q||\hat{\kp}^a_s)\leq R^a_s\triangleq \frac{48\log\frac{4SAN}{(1-\gamma)\delta}}{N(s,a)} \right\}.
\end{align}
Note that for state-action pairs $(s,a)$ with $N(s,a)=0$, the uncertainty set reduces to the whole probability simplex $\Delta(\mcs)$. 

In our construction, instead of the total variation, we construct the uncertainty set using the Chi-square divergence, i.e.,  $D(p,q)=\chi^2(p||q)=\sum_s q(s)\left(1-\frac{p(s)}{q(s)}\right)^2$. As we will discuss later, adopting it can result in a tighter bound than total variation. 

\begin{remark}
    From Pinsker's inequality and the fact that $D_{KL}(p||q)\leq \chi^2(p||q)$ \citep{nishiyama2020relations}, it holds that $\|p-q\|\leq \sqrt{2\chi^2(p||q)}$. Hence the value-function-based uncertainty set is a subset of the distribution-based uncertainty set in \Cref{sec:hoff}, verifying our value-function-based construction is  less conservative. 
\end{remark}

Similarly, we find the optimal robust policy w.r.t. the corresponding robust MDP  $\hat{\mathcal{M}}=(\mcs,\mca, \hat{\cp},\hat{r})$ using the robust value iteration with a slight modification, which is presented in \Cref{alg2}.
\begin{algorithm}
\textbf{INPUT}: $\hat{r},\hat{\cp}, V,\mathcal{D}$
\begin{algorithmic}[1] 
\WHILE {TRUE}
\FOR{$s \in\mcs$}
\STATE {$N(s)\leftarrow \sum^N_{i=1} \textbf{1}_{(s_i)=s}$}
\STATE {$V(s)\leftarrow \max_{a}\{ \hat{r}(s,a)+\gamma \sigma_{\hat{\cp}^a_s}(V)\}$}
\ENDFOR
\ENDWHILE
\FOR{$s\in\mcs$}
\IF{$N(s)>0$}
\STATE{$\pi_r(s)\in \{\argmax_{a\in\mca}\{ \hat{r}(s,a)+\gamma \sigma_{\hat{\cp}^a_s}(V)\}\} \cap \{a: N(s,a)>0 \}$}
\ELSE \STATE{$\pi_r(s)\in \argmax_{a\in\mca}\{ \hat{r}(s,a)+\gamma \sigma_{\hat{\cp}^a_s}(V)\}\} $} 
\ENDIF
\ENDFOR
\end{algorithmic}
\textbf{Output}: $\pi_r$
\caption{Robust Value Iteration}
\label{alg2}
\end{algorithm}
Specifically, the output policy $\pi_r$ in Algorithm \ref{alg2} is set to be the greedy policy satisfying $N(s,a)>0$ if $N(s)>0$. The existence of such a policy is proved in Lemma \ref{lemma:part 2} in the appendix. This is to guarantee that when there is a tie of taking greedy actions, we will take an action that has appeared in the pre-collected dataset $\mathcal D$, align with the pessimism principle.

   The support function $\sigma_{\hat{\cp}^a_s}(V)$ w.r.t. the Chi-square divergence uncertainty set can also be computed using its dual form \citep{iyengar2005robust}:
        $\sigma_{\hat{\cp}^a_s}(V)=\max_{\alpha\in[V_{\min},V_{\max}]}\{\hat{\kp}^a_s V_\alpha-\sqrt{R^a_s\var_{\hat{\kp}^a_s}(V_\alpha)} \}$,
    where $V_\alpha(s)=\min\{\alpha,V(s)\}$. The dual form is also a convex optimization problem and can be solved efficiently within a polynomial time $\mathcal{O}(S\log S)$ \citep{iyengar2005robust}. 

\begin{remark}
    Using the Chi-square divergence enables a smaller radius and yields a tighter bound on $\Delta_2=(a)+(b)$. Namely, $(b)$ can be bounded by a $N^{-0.5}$-order bound according to the Bernstein's  inequality (see \Cref{lemma:4} in the Appendix). Simultaneously, our goal is to obtain a  bound with the same order on $(a)$, which effectively offsets the bound on $(b)$, and yields a tighter bound on $\Delta_2$. The robust value function w.r.t. the total variation uncertainty set, however, depends on $R^a_s$ linearly (see the dual form we discussed above); On the other hand, the solution to the Chi-square divergence uncertainty set incorporates a term of $\sqrt{R^a_s}$ which enables us to set a lower-order radius (i.e., set $R^a_s=({N(s,a)})^{-1}$) to offset the $N^{-0.5}$-order bound on $(b)$. Thus we choose Chi-square in our construction. 
\end{remark}

We then characterize the optimality gap obtained from \Cref{alg2} in the following theorem. 
\begin{theorem}\label{thm:2}
If $N\geq \mathcal{O}\left(\frac{1}{(1-\gamma)SC^{\pi^*}\mu^2_{\min}}\right)$, then the output policy $\pi_r$ of \Cref{alg2} satisfies
\begin{align}\label{eq:bound}
        V^{\pi^*}_\kp(\rho)-V^{\pi_r}_\kp(\rho)\leq\sqrt{\frac{KSC^{\pi^*}\log\frac{4SAN}{(1-\gamma)\delta}}{(1-\gamma)^3N}},
\end{align} 
with probability at least $1-4\delta$, where $\mu_{\min}=\min\{\mu(s,a): \mu(s,a)>0 \}$ denotes the minimal non-zero entry of $\mu$, and $K$ is some universal constant that is independent from $S,\gamma,C^{\pi^*}$ and $N$.
\end{theorem}
Theorem \ref{thm:2} implies that our DRO approach can achieve an $\epsilon$-optimality gap, as long as the size of the dataset exceeds the order of
\begin{align}
    \mathcal{O}\bigg( \underbrace{\frac{SC^{\pi^*}}{(1-\gamma)^3\epsilon^2}}_{\epsilon\text{-dependent}}+\underbrace{\frac{1}{(1-\gamma)SC^{\pi^*}\mu^2_{\min}}}_{\text{burn-in cost}}\bigg).
\end{align} 
The burn-in cost term indicates that the asymptotic bound of the sample complexity becomes relevant after the dataset size surpasses the burn-in cost. It represents the minimal requirement for the amount of data. In fact, if the dataset is too small, we should not expect to learn a well-performed policy from it. In our case, if the dataset is generated under a generative model \cite{panaganti2022robust,yang2021towards,shi2023curious} or uniform distribution, the burn-in cost term is in order of $\frac{SA^2}{1-\gamma}$. Burn-in cost also widely exists in the sample complexity studies of RL, e.g., {$H^8SC^{\pi^*}$} in \citep{xie2021policy}, $\frac{S^3A^2}{(1-\gamma)^4}$ in \citep{he2021nearly}, and $\frac{SC^{\pi^*}}{(1-\gamma)^5}$ in \citep{yan2022efficacy}, $\frac{H}{\mu_{\min}p_{\min}}$ in \citep{shi2022distributionally}. 
Note that the burn-in cost term is independent of the accuracy level $\epsilon$, which implies the sample complexity is less than
$
    \mathcal{O}\bigg( {\frac{SC^{\pi^*}}{(1-\gamma)^3\epsilon^2}}\bigg),
$
as long as $\epsilon$ is small. This result matches the optimal asymptotic order of the complexity according to the minimax lower bound in \citep{rashidinejad2021bridging}, and also matches the tightest bound obtained using the LCB approach \citep{li2022settling} asymptotically. This suggests that our DRO approach can effectively address offline RL while imposing minimal demands on the dataset, thus optimizing the sample complexity associated with offline RL.

\subsection{Further Comparison with LCB Approaches}
We further discuss the major differences in our approach and the LCB approaches \citep{li2022settling,rashidinejad2021bridging} in this section.

Firstly, the motivations of the two approaches are different. The LCB approach can be viewed as {\textbf{`value function-based'}}, which aims to obtain a pessimistic estimation of the {value function} by subtracting a penalty term from the reward, e.g., eq (83) in \citep{li2022settling}. It implies that the resulting estimation lower bounds the true value functions. Our DRO approach aims to construct an uncertainty set that contains the statistically plausible transition dynamics, and optimize the worst-case performance among this uncertainty set, which can be viewed as {\textbf{`transition model-based'}}, meaning that we directly tackle the uncertainty from the model estimation, without using the value function as an intermediate step. The robust value function we obtained, may not be a conservative estimation of the true value functions, but it utilizes the principle of pessimism through the DRO formulation. 

Our proof techniques are also different from the ones in LCB. To clarify the difference, we first rewrite our update rule using the LCB fashion. The update in robust value iteration \Cref{alg} can be written as
    \begin{align}
        V(s)\leftarrow \max_a \{r(s,a)+\gamma\sigma_{\hat{\cp}^a_s}(V) \}=\max_a \{r(s,a)+\gamma\hat{\kp}^a_s V-b(s,a,V)\},
    \end{align} 
    where $b(s,a,V)\triangleq \gamma\hat{\kp}^a_s V-\gamma\sigma_{\hat{\cp}^a_s}(V)$, and hence, our algorithm bears formal resemblance to an LCB approach. 
    
    However, this formal similarity does not imply that our approach and results can be derived from those of LCB methods.
Specifically, in LCB approaches, a crucial step involves the meticulous design of the penalty term $b(s,a,V)$ to satisfy the condition $\gamma|\hat{\kp}^a_sV-\kp^a_sV|\leq b(s,a,V)$, ensuring that the obtained estimation $V$ remains pessimistic compared to the true value function. This often leads to the development of a complex penalty term, which may incorporate variance (e.g., see equation (28) in \citep{li2022settling}).

In contrast, within the DRO setting, designing an uncertainty set to uphold the aforementioned inequality either yields a loose bound on sub-optimality or results in a highly intricate uncertainty set. On one hand, enlarging the uncertainty set sufficiently to accommodate $\kp\in\hat{\cp}$ (to ensure $\kp^a_s V\geq \min_{q\in\hat{\cp}^a_s}qV=\sigma_{\hat{\cp}^a_s}(V)$ and the inequality above) falls into the distribution-based construction, leading to sub-optimal sample complexity. On the other hand, adopting a similar approach involving Bernstein inequality introduces dependence on $V$ into the radius, resulting in a time-varying and intricate uncertainty set akin to the one in the LCB method. Our method circumvents these issues by not mandating the above inequality for leveraging the pessimism principle. Instead, we harness the inherent pessimism of the DRO setting and devise a straightforward yet effective uncertainty set. Although the above inequality may not hold, the robust value function we derive remains a pessimistic estimation of the true value function, demonstrating its effectiveness and efficiency.

This underscores the fundamental disparity in motivation and design between our method and LCB approaches, highlighting the novelty of our approach.


We present a comparative analysis of our results alongside those of the most closely related works in Table \ref{table1}. As evidenced by the comparison, our approach stands out as the first model-uncertainty-based method to achieve the minimax optimal sample complexity in offline RL. {Notably, we observe variations in the assumptions made across these works, particularly concerning the adoption of the clipped version of the single-policy concentrability. We meticulously specify these variations in our comparison.
However, as previously discussed, the single-policy clipped concentrability is inherently smaller than its unclipped counterpart. Consequently, our assumption is comparatively weaker, enabling our results to directly follow if we adopt the unclipped assumption. }
\begin{table}[!h]
\centering
\begin{tabular}{ |p{3cm}|p{3cm}|p{3cm}|p{3cm}|p{3cm}|  }
 \hline
 & Approach Type & Assumption & Asymptotic Sample Complexity&Computational Complexity\\
 
 \hline
 \textcolor{blue}{Our Approach} &\textcolor{blue}{DRO} &\textcolor{blue}{Single-policy, clipped}  & \textcolor{blue}{$\mathcal{O}\left(\frac{SC^{\pi^*}}{\epsilon^{2}(1-\gamma)^{3}}\right)$}     &\textcolor{blue}{Polynomial}\\
 \hline
 \citep{rashidinejad2021bridging}& LCB &{Single-policy}&    $\mathcal{O}\left(\frac{SC^{\pi^*}}{\epsilon^{2}(1-\gamma)^{5}}\right)$     &Polynomial\\
  \hline
 \citep{uehara2021pessimistic}   & DRO &{Single-policy}&$\mathcal{O}\left(\frac{S^2C^{\pi^*}}{\epsilon^{2}(1-\gamma)^{4}}\right)$&   NP-Hard\\
 \hline
 \citep{panaganti2023bridging}   & DRO &{Single-policy}&$\mathcal{O}\left(\frac{S^2C^{\pi^*}}{\epsilon^{2}(1-\gamma)^{4}}\right)$&  Polynomial\\
  \hline
 \citep{li2022settling}& LCB&{Single-policy, clipped}&    $\mathcal{O}\left(\frac{SC^{\pi^*}}{\epsilon^{2}(1-\gamma)^{3}}\right)$     &Polynomial\\
  \hhline{|=|=|=|=|=|}
  \citep{rashidinejad2021bridging}& Minimax Lower bound &{Single-policy} &    $\mathcal{O}\left(\frac{SC^{\pi^*}}{\epsilon^{2}(1-\gamma)^{3}}\right)$     &-\\  \hline
\end{tabular}
\caption{Comparison with related works.}
\label{table1} 
\end{table}




\section{Experiments}
We adapt our DRO framework under two problems, the Garnet problem $\mathcal{G}(30,20)$ \citep{archibald1995generation}, and the Frozen-Lake problem \citep{brockman2016openai} to numerically verify our results. 

In the Garnet problem, $|\mcs|=30$ and $|\mca|=20$. The transition kernel $\kp=\{ \kp^a_s,s\in\mcs,a\in\mca \}$ is randomly generated following a normal distribution: $\kp^a_s \sim \mathcal{N}(\omega^a_s,\sigma^a_s)$ and then normalized, and the reward function $r(s,a)\sim \mathcal{N}(\nu^a_s,\psi^a_s)$, where $\omega^a_s,\sigma^a_s,\nu^a_s,\psi^a_s \sim \textbf{Uniform}[0,100]$. 

In the Frozen-Lake problem, an agent aim to cross a $4\times 4$ frozen lake from Start to Goal without falling into any Holes by walking over the frozen lake. 

In both problems, we deploy our approach under both global coverage and partial coverage conditions. Specifically, under the global coverage setting, the dataset is generated by the uniform policy $\pi(a|s)=\frac{1}{|\mca|}$; And under the partial coverage condition, the dataset is generated according to $\mu(s,a)=\frac{\textbf{1}_{a=\pi^*(s)}}{2}+\frac{\textbf{1}_{a=\eta}}{2}$, where $\eta$ is an action randomly chosen from the action space $\mca$. 

At each time step, we generate $40$ new samples and add them to the offline dataset and deploy our DRO approach on it. We also deploy the LCB approach \citep{li2022settling} and non-robust model-based dynamic programming as the baselines. We run the algorithms independently 10 times and plot the average value of the sub-optimality gaps over all 10 trajectories. We also plot the 95th and 5th percentiles of the 10 curves as the upper and lower envelopes of the curves. The results are presented in \Cref{fig}. It can be seen from the results that our DRO approach finds the optimal policy with relatively less data; The LCB approach has a similar convergence rate to the optimal policy, which verifies our theoretical results; The non-robust DP converges much slower, and can even converge to a sub-optimal policy. The results hence demonstrate the effectiveness and efficiency of our DRO approach.
\begin{figure}[!h]
\begin{center}
\subfigure[Garnet Problem under global coverage]{
\label{fig.gar_g}
\includegraphics[width=0.23\linewidth]{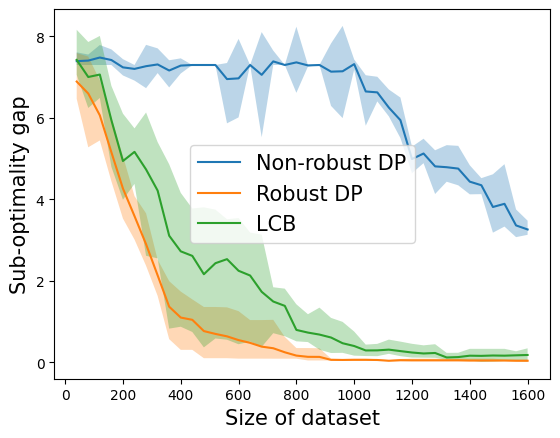}}
\subfigure[Garnet Problem under partial coverage]{
\label{fig.gar_p}
\includegraphics[width=0.23\linewidth]{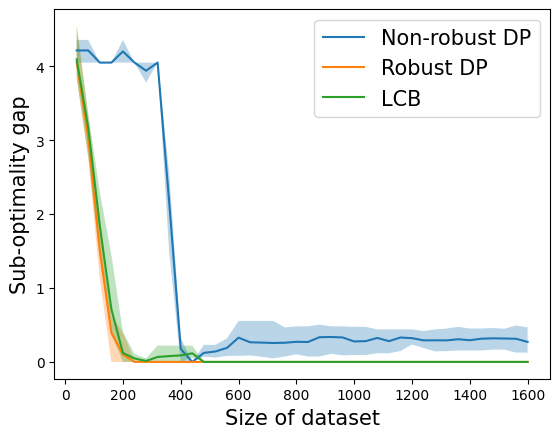}}
\subfigure[Frozen-Lake under global coverage]{
\label{fig.fl_g}
\includegraphics[width=0.23\linewidth]{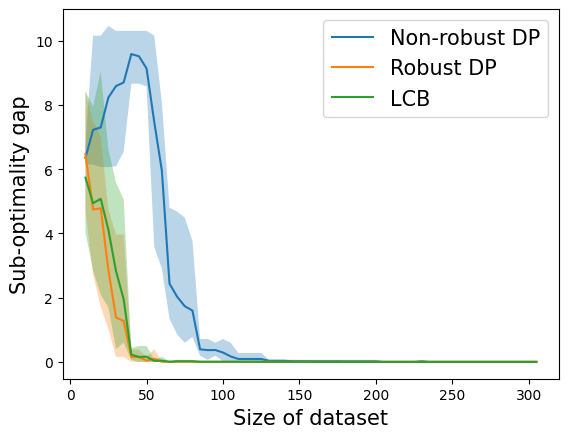}}
\subfigure[Frozen-Lake under partial coverage]{
\label{fig.fl_p}
\includegraphics[width=0.23\linewidth]{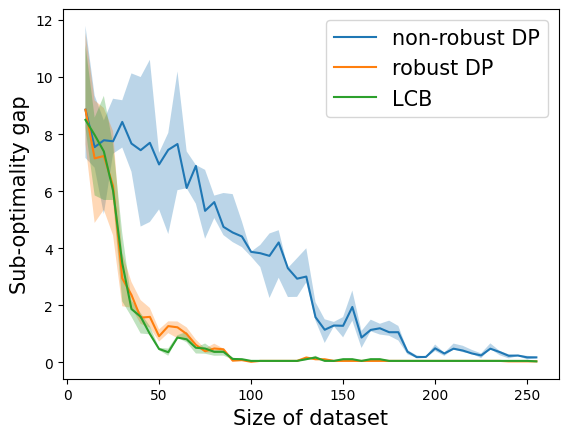}}
\caption{Sub-optimality gaps of Robust DP, LCB approach, and Non-robust DP.}
\label{fig}
\end{center}
\end{figure}

\section{Conclusion}
In this paper, we revisit the problem of offline reinforcement learning from a novel angle of distributional robustness. 
We develop a DRO-based approach to solve offline reinforcement learning. Our approach directly incorporates conservatism in estimating the transition dynamics instead of penalizing the reward of less-visited state-action pairs. Our algorithms are based on the robust dynamic programming approach, which is computationally efficient. 
We focus on the challenging partial coverage setting, and develop two uncertainty sets: the distribution-based and  the less conservative value-function-based. For the distribution-based uncertainty set, we 
theoretically characterize its sample complexity, and show it matches with the best one of the LCB-based approaches using the distribution-based bonus term. For the value-function-based uncertainty set, we show its sample complexity is asymptotically minimax optimal, with a constant-level burn-in cost.
Our results provide a DRO-based framework, an alternative approach to efficiently and effectively solve the problem of offline reinforcement learning.



\section{Acknowledgement}
This work is supported by the National Science Foundation under Grants CCF-2007783, CCF-2106560, ECCS-2337375 (CAREER), and Grant \# 2235364 (FuSe-TG).

This material is based upon work supported under the AI Research Institutes program by National Science Foundation and the Institute of Education Sciences, U.S. Department of Education through Award \# 2229873 - National AI Institute for Exceptional Education. Any opinions, findings and conclusions or recommendations expressed in this material are those of the author(s) and do not necessarily reflect the views of the National Science Foundation, the Institute of Education Sciences, or the U.S. Department of Education.

\newpage
\bibliography{main}

\begin{thebibliography}{89}
\providecommand{\natexlab}[1]{#1}
\providecommand{\url}[1]{\texttt{#1}}
\expandafter\ifx\csname urlstyle\endcsname\relax
  \providecommand{\doi}[1]{doi: #1}\else
  \providecommand{\doi}{doi: \begingroup \urlstyle{rm}\Url}\fi

\bibitem[Agarwal et~al.(2020{\natexlab{a}})Agarwal, Kakade, and Yang]{agarwal2020model}
Alekh Agarwal, Sham Kakade, and Lin~F Yang.
\newblock Model-based reinforcement learning with a generative model is minimax optimal.
\newblock In \emph{Conference on Learning Theory}, pp.\  67--83. PMLR, 2020{\natexlab{a}}.

\bibitem[Agarwal et~al.(2020{\natexlab{b}})Agarwal, Schuurmans, and Norouzi]{agarwal2020optimistic}
Rishabh Agarwal, Dale Schuurmans, and Mohammad Norouzi.
\newblock An optimistic perspective on offline reinforcement learning.
\newblock In \emph{International Conference on Machine Learning}, pp.\  104--114. PMLR, 2020{\natexlab{b}}.

\bibitem[Antos et~al.(2007)Antos, Szepesv{\'a}ri, and Munos]{antos2008fitted}
Andr{\'a}s Antos, Csaba Szepesv{\'a}ri, and R{\'e}mi Munos.
\newblock Fitted {Q}-iteration in continuous action-space {MDP}s.
\newblock In \emph{Proc. Advances in Neural Information Processing Systems (NIPS)}, volume~20, pp.\  9--16, 2007.

\bibitem[Archibald et~al.(1995)Archibald, McKinnon, and Thomas]{archibald1995generation}
TW~Archibald, KIM McKinnon, and LC~Thomas.
\newblock {On the generation of {M}arkov decision processes}.
\newblock \emph{Journal of the Operational Research Society}, 46\penalty0 (3):\penalty0 354--361, 1995.

\bibitem[Arora et~al.(2023)Arora, Bhattacharyya, Canonne, and Yang]{arora2023near}
Vipul Arora, Arnab Bhattacharyya, Cl{\'e}ment~L Canonne, and Joy~Qiping Yang.
\newblock Near-optimal degree testing for bayes nets.
\newblock \emph{arXiv preprint arXiv:2304.06733}, 2023.

\bibitem[Bagnell et~al.(2001)Bagnell, Ng, and Schneider]{bagnell2001solving}
J~Andrew Bagnell, Andrew~Y Ng, and Jeff~G Schneider.
\newblock Solving uncertain {M}arkov decision processes.
\newblock 09 2001.

\bibitem[Bhandari \& Russo(2021)Bhandari and Russo]{bhandari2021linear}
Jalaj Bhandari and Daniel Russo.
\newblock On the linear convergence of policy gradient methods for finite {MDP}s.
\newblock In \emph{Proc. International Conference on Artifical Intelligence and Statistics (AISTATS)}, pp.\  2386--2394. PMLR, 2021.

\bibitem[Bhardwaj et~al.(2023)Bhardwaj, Xie, Boots, Jiang, and Cheng]{bhardwaj2023adversarial}
Mohak Bhardwaj, Tengyang Xie, Byron Boots, Nan Jiang, and Ching-An Cheng.
\newblock Adversarial model for offline reinforcement learning.
\newblock \emph{arXiv preprint arXiv:2302.11048}, 2023.

\bibitem[Blanchet et~al.(2023)Blanchet, Lu, Zhang, and Zhong]{blanchet2023double}
Jose Blanchet, Miao Lu, Tong Zhang, and Han Zhong.
\newblock Double pessimism is provably efficient for distributionally robust offline reinforcement learning: Generic algorithm and robust partial coverage.
\newblock \emph{arXiv preprint arXiv:2305.09659}, 2023.

\bibitem[Brockman et~al.(2016)Brockman, Cheung, Pettersson, Schneider, Schulman, Tang, and Zaremba]{brockman2016openai}
Greg Brockman, Vicki Cheung, Ludwig Pettersson, Jonas Schneider, John Schulman, Jie Tang, and Wojciech Zaremba.
\newblock {OpenAI Gym}.
\newblock \emph{arXiv preprint arXiv:1606.01540}, 2016.

\bibitem[Buckman et~al.(2020)Buckman, Gelada, and Bellemare]{buckman2020importance}
Jacob Buckman, Carles Gelada, and Marc~G Bellemare.
\newblock The importance of pessimism in fixed-dataset policy optimization.
\newblock \emph{arXiv preprint arXiv:2009.06799}, 2020.

\bibitem[Canonne(2020)]{canonne2020short}
Cl{\'e}ment~L Canonne.
\newblock A short note on learning discrete distributions.
\newblock \emph{arXiv preprint arXiv:2002.11457}, 2020.

\bibitem[Chang et~al.(2021)Chang, Uehara, Sreenivas, Kidambi, and Sun]{chang2021mitigating}
Jonathan~D Chang, Masatoshi Uehara, Dhruv Sreenivas, Rahul Kidambi, and Wen Sun.
\newblock Mitigating covariate shift in imitation learning via offline data without great coverage.
\newblock \emph{arXiv preprint arXiv:2106.03207}, 2021.

\bibitem[Chen \& Jiang(2019)Chen and Jiang]{chen2019information}
Jinglin Chen and Nan Jiang.
\newblock Information-theoretic considerations in batch reinforcement learning.
\newblock In \emph{International Conference on Machine Learning}, pp.\  1042--1051. PMLR, 2019.

\bibitem[Chen \& Jiang(2022)Chen and Jiang]{chen2022offline}
Jinglin Chen and Nan Jiang.
\newblock Offline reinforcement learning under value and density-ratio realizability: the power of gaps.
\newblock In \emph{Uncertainty in Artificial Intelligence}, pp.\  378--388. PMLR, 2022.

\bibitem[Chen et~al.(2021)Chen, Li, Wang, Yang, Wang, and Zhao]{chen2021pessimism}
Minshuo Chen, Yan Li, Ethan Wang, Zhuoran Yang, Zhaoran Wang, and Tuo Zhao.
\newblock Pessimism meets invariance: Provably efficient offline mean-field multi-agent {RL}.
\newblock \emph{Advances in Neural Information Processing Systems}, 34, 2021.

\bibitem[Cui \& Du(2022)Cui and Du]{cui2022offline}
Qiwen Cui and Simon~S Du.
\newblock When is offline two-player zero-sum {M}arkov game solvable?
\newblock \emph{arXiv preprint arXiv:2201.03522}, 2022.

\bibitem[Duan et~al.(2020)Duan, Jia, and Wang]{duan2020minimax}
Yaqi Duan, Zeyu Jia, and Mengdi Wang.
\newblock Minimax-optimal off-policy evaluation with linear function approximation.
\newblock In \emph{Proc. International Conference on Machine Learning (ICML)}, pp.\  2701--2709. PMLR, 2020.

\bibitem[Farahmand et~al.(2010)Farahmand, Munos, and Szepesv{\'a}ri]{farahmand2010error}
Amir~Massoud Farahmand, R{\'e}mi Munos, and Csaba Szepesv{\'a}ri.
\newblock Error propagation for approximate policy and value iteration.
\newblock In \emph{Advances in Neural Information Processing Systems}, 2010.

\bibitem[Fu et~al.(2020)Fu, Kumar, Nachum, Tucker, and Levine]{fu2020d4rl}
Justin Fu, Aviral Kumar, Ofir Nachum, George Tucker, and Sergey Levine.
\newblock {D4RL: D}atasets for deep data-driven reinforcement learning.
\newblock \emph{arXiv preprint arXiv:2004.07219}, 2020.

\bibitem[Fujimoto et~al.(2019{\natexlab{a}})Fujimoto, Conti, Ghavamzadeh, and Pineau]{fujimoto2019benchmarking}
Scott Fujimoto, Edoardo Conti, Mohammad Ghavamzadeh, and Joelle Pineau.
\newblock Benchmarking batch deep reinforcement learning algorithms.
\newblock \emph{arXiv preprint arXiv:1910.01708}, 2019{\natexlab{a}}.

\bibitem[Fujimoto et~al.(2019{\natexlab{b}})Fujimoto, Meger, and Precup]{fujimoto2019off}
Scott Fujimoto, David Meger, and Doina Precup.
\newblock Off-policy deep reinforcement learning without exploration.
\newblock In \emph{International Conference on Machine Learning}, pp.\  2052--2062. PMLR, 2019{\natexlab{b}}.

\bibitem[Ghasemipour et~al.(2020)Ghasemipour, Schuurmans, and Gu]{ghasemipour2020emaq}
Seyed Kamyar~Seyed Ghasemipour, Dale Schuurmans, and Shixiang~Shane Gu.
\newblock {EMaQ}: {E}xpected-max {Q}-learning operator for simple yet effective offline and online {RL}.
\newblock \emph{arXiv preprint arXiv:2007.11091}, 2020.

\bibitem[Gulcehre et~al.(2020)Gulcehre, Wang, Novikov, Paine, Colmenarejo, Zolna, Agarwal, Merel, Mankowitz, Paduraru, et~al.]{gulcehre2020rl}
Caglar Gulcehre, Ziyu Wang, Alexander Novikov, Tom~Le Paine, Sergio~G{\'o}mez Colmenarejo, Konrad Zolna, Rishabh Agarwal, Josh Merel, Daniel Mankowitz, Cosmin Paduraru, et~al.
\newblock {RL} unplugged: {B}enchmarks for offline reinforcement learning.
\newblock \emph{arXiv preprint arXiv:2006.13888}, 2020.

\bibitem[Guo et~al.(2022)Guo, Yunfeng, and Geng]{guo2022model}
Kaiyang Guo, Shao Yunfeng, and Yanhui Geng.
\newblock Model-based offline reinforcement learning with pessimism-modulated dynamics belief.
\newblock \emph{Advances in Neural Information Processing Systems}, 35:\penalty0 449--461, 2022.

\bibitem[He et~al.(2021)He, Zhou, and Gu]{he2021nearly}
Jiafan He, Dongruo Zhou, and Quanquan Gu.
\newblock Nearly minimax optimal reinforcement learning for discounted {MDP}s.
\newblock \emph{Advances in Neural Information Processing Systems}, 34:\penalty0 22288--22300, 2021.

\bibitem[Hong et~al.(2023)Hong, Wu, and Xu]{hong2023pessimistic}
Mao Hong, Yue Wu, and Yanxun Xu.
\newblock Pessimistic model-based actor-critic for offline reinforcement learning: Theory and algorithms, 2023.
\newblock URL \url{https://openreview.net/forum?id=VQfSsOTrLIy}.

\bibitem[Iyengar(2005)]{iyengar2005robust}
Garud~N Iyengar.
\newblock Robust dynamic programming.
\newblock \emph{Mathematics of Operations Research}, 30\penalty0 (2):\penalty0 257--280, 2005.

\bibitem[Jaques et~al.(2019)Jaques, Ghandeharioun, Shen, Ferguson, Lapedriza, Jones, Gu, and Picard]{jaques2019way}
Natasha Jaques, Asma Ghandeharioun, Judy~Hanwen Shen, Craig Ferguson, Agata Lapedriza, Noah Jones, Shixiang Gu, and Rosalind Picard.
\newblock Way off-policy batch deep reinforcement learning of implicit human preferences in dialog.
\newblock \emph{arXiv preprint arXiv:1907.00456}, 2019.

\bibitem[Jiang(2019)]{jiang2019value}
Nan Jiang.
\newblock On value functions and the agent-environment boundary.
\newblock \emph{arXiv preprint arXiv:1905.13341}, 2019.

\bibitem[Jiang \& Huang(2020)Jiang and Huang]{jiang2020minimax}
Nan Jiang and Jiawei Huang.
\newblock Minimax value interval for off-policy evaluation and policy optimization.
\newblock \emph{Advances in Neural Information Processing Systems}, 33:\penalty0 2747--2758, 2020.

\bibitem[Jin et~al.(2021)Jin, Yang, and Wang]{jin2021pessimism}
Ying Jin, Zhuoran Yang, and Zhaoran Wang.
\newblock Is pessimism provably efficient for offline {RL}?
\newblock In \emph{International Conference on Machine Learning}, pp.\  5084--5096, 2021.

\bibitem[Kidambi et~al.(2020)Kidambi, Rajeswaran, Netrapalli, and Joachims]{kidambi2020morel}
Rahul Kidambi, Aravind Rajeswaran, Praneeth Netrapalli, and Thorsten Joachims.
\newblock {MOReL}: Model-based offline reinforcement learning.
\newblock \emph{Advances in neural information processing systems}, 33:\penalty0 21810--21823, 2020.

\bibitem[Kiran et~al.(2021)Kiran, Sobh, Talpaert, Mannion, Al~Sallab, Yogamani, and P{\'e}rez]{kiran2021deep}
B~Ravi Kiran, Ibrahim Sobh, Victor Talpaert, Patrick Mannion, Ahmad~A Al~Sallab, Senthil Yogamani, and Patrick P{\'e}rez.
\newblock Deep reinforcement learning for autonomous driving: A survey.
\newblock \emph{IEEE Transactions on Intelligent Transportation Systems}, 23\penalty0 (6):\penalty0 4909--4926, 2021.

\bibitem[Kumar et~al.(2019)Kumar, Fu, Tucker, and Levine]{kumar2019stabilizing}
Aviral Kumar, Justin Fu, George Tucker, and Sergey Levine.
\newblock Stabilizing off-policy {Q}-learning via bootstrapping error reduction.
\newblock \emph{arXiv preprint arXiv:1906.00949}, 2019.

\bibitem[Kumar et~al.(2020)Kumar, Zhou, Tucker, and Levine]{kumar2020conservative}
Aviral Kumar, Aurick Zhou, George Tucker, and Sergey Levine.
\newblock Conservative {Q}-learning for offline reinforcement learning.
\newblock \emph{arXiv preprint arXiv:2006.04779}, 2020.

\bibitem[Lange et~al.(2012)Lange, Gabel, and Riedmiller]{lange2012batch}
Sascha Lange, Thomas Gabel, and Martin Riedmiller.
\newblock Batch reinforcement learning.
\newblock In \emph{Reinforcement learning}, pp.\  45--73. Springer, 2012.

\bibitem[Laroche et~al.(2019)Laroche, Trichelair, and Des~Combes]{laroche2019safe}
Romain Laroche, Paul Trichelair, and Remi~Tachet Des~Combes.
\newblock Safe policy improvement with baseline bootstrapping.
\newblock In \emph{International Conference on Machine Learning}, pp.\  3652--3661. PMLR, 2019.

\bibitem[Levine et~al.(2020)Levine, Kumar, Tucker, and Fu]{levine2020offline}
Sergey Levine, Aviral Kumar, George Tucker, and Justin Fu.
\newblock Offline reinforcement learning: {T}utorial, review, and perspectives on open problems.
\newblock \emph{arXiv preprint arXiv:2005.01643}, 2020.

\bibitem[Li et~al.(2022)Li, Shi, Chen, Chi, and Wei]{li2022settling}
Gen Li, Laixi Shi, Yuxin Chen, Yuejie Chi, and Yuting Wei.
\newblock Settling the sample complexity of model-based offline reinforcement learning.
\newblock \emph{in preparation}, 2022.

\bibitem[Liao et~al.(2020)Liao, Qi, and Murphy]{liao2020batch}
Peng Liao, Zhengling Qi, and Susan Murphy.
\newblock Batch policy learning in average reward {M}arkov decision processes.
\newblock \emph{arXiv preprint arXiv:2007.11771}, 2020.

\bibitem[Liu et~al.(2019)Liu, Cai, Yang, and Wang]{liu2019neural}
Boyi Liu, Qi~Cai, Zhuoran Yang, and Zhaoran Wang.
\newblock Neural trust region/proximal policy optimization attains globally optimal policy.
\newblock In \emph{Proc. Advances in Neural Information Processing Systems (NeurIPS)}, 2019.

\bibitem[Liu et~al.(2020)Liu, Swaminathan, Agarwal, and Brunskill]{liu2020provably}
Yao Liu, Adith Swaminathan, Alekh Agarwal, and Emma Brunskill.
\newblock Provably good batch reinforcement learning without great exploration.
\newblock \emph{arXiv preprint arXiv:2007.08202}, 2020.

\bibitem[Mnih et~al.(2015)Mnih, Kavukcuoglu, Silver, Rusu, Veness, Bellemare, Graves, Riedmiller, Fidjeland, Ostrovski, et~al.]{mnih2015human}
Volodymyr Mnih, Koray Kavukcuoglu, David Silver, Andrei~A Rusu, Joel Veness, Marc~G Bellemare, Alex Graves, Martin Riedmiller, Andreas~K Fidjeland, Georg Ostrovski, et~al.
\newblock Human-level control through deep reinforcement learning.
\newblock \emph{nature}, 518\penalty0 (7540):\penalty0 529--533, 2015.

\bibitem[Munos \& Szepesvari(2008)Munos and Szepesvari]{Munos2008}
R.~Munos and C.~Szepesvari.
\newblock Finite-time bounds for fitted value iteration.
\newblock \emph{Journal of Machine Learning Research}, 9:\penalty0 815--857, May 2008.

\bibitem[Munos(2007)]{munos2007performance}
R{\'e}mi Munos.
\newblock Performance bounds in $\ell_p$-norm for approximate value iteration.
\newblock \emph{SIAM journal on control and optimization}, 46\penalty0 (2):\penalty0 541--561, 2007.

\bibitem[Nachum et~al.(2019)Nachum, Dai, Kostrikov, Chow, Li, and Schuurmans]{nachum2019algaedice}
Ofir Nachum, Bo~Dai, Ilya Kostrikov, Yinlam Chow, Lihong Li, and Dale Schuurmans.
\newblock {AlgaeDICE: P}olicy gradient from arbitrary experience.
\newblock \emph{arXiv preprint arXiv:1912.02074}, 2019.

\bibitem[Nilim \& El~Ghaoui(2004)Nilim and El~Ghaoui]{nilim2004robustness}
Arnab Nilim and Laurent El~Ghaoui.
\newblock Robustness in {{M}arkov} decision problems with uncertain transition matrices.
\newblock In \emph{Proc. Advances in Neural Information Processing Systems (NIPS)}, pp.\  839--846, 2004.

\bibitem[Nishiyama \& Sason(2020)Nishiyama and Sason]{nishiyama2020relations}
Tomohiro Nishiyama and Igal Sason.
\newblock On relations between the relative entropy and $\chi$ 2-divergence, generalizations and applications.
\newblock \emph{Entropy}, 22\penalty0 (5):\penalty0 563, 2020.

\bibitem[Panaganti \& Kalathil(2022)Panaganti and Kalathil]{panaganti2022sample}
Kishan Panaganti and Dileep Kalathil.
\newblock Sample complexity of robust reinforcement learning with a generative model.
\newblock In \emph{International Conference on Artificial Intelligence and Statistics}, pp.\  9582--9602. PMLR, 2022.

\bibitem[Panaganti et~al.(2022)Panaganti, Xu, Kalathil, and Ghavamzadeh]{panaganti2022robust}
Kishan Panaganti, Zaiyan Xu, Dileep Kalathil, and Mohammad Ghavamzadeh.
\newblock Robust reinforcement learning using offline data.
\newblock \emph{arXiv preprint arXiv:2208.05129}, 2022.

\bibitem[Panaganti et~al.(2023)Panaganti, Zaiyan, Dileep, and Mohammad]{panaganti2023bridging}
Kishan Panaganti, Xu~Zaiyan, Kalathil Dileep, and Ghavamzadeh Mohammad.
\newblock Bridging distributionally robust learning and offline rl: An approach to mitigate distribution shift and partial data coverage.
\newblock \emph{arXiv preprint arXiv:2310.18434}, 2023.

\bibitem[Peng et~al.(2019)Peng, Kumar, Zhang, and Levine]{peng2019advantage}
Xue~Bin Peng, Aviral Kumar, Grace Zhang, and Sergey Levine.
\newblock Advantage-weighted regression: {S}imple and scalable off-policy reinforcement learning.
\newblock \emph{arXiv preprint arXiv:1910.00177}, 2019.

\bibitem[Rashidinejad et~al.(2021)Rashidinejad, Zhu, Ma, Jiao, and Russell]{rashidinejad2021bridging}
Paria Rashidinejad, Banghua Zhu, Cong Ma, Jiantao Jiao, and Stuart Russell.
\newblock Bridging offline reinforcement learning and imitation learning: A tale of pessimism.
\newblock In \emph{Proc. Advances in Neural Information Processing Systems (NeurIPS)}, volume~34, pp.\  11702--11716, 2021.

\bibitem[Rashidinejad et~al.(2022)Rashidinejad, Zhu, Yang, Russell, and Jiao]{rashidinejad2022optimal}
Paria Rashidinejad, Hanlin Zhu, Kunhe Yang, Stuart Russell, and Jiantao Jiao.
\newblock Optimal conservative offline rl with general function approximation via augmented lagrangian.
\newblock \emph{arXiv preprint arXiv:2211.00716}, 2022.

\bibitem[Rigter et~al.(2022)Rigter, Lacerda, and Hawes]{rigter2022rambo}
Marc Rigter, Bruno Lacerda, and Nick Hawes.
\newblock Rambo-rl: Robust adversarial model-based offline reinforcement learning.
\newblock \emph{arXiv preprint arXiv:2204.12581}, 2022.

\bibitem[Ross \& Bagnell(2012)Ross and Bagnell]{ross2012agnostic}
Stephane Ross and J~Andrew Bagnell.
\newblock Agnostic system identification for model-based reinforcement learning.
\newblock \emph{arXiv preprint arXiv:1203.1007}, 2012.

\bibitem[Satia \& Lave~Jr(1973)Satia and Lave~Jr]{satia1973markovian}
Jay~K Satia and Roy~E Lave~Jr.
\newblock {M}arkovian decision processes with uncertain transition probabilities.
\newblock \emph{Operations Research}, 21\penalty0 (3):\penalty0 728--740, 1973.

\bibitem[Scherrer(2014)]{scherrer2014approximate}
Bruno Scherrer.
\newblock Approximate policy iteration schemes: {A} comparison.
\newblock In \emph{Proc. International Conference on Machine Learning (ICML)}, pp.\  1314--1322, 2014.

\bibitem[Shi \& Chi(2022)Shi and Chi]{shi2022distributionally}
Laixi Shi and Yuejie Chi.
\newblock Distributionally robust model-based offline reinforcement learning with near-optimal sample complexity.
\newblock \emph{arXiv preprint arXiv:2208.05767}, 2022.

\bibitem[Shi et~al.(2022)Shi, Li, Wei, Chen, and Chi]{shi2022pessimistic}
Laixi Shi, Gen Li, Yuting Wei, Yuxin Chen, and Yuejie Chi.
\newblock Pessimistic {Q}-learning for offline reinforcement learning: Towards optimal sample complexity.
\newblock \emph{International Conference on Machine Learning}, pp.\  19967--20025, 2022.

\bibitem[Shi et~al.(2023)Shi, Li, Wei, Chen, Geist, and Chi]{shi2023curious}
Laixi Shi, Gen Li, Yuting Wei, Yuxin Chen, Matthieu Geist, and Yuejie Chi.
\newblock The curious price of distributional robustness in reinforcement learning with a generative model.
\newblock \emph{arXiv preprint arXiv:2305.16589}, 2023.

\bibitem[Siegel et~al.(2020)Siegel, Springenberg, Berkenkamp, Abdolmaleki, Neunert, Lampe, Hafner, and Riedmiller]{siegel2020keep}
Noah~Y Siegel, Jost~Tobias Springenberg, Felix Berkenkamp, Abbas Abdolmaleki, Michael Neunert, Thomas Lampe, Roland Hafner, and Martin Riedmiller.
\newblock Keep doing what worked: {B}ehavioral modelling priors for offline reinforcement learning.
\newblock \emph{arXiv preprint arXiv:2002.08396}, 2020.

\bibitem[Silver et~al.(2016)Silver, Huang, Maddison, Guez, Sifre, Van Den~Driessche, Schrittwieser, Antonoglou, Panneershelvam, Lanctot, et~al.]{silver2016mastering}
David Silver, Aja Huang, Chris~J Maddison, Arthur Guez, Laurent Sifre, George Van Den~Driessche, Julian Schrittwieser, Ioannis Antonoglou, Veda Panneershelvam, Marc Lanctot, et~al.
\newblock Mastering the game of {Go} with deep neural networks and tree search.
\newblock \emph{nature}, 529\penalty0 (7587):\penalty0 484--489, 2016.

\bibitem[Uehara \& Sun(2021)Uehara and Sun]{uehara2021pessimistic}
Masatoshi Uehara and Wen Sun.
\newblock Pessimistic model-based offline reinforcement learning under partial coverage.
\newblock \emph{arXiv preprint arXiv:2107.06226}, 2021.

\bibitem[Uehara et~al.(2020)Uehara, Huang, and Jiang]{uehara2020minimax}
Masatoshi Uehara, Jiawei Huang, and Nan Jiang.
\newblock Minimax weight and {Q}-function learning for off-policy evaluation.
\newblock In \emph{Proc. International Conference on Machine Learning (ICML)}, pp.\  9659--9668. PMLR, 2020.

\bibitem[Vershynin(2018)]{vershynin2018high}
Roman Vershynin.
\newblock \emph{High-dimensional probability: An introduction with applications in data science}, volume~47.
\newblock Cambridge university press, 2018.

\bibitem[Wang et~al.(2019)Wang, Cai, Yang, and Wang]{wang2019neural}
Lingxiao Wang, Qi~Cai, Zhuoran Yang, and Zhaoran Wang.
\newblock Neural policy gradient methods: {G}lobal optimality and rates of convergence.
\newblock In \emph{International Conference on Learning Representations}, 2019.

\bibitem[Wang et~al.(2020)Wang, Novikov, Zolna, Merel, Springenberg, Reed, Shahriari, Siegel, Gulcehre, Heess, et~al.]{wang2020critic}
Ziyu Wang, Alexander Novikov, Konrad Zolna, Josh~S Merel, Jost~Tobias Springenberg, Scott~E Reed, Bobak Shahriari, Noah Siegel, Caglar Gulcehre, Nicolas Heess, et~al.
\newblock Critic regularized regression.
\newblock \emph{Advances in Neural Information Processing Systems}, 33, 2020.

\bibitem[Weissman et~al.(2003)Weissman, Ordentlich, Seroussi, Verdu, and Weinberger]{weissman2003inequalities}
Tsachy Weissman, Erik Ordentlich, Gadiel Seroussi, Sergio Verdu, and Marcelo~J Weinberger.
\newblock Inequalities for the l1 deviation of the empirical distribution.
\newblock \emph{Hewlett-Packard Labs, Tech. Rep}, 2003.

\bibitem[Wiesemann et~al.(2013)Wiesemann, Kuhn, and Rustem]{wiesemann2013robust}
Wolfram Wiesemann, Daniel Kuhn, and Ber{\c{c}} Rustem.
\newblock Robust {M}arkov decision processes.
\newblock \emph{Mathematics of Operations Research}, 38\penalty0 (1):\penalty0 153--183, 2013.

\bibitem[Wu et~al.(2019)Wu, Tucker, and Nachum]{wu2019behavior}
Yifan Wu, George Tucker, and Ofir Nachum.
\newblock Behavior regularized offline reinforcement learning.
\newblock \emph{arXiv preprint arXiv:1911.11361}, 2019.

\bibitem[Xie \& Jiang(2020)Xie and Jiang]{xie2020batch}
Tengyang Xie and Nan Jiang.
\newblock Batch value-function approximation with only realizability.
\newblock \emph{arXiv preprint arXiv:2008.04990}, 2020.

\bibitem[Xie \& Jiang(2021)Xie and Jiang]{xie2021batch}
Tengyang Xie and Nan Jiang.
\newblock Batch value-function approximation with only realizability.
\newblock In \emph{International Conference on Machine Learning}, pp.\  11404--11413. PMLR, 2021.

\bibitem[Xie et~al.(2021{\natexlab{a}})Xie, Cheng, Jiang, Mineiro, and Agarwal]{xie2021bellman}
Tengyang Xie, Ching-An Cheng, Nan Jiang, Paul Mineiro, and Alekh Agarwal.
\newblock Bellman-consistent pessimism for offline reinforcement learning.
\newblock \emph{arXiv preprint arXiv:2106.06926}, 2021{\natexlab{a}}.

\bibitem[Xie et~al.(2021{\natexlab{b}})Xie, Jiang, Wang, Xiong, and Bai]{xie2021policy}
Tengyang Xie, Nan Jiang, Huan Wang, Caiming Xiong, and Yu~Bai.
\newblock Policy finetuning: Bridging sample-efficient offline and online reinforcement learning.
\newblock \emph{Advances in neural information processing systems}, 34:\penalty0 27395--27407, 2021{\natexlab{b}}.

\bibitem[Xu et~al.(2023)Xu, Panaganti, and Kalathil]{xu2023improved}
Zaiyan Xu, Kishan Panaganti, and Dileep Kalathil.
\newblock Improved sample complexity bounds for distributionally robust reinforcement learning.
\newblock In \emph{International Conference on Artificial Intelligence and Statistics}, pp.\  9728--9754. PMLR, 2023.

\bibitem[Yan et~al.(2022)Yan, Li, Chen, and Fan]{yan2022efficacy}
Yuling Yan, Gen Li, Yuxin Chen, and Jianqing Fan.
\newblock The efficacy of pessimism in asynchronous {Q}-learning.
\newblock \emph{arXiv preprint arXiv:2203.07368}, 2022.

\bibitem[Yang et~al.(2021)Yang, Zhang, and Zhang]{yang2021towards}
Wenhao Yang, Liangyu Zhang, and Zhihua Zhang.
\newblock Towards theoretical understandings of robust {M}arkov decision processes: Sample complexity and asymptotics.
\newblock \emph{arXiv preprint arXiv:2105.03863}, 2021.

\bibitem[Yin \& Wang(2021)Yin and Wang]{yin2021towards}
Ming Yin and Yu-Xiang Wang.
\newblock Towards instance-optimal offline reinforcement learning with pessimism.
\newblock \emph{Advances in neural information processing systems}, 34, 2021.

\bibitem[Yin et~al.(2021)Yin, Bai, and Wang]{yin2021near}
Ming Yin, Yu~Bai, and Yu-Xiang Wang.
\newblock Near-optimal provable uniform convergence in offline policy evaluation for reinforcement learning.
\newblock In \emph{International Conference on Artificial Intelligence and Statistics}, pp.\  1567--1575. PMLR, 2021.

\bibitem[Yu et~al.(2021{\natexlab{a}})Yu, Liu, Nemati, and Yin]{yu2021reinforcement}
Chao Yu, Jiming Liu, Shamim Nemati, and Guosheng Yin.
\newblock Reinforcement learning in healthcare: A survey.
\newblock \emph{ACM Computing Surveys (CSUR)}, 55\penalty0 (1):\penalty0 1--36, 2021{\natexlab{a}}.

\bibitem[Yu et~al.(2020)Yu, Thomas, Yu, Ermon, Zou, Levine, Finn, and Ma]{yu2020mopo}
Tianhe Yu, Garrett Thomas, Lantao Yu, Stefano Ermon, James~Y Zou, Sergey Levine, Chelsea Finn, and Tengyu Ma.
\newblock {MOPO}: Model-based offline policy optimization.
\newblock \emph{Advances in Neural Information Processing Systems}, 33:\penalty0 14129--14142, 2020.

\bibitem[Yu et~al.(2021{\natexlab{b}})Yu, Kumar, Rafailov, Rajeswaran, Levine, and Finn]{yu2021combo}
Tianhe Yu, Aviral Kumar, Rafael Rafailov, Aravind Rajeswaran, Sergey Levine, and Chelsea Finn.
\newblock {COMBO: C}onservative offline model-based policy optimization.
\newblock \emph{arXiv preprint arXiv:2102.08363}, 2021{\natexlab{b}}.

\bibitem[Zanette et~al.(2021)Zanette, Wainwright, and Brunskill]{zanette2021provable}
Andrea Zanette, Martin~J Wainwright, and Emma Brunskill.
\newblock Provable benefits of actor-critic methods for offline reinforcement learning.
\newblock \emph{Advances in neural information processing systems}, 34, 2021.

\bibitem[Zhang et~al.(2023)Zhang, Lyu, Ma, Yan, Yang, Wan, and Li]{zhang2023uncertainty}
Junjie Zhang, Jiafei Lyu, Xiaoteng Ma, Jiangpeng Yan, Jun Yang, Le~Wan, and Xiu Li.
\newblock Uncertainty-driven trajectory truncation for model-based offline reinforcement learning.
\newblock \emph{arXiv preprint arXiv:2304.04660}, 2023.

\bibitem[Zhang et~al.(2020{\natexlab{a}})Zhang, Koppel, Bedi, Szepesvari, and Wang]{zhang2020variational}
Junyu Zhang, Alec Koppel, Amrit~Singh Bedi, Csaba Szepesvari, and Mengdi Wang.
\newblock Variational policy gradient method for reinforcement learning with general utilities.
\newblock In \emph{Proc. Advances in Neural Information Processing Systems (NeurIPS)}, volume~33, pp.\  4572--4583, 2020{\natexlab{a}}.

\bibitem[Zhang et~al.(2020{\natexlab{b}})Zhang, Liu, and Whiteson]{zhang2020gradientdice}
Shantong Zhang, Bo~Liu, and Shimon Whiteson.
\newblock {GradientDICE: R}ethinking generalized offline estimation of stationary values.
\newblock \emph{arXiv preprint arXiv:2001.11113}, 2020{\natexlab{b}}.

\bibitem[Zhong et~al.(2022)Zhong, Xiong, Tan, Wang, Zhang, Wang, and Yang]{zhong2022pessimistic}
Han Zhong, Wei Xiong, Jiyuan Tan, Liwei Wang, Tong Zhang, Zhaoran Wang, and Zhuoran Yang.
\newblock Pessimistic minimax value iteration: Provably efficient equilibrium learning from offline datasets.
\newblock \emph{arXiv preprint arXiv:2202.07511}, 2022.

\end{thebibliography}
\bibliographystyle{tmlr}

\newpage
\appendix 
\section{Notations}
We first introduce some notations that are used in our proofs. 
We denote the numbers of states and actions by $S,A$, i.e., $|\mcs|=S$, $|\mca|=A$.
For a transition kernel $\kp$ and a policy $\pi$, $\kp^\pi$ denotes the transition matrix induced by them, i.e., $\kp^\pi(s)=\sum_a \pi(a|s)\kp^{a}_s\in \Delta(\mcs)$.

For any vector $V\in\mathbb{R}^S$, $V\circ V \in\mathbb{R}^S$ denotes the entry-wise multiplication, i.e., $V\circ V (s)=V(s)*V(s)$. For a distribution $q\in\Delta(\mcs)$, it is straightforward to verify that the variance of $V$ w.r.t. $q$ can be rewritten as $\var_q(V)=q(V\circ V)-(qV)^2$.

\section{Proofs of \Cref{sec:hoff}}
{In this section, we present our proofs of the results under the distribution-based construction.}

{We first note that \Cref{lemma:1} can be directly obtained from the existing results of concentration inequality, e.g., Theorem 1 from \citep{canonne2020short} or Theorem 2.2 from \citep{weissman2003inequalities}. Hence the proof is omitted here. }

\begin{theorem}(Restatement of  \Cref{thm:1})
With probability at least $1-2\delta$, it holds that
\begin{align} 
    V^{\pi^*}_\kp(\rho)-V^{\pi_r}_\kp(\rho)  \leq \frac{2}{(1-\gamma)^2}\frac{8SC^{\pi^*}\log\frac{NS}{\delta}}{N}+\frac{2}{(1-\gamma)^2}\sqrt{\frac{24S^2C^{\pi^*}\log\frac{SA}{\delta}}{N}},
\end{align}
To obtain an $\epsilon$-optimal policy, a dataset of size
\begin{align}
    N=\mathcal{O}\left(\frac{SC^{\pi^*}}{(1-\gamma)^4\epsilon^2} \right)
\end{align}
is required. 
\end{theorem}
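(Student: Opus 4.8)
The plan is to bound the gap through the two-term decomposition already highlighted in \Cref{sec:ber}, namely $V^{\pi^*}_\kp(\rho)-V^{\pi_r}_\kp(\rho)=\Delta_1+\Delta_2$ with $\Delta_1=V^{\pi^*}_\kp(\rho)-V^{\pi_r}_{\hat{\cp}}(\rho)$ and $\Delta_2=V^{\pi_r}_{\hat{\cp}}(\rho)-V^{\pi_r}_\kp(\rho)$. On the high-probability event of \Cref{lemma:1} that $\kp^a_s\in\hat{\cp}^a_s$ for all $(s,a)$, and since $\hat{r}\le r$ pointwise (equality on seen pairs, $\hat r=0$ on unseen), the robust value computed with $\hat r$ is dominated by the value under the true kernel and reward, so $\Delta_2\le 0$. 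Moreover $\pi_r$ is the optimal robust policy of $\hat{\mathcal M}$, hence $V^{\pi_r}_{\hat{\cp}}(\rho)\ge V^{\pi^*}_{\hat{\cp}}(\rho)$, which gives $\Delta_1\le V^{\pi^*}_\kp(\rho)-V^{\pi^*}_{\hat{\cp}}(\rho)$. It therefore suffices to bound the gap between the true value and the robust value of the \emph{fixed} comparator policy $\pi^*$.

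Next I would telescope this fixed-policy gap through the Bellman equations. Writing $a^*=\pi^*(s)$ and $\delta(s)=V^{\pi^*}_\kp(s)-V^{\pi^*}_{\hat{\cp}}(s)$, subtracting the robust Bellman equation from the true one yields $\delta(s)=[r(s,a^*)-\hat{r}(s,a^*)]+\gamma\big(\kp^{a^*}_s V^{\pi^*}_\kp-\sigma_{\hat{\cp}^{a^*}_s}(V^{\pi^*}_{\hat{\cp}})\big)$. Adding and subtracting $\kp^{a^*}_s V^{\pi^*}_{\hat{\cp}}$ splits the bracket into a propagation term $\kp^{a^*}_s\delta$ and a one-step robustness penalty $\kp^{a^*}_s V^{\pi^*}_{\hat{\cp}}-\sigma_{\hat{\cp}^{a^*}_s}(V^{\pi^*}_{\hat{\cp}})$. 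Because $\kp^{a^*}_s$ and the worst-case kernel both lie in the $\ell_1$-ball of radius $R^{a^*}_s$, their difference has $\ell_1$-norm at most $2R^{a^*}_s$; centering $V^{\pi^*}_{\hat{\cp}}$ and applying Hölder then bounds this penalty by $R^{a^*}_s\,\text{Span}(V^{\pi^*}_{\hat{\cp}})\le R^{a^*}_s/(1-\gamma)$, using $0\le V^{\pi^*}_{\hat{\cp}}\le 1/(1-\gamma)$. Unrolling the resulting recursion against $(I-\gamma\kp^{\pi^*})^{-1}$ and using $\rho^\top(I-\gamma\kp^{\pi^*})^{-1}=d^{\pi^*}/(1-\gamma)$ gives $V^{\pi^*}_\kp(\rho)-V^{\pi^*}_{\hat{\cp}}(\rho)\le\frac{1}{1-\gamma}\sum_{s,a}d^{\pi^*}(s,a)\big[(r-\hat r)(s,a)+\tfrac{\gamma R^a_s}{1-\gamma}\big]$.

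The final and most delicate step is to control this occupancy-weighted sum, where $R^a_s\approx\sqrt{\log(SA/\delta)/N(s,a)}$ depends on the random counts $N(s,a)$. I would first establish, by a multiplicative Chernoff bound and a union bound, a good event on which $N(s,a)\gtrsim N\mu(s,a)$ for every pair with $N\mu(s,a)$ above a logarithmic threshold, so that on the well-visited pairs $R^a_s\lesssim\sqrt{\log(SA/\delta)/(N\mu(s,a))}$. I would then write $d^{\pi^*}(s,a)/\sqrt{\mu(s,a)}\le d^{\pi^*}(s,a)\sqrt{C^{\pi^*}/\min\{d^{\pi^*}(s,a),1/S\}}$ via \Cref{def:C*} and split the sum according to whether $d^{\pi^*}(s,a)\le 1/S$. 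Using $\sum_s d^{\pi^*}(s)=1$, the fact that a deterministic $\pi^*$ charges at most $S$ pairs, and Cauchy--Schwarz, each piece is $O(\sqrt{SC^{\pi^*}})$, so $\sum_{s,a}d^{\pi^*}(s,a)R^a_s\lesssim\sqrt{SC^{\pi^*}\log(SA/\delta)/N}$, producing the $(1-\gamma)^{-2}\sqrt{SC^{\pi^*}\log(SA/\delta)/N}$ term. The rarely-visited pairs (including those with $N(s,a)=0$, where the reward error is at most $1$ and $R^a_s\le 2$) are handled directly: the clipping forces $\min\{d^{\pi^*}(s,a),1/S\}\le C^{\pi^*}\mu(s,a)=O(C^{\pi^*}\log(SA/\delta)/N)$ there, and summing over the at most $S$ such pairs yields the lower-order $(1-\gamma)^{-2}SC^{\pi^*}\log(NS/\delta)/N$ term. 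Setting the dominant $1/\sqrt N$ term equal to $\epsilon$ gives the stated sample complexity.

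The main obstacle is precisely this last sum: since $R^a_s$ is a function of the random visitation counts and we insist on no burn-in, I cannot simply substitute $N(s,a)$ by its mean everywhere. The crux is to combine a high-probability multiplicative lower bound on $N(s,a)$ for sufficiently-visited pairs with a separate, direct accounting of the rarely-visited and unseen pairs, and to convert $\sum_{s,a}d^{\pi^*}(s,a)/\sqrt{\mu(s,a)}$ into $\sqrt{SC^{\pi^*}}$ through the clipping of \Cref{def:C*} rather than the unbounded unclipped ratio. Assembling the two error regimes into the stated two-term bound, with the correct powers of $(1-\gamma)$ and the logarithmic factors, is where the real work lies.
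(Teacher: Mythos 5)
Your proposal is correct and follows essentially the same route as the paper's own proof: pessimism via \Cref{lemma:1} makes $\Delta_2\le 0$, the remaining gap is telescoped against the occupancy measure into a penalty term of the form $\frac{1}{1-\gamma}\langle d^{\pi^*},\cdot\rangle$, and that sum is controlled exactly as in the paper by splitting states into well- and rarely-visited sets, a multiplicative Chernoff lower bound on $N(s,\pi^*(s))$, the clipped concentrability coefficient, and Cauchy--Schwarz. The only cosmetic differences are that you reduce once (at $\rho$) to the fixed-policy gap $V^{\pi^*}_\kp-V^{\pi^*}_{\hat{\cp}}$ whereas the paper recurses directly on $V^{\pi^*}_\kp-V^{\pi_r}_{\hat{\cp}}$ by invoking robust Bellman optimality at every state, and that you leave implicit the reduction ``either $N\gtrsim SC^{\pi^*}\log(NS/\delta)/(1-\gamma)$ or the trivial bound $1/(1-\gamma)$ already suffices,'' which the paper states up front so that on rarely-visited states the clipped minimum is attained by $d^{\pi^*}(s)$ rather than $1/S$.
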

\begin{proof}
In the following proof, we only focus on the case when
\begin{align}\label{eq:N large1}
    N>\frac{8SC^{\pi^*}\log\frac{NS}{\delta}}{1-\gamma};
\end{align}
Otherwise, \cref{eq:bound1} follows directly from the trivial bound $V^{\pi^*}_\kp(\rho)-V^{\pi_r}_\kp(\rho)\leq \frac{1}{1-\gamma}$.

According to \Cref{lemma:1}, with probability at least $1-\delta$, $\kp\in\hat{\cp}$. Moreover, due to the fact $r(s,a)\geq \hat{r}(s,a)$, hence
\begin{align}
    V^\pi_\kp(s)\geq V^\pi_{\hat{r},\kp}(s) \geq V^\pi_{\hat{r},\hat{\cp}}(s)=V^\pi_{\hat{\cp}}(s) 
\end{align}
for any $\pi$ and $s\in\mcs$, where $V^\pi_{\hat{r},\kp}$ denotes the value function w.r.t. $\kp$ and reward $\hat{r}$. Thus  $V^\pi_\kp\geq V^\pi_{\hat{\cp}}$ for any policy $\pi$.

Therefore,
\begin{align}\label{eq:3}
    &V^{\pi^*}_\kp(s)-V^{\pi_r}_\kp(s)\nn\\
    &=V^{\pi^*}_\kp(s)-V^{\pi_r}_{\hat{\cp}}(s)+V^{\pi_r}_{\hat{\cp}}(s)-V^{\pi_r}_\kp(s)\nn\\
    &\leq  V^{\pi^*}_\kp(s)-V^{\pi_r}_{\hat{\cp}}(s)\nn\\
    &= r(s,\pi^*(s))+\gamma \kp^{\pi^*(s)}_s V^{\pi^*}_\kp- V^{\pi_r}_{\hat{\cp}}(s)\nn\\
    &\overset{(a)}{\leq }r(s,\pi^*(s))-\hat{r}(s,\pi^*(s))+\gamma \kp^{\pi^*(s)}_s V^{\pi^*}_\kp-\gamma\sigma_{\hat{\cp}^{\pi^*(s)}_{s}}(V^{\pi_r}_{\hat{\cp}})\nn\\
    &=r(s,\pi^*(s))-\hat{r}(s,\pi^*(s))+\gamma \kp^{\pi^*(s)}_s V^{\pi^*}_\kp-\gamma \kp^{\pi^*(s)}_s V^{\pi_r}_{\hat{\cp}}+\gamma \kp^{\pi^*(s)}_s V^{\pi_r}_{\hat{\cp}} -\gamma\sigma_{\hat{\cp}^{\pi^*(s)}_{s}}(V^{\pi_r}_{\hat{\cp}})\nn\\
    &=r(s,\pi^*(s))-\hat{r}(s,\pi^*(s))+\gamma \kp^{\pi^*(s)}_s (V^{\pi^*}_\kp-V^{\pi_r}_{\hat{\cp}})+ \gamma (\kp^{\pi^*(s)}_s V^{\pi_r}_{\hat{\cp}} -\sigma_{\hat{\cp}^{\pi^*(s)}_{s}}(V^{\pi_r}_{\hat{\cp}}))\nn\\
    &\triangleq \gamma \kp^{\pi^*(s)}_s (V^{\pi^*}_\kp-V^{\pi_r}_{\hat{\cp}})+  b^*(V^{\pi_r}_{\hat{\cp}}),
\end{align}
where $(a)$ is from $V^{\pi_r}_{\hat{\cp}}(s)=\max_a Q^{\pi_r}_{\hat{\cp}}(s,a)\geq Q^{\pi_r}_{\hat{\cp}}(s,\pi^*(s))=\hat{r}(s,\pi^*(s))+\gamma\sigma_{\hat{\cp}^{\pi^*(s)}_s}(V^{\pi_r}_{\hat{\cp}})$, and $b^*(V)(s)\triangleq r(s,\pi^*(s))-\hat{r}(s,\pi^*(s))+ \gamma\kp^{\pi^*(s)}_s V -\gamma\sigma_{\hat{\cp}^{\pi^*(s)}_{s}}(V)$.

Recursively applying this inequality further implies 
\begin{align}
    V^{\pi^*}_\kp(\rho)-V^{\pi_r}_\kp(\rho)\leq \frac{1}{1-\gamma}\left\langle d^{\pi^*}, b^*(V^{\pi_r}_{\hat{\cp}}) \right\rangle,
\end{align}
where $d^{\pi^*}(\cdot)=(1-\gamma)\sum^\infty_{t=0}\gamma^t \mathbb{P}(S_t=\cdot|S_0\sim\rho,\pi^*,\kp)$ is the discounted visitation distribution induced by $\pi^*$ and $\kp$.

To bound the term in \cref{eq:3}, we introduce the following notations. 
\begin{align}
    \mcs_{s}&\triangleq \left\{s: N\mu(s,\pi^*(s))\leq 8\log\frac{NS}{\delta} \right\},\\
    \mcs_{l}&\triangleq \left\{s: N\mu(s,\pi^*(s))> 8\log\frac{NS}{\delta} \right\}.
\end{align}
{The two sets divide the state space into two sub-spaces. $\mcs_{s}$ contains the states that have less  probability in the behavior distribution $\mu$ on its corresponding optimal action, and are less-frequently covered by the dataset; And  $\mcs_{l}$ are the ones that have larger probability, and are more frequently covered. }

For $s\in\mcs_s$, from \cref{eq:N large1}, we have that
\begin{align}
    \min \left\{d^{\pi^*}(s), \frac{1}{S}\right\}\leq C^{\pi^*}\mu(s,\pi^*(s))\leq \frac{8C^{\pi^*}\log\frac{NS}{\delta}}{N}<\frac{1}{S},
\end{align}
which further implies that $d^{\pi^*}(s)\leq \frac{8C^{\pi^*}\log\frac{NS}{\delta}}{N}$. Hence
\begin{align}\label{eq:6}
    \sum_{s\in\mcs_s} d^{\pi^*}(s)b^*(V^{\pi_r}_{\hat{\cp}})(s)\leq \frac{1}{1-\gamma}\frac{8SC^{\pi^*}\log\frac{NS}{\delta}}{N}, 
\end{align}
which is due to $b^*(V)(s)\triangleq r(s,\pi^*(s))-\hat{r}(s,\pi^*(s))+ \gamma\kp^{\pi^*(s)}_s V -\gamma\sigma_{\hat{\cp}^{\pi^*(s)}_{s}}(V)\leq 1+ \frac{\gamma}{1-\gamma}=\frac{1}{1-\gamma}$.

We then consider $s\in\mcs_l$. From the definition, $N\mu(s,\pi^*(s))>8\log\frac{NS}{\delta}$. According to \Cref{lemma:2}, with probability $1-\delta$,  
\begin{align}\label{eq:7}
\max \{12N(s,\pi^*(s)),8\log\frac{NS}{\delta} \}\geq N\mu(s,\pi^*(s))>8\log\frac{NS}{\delta}, 
\end{align} 
hence $\max \{12N(s,\pi^*(s)),8\log\frac{NS}{\delta} \}=12N(s,\pi^*(s))$ and $N(s,\pi^*(s))\geq \frac{2}{3}\log\frac{NS}{\delta}>0.$ This hence implies that for any $s\in\mcs_l$, $N(s,\pi^*(s))>0$ and $\hat{r}(s,\pi^*(s))=r(s,\pi^*(s))$. Thus
\begin{align}\label{eq:9}
    |b^*(V^{\pi_r}_{\hat{\cp}})(s)|&=\gamma|\kp^{\pi^*(s)}_s V^{\pi_r}_{\hat{\cp}} -\sigma_{\hat{\cp}^{\pi^*(s)}_s}(V^{\pi_r}_{\hat{\cp}}) |\nn\\
    &\leq \|\kp^{\pi^*(s)}_{s}-\kq^{\pi^*(s)}_{s} \|_1\|V^{\pi_r}_{\hat{\cp}} \|_\infty\nn\\
    &\leq \frac{2}{1-\gamma} \min\left\{ 2, \sqrt{\frac{S\log\frac{SA}{\delta}}{2N(s,\pi^*(s))}}\right\}\nn\\
    &\leq \frac{1}{1-\gamma}  \sqrt{\frac{2S\log\frac{SA}{\delta}}{N(s,\pi^*(s))}}.
\end{align}
Moreover, from \cref{eq:7}, 
\begin{align}\label{eq:10}
\frac{1}{N(s,\pi^*(s))}\leq \frac{12}{N\mu(s,\pi^*(s))}\leq \frac{12C^{\pi^*}}{N\min\{d^{\pi^*}(s),\frac{1}{S}\}} \leq \frac{12C^{\pi^*}}{N}\left( \frac{1}{d^{\pi^*}(s)}+S\right). 
\end{align}
Combining with \cref{eq:9} further implies 
\begin{align}
    |b^*(V^{\pi_r}_{\hat{\cp}})(s)|&\leq \frac{1}{1-\gamma}\sqrt{2S\log\frac{SA}{\delta}}\sqrt{\frac{12C^{\pi^*}}{N}\left(\frac{1}{d^{\pi^*}(s)}+S\right)}\nn\\
    &\leq \frac{1}{1-\gamma}\sqrt{2S\log\frac{SA}{\delta}}\sqrt{\frac{12C^{\pi^*}}{N}\frac{1}{d^{\pi^*}(s)}}+\frac{1}{1-\gamma}\sqrt{2S\log\frac{SA}{\delta}}\sqrt{\frac{12C^{\pi^*}}{N}S}.
\end{align}
Thus
\begin{align}\label{eq:14}
    \sum_{s\in\mcs_l} d^{\pi^*}(s)b^*(V^{\pi_r}_{\hat{\cp}})(s)&\leq \frac{1}{1-\gamma}\sqrt{\frac{24SC^{\pi^*}\log\frac{SA}{\delta}}{N}}\sum_{s}\sqrt{d^{\pi^*}(s)}+ \frac{1}{1-\gamma}\sqrt{\frac{24S^2C^{\pi^*}\log\frac{SA}{\delta}}{N}}\nn\\
    &\leq  \frac{1}{1-\gamma}\sqrt{\frac{24S^2C^{\pi^*}\log\frac{2}{\delta}}{N}}+ \frac{1}{1-\gamma}\sqrt{\frac{24S^2C^{\pi^*}\log\frac{SA}{\delta}}{N}}\nn\\
    &=\frac{2}{1-\gamma}\sqrt{\frac{24S^2C^{\pi^*}\log\frac{SA}{\delta}}{N}}. 
\end{align}

Thus combining \cref{eq:6} and \cref{eq:14} implies 
\begin{align}
    &V^{\pi^*}_\kp(\rho)-V^{\pi_r}_\kp(\rho)\nn\\
    &\leq \frac{1}{1-\gamma}\left\langle d^{\pi^*}, b^*(V^{\pi_r}_{\hat{\cp}}) \right\rangle \nn\\
    &\leq \frac{2}{(1-\gamma)^2}\frac{8SC^{\pi^*}\log\frac{NS}{\delta}}{N}+\frac{1}{(1-\gamma)^2}\sqrt{\frac{24S^2C^{\pi^*}\log\frac{SA}{\delta}}{N}},
\end{align}
which completes the proof. 
\end{proof}

\section{Proofs of \Cref{sec:ber}}
In this section, we provide a refined analysis of the sub-optimality gap using Bernstein's Inequality.

\subsection{Existence of the Target Policy}
In this section, we first prove the claim we made after Alg \ref{alg2}, i.e., there exists an optimal robust policy $\pi$ to the constructed uncertainty set $\hat{\cp}$, such that $N(s,\pi(s))>0, \forall s\in\mcs$. 

\begin{lemma}\label{lemma:part 2}
Recall the set  
    $\mcs^0\triangleq \{s\in\mcs: N(s)=0 \}$. Then

(1). For any policy $\pi$ and $s\in\mcs^0$, $V^\pi_{\hat{\cp}}(s)=0$;

(2). There exists a deterministic robust optimal policy $\pi_r$, such that for any $s\notin \mcs^0$, $N(s,\pi_r(s))>0$.
\end{lemma}
\begin{proof}
\textbf{Proof of (1).} 

For any $s\in\mcs^0$, it holds that $N(s,a)=0$ for any $a\in\mca$. Hence $\hat{r}(s,a)=0$ and $\hat{\cp}^a_s=\Delta(\mcs)$. 

Then for any policy $\pi$ and $a\in\mca$, it holds that
\begin{align}
    Q^\pi_{\hat{\cp}}(s,a)=\hat{r}(s,a)+\gamma\sigma_{\hat{\cp}^a_s}(V^\pi_{\hat{\cp}})=\gamma\sigma_{\Delta(\mcs)}(V^\pi_{\hat{\cp}})\leq \gamma V^\pi_{\hat{\cp}}(s).
\end{align}
Thus 
\begin{align}
    V^\pi_{\hat{\cp}}(s)=\sum_a \pi(a|s)Q^\pi_{\hat{\cp}}(s,a) \leq \gamma V^\pi_{\hat{\cp}}(s),
\end{align}
which implies $V^\pi_{\hat{\cp}}(s)=0$ together with $V^\pi_{\hat{\cp}}\geq 0$.

\textbf{Proof of (2).} 

We prove Claim (2) by contradiction. Assume that for any optimal policy $\pi_r$, there exists $s\notin \mcs^0$ such that $N(s,\pi_r(s))=0$. We then consider a fixed pair ($\pi_r,s$). 

$N(s,\pi_r(s))=0$ further implies $\hat{r}(s,\pi_r(s))=0$, $\hat{\cp}^{\pi_r(s)}_s=\Delta(\mcs)$,  and 
\begin{align}\label{eq:98}
    V^{\pi_r}_{\hat{\cp}}(s)=\max_a Q^{\pi_r}_{\hat{\cp}}(s,a)= Q^{\pi_r}_{\hat{\cp}}(s,\pi_r(s))=\hat{r}(s,\pi_r(s))+\gamma\sigma_{\hat{\cp}^{\pi_r(s)}_s}(V^{\pi_r}_{\hat{\cp}})\leq \gamma V^{\pi_r}_{\hat{\cp}}(s),
\end{align}
where the last inequality is from $\hat{\cp}^{\pi_r(s)}_s=\Delta(\mcs)$, $\hat{r}(s,\pi_r(s))=0$, and $\sigma_{\hat{\cp}^{\pi_r(s)}_s}(V^{\pi_r}_{\hat{\cp}})\leq \textbf{1}_s V^{\pi_r}_{\hat{\cp}}=V^{\pi_r}_{\hat{\cp}}(s)$. 
This further implies that $V^{\pi_r}_{\hat{\cp}}(s)=0$ because $V^{\pi_r}_{\hat{\cp}}\geq 0$.

On the other hand, since $s\notin\mcs^0$, there exists another action $b\neq\pi_r(s)$ such that $N(s,b)>0$, and hence $\hat{r}(s,b)=r(s,b)$. There can only be two cases as follows.

(I). If $r(s,b)>0$, then
\begin{align}
    Q^{\pi_r}_{\hat{\cp}}(s,b)=\hat{r}(s,b)+\gamma\sigma_{\hat{\cp}^{b}_s}(V^{\pi_r}_{\hat{\cp}})> 0=Q^{\pi_r}_{\hat{\cp}}(s,\pi_r(s)),
\end{align}
 which is contradict to $V^{\pi_r}_{\hat{\cp}}(s)=\max_a Q^{\pi_r}_{\hat{\cp}}(s,a)= Q^{\pi_r}_{\hat{\cp}}(s,\pi_r(s))$. 

(II). If $r(s,b)=0$, \Cref{lemma:claim} then implies the modified policy $f^s_b(\pi_r)$ is also optimal, and satisfies $N(x,f^s_b(\pi_r)(x))=N(x,\pi_r(x))$ for any $x\neq s$, and $N(s,f^s_b(\pi_r)(s))>0$.

Then consider the modified policy ${f^s_b(\pi_r)}$. 

If there still exists $s'\notin \mcs^0$ such that $N(s',{f^s_b(\pi_r)}(s'))=0$, then similarly, 
there exists another action $b'\neq {f^s_b(\pi_r)}(s')$ such that $N(s',b')>0$. Then whether $r(s',b')>0$, which falls into Case (I) and leads to a contradiction, or applying \Cref{lemma:claim} again implies another optimal policy $f^{s'}_{b'}(f^s_b(\pi_r))$, such that $N(s,f^{s'}_{b'}(f^s_b(\pi_r))(x))=N(s,f^s_b(\pi_r)(x))>0$ for $x\notin \{s,s'\}$,  $N(s,f^{s'}_{b'}(f^s_b(\pi_r))(s))=N(s,f^s_b(\pi_r)(s))>0$ and $N(s',f^{s'}_{b'}(f^s_b(\pi_r))(s'))>0$. 

Repeating this procedure recursively further implies there exists an optimal policy $\pi$, such that $N(s,\pi(s))>0$ for any $s\notin\mcs^0$, which is a contraction to our assumption. 

Therefore it completes the proof.
\end{proof}

\begin{lemma}\label{lemma:claim}
For a deterministic robust optimal policy $\pi_r$, if there exists a state $s\notin\mcs^0$ and an action $b$ such that $N(s,\pi_r(s))=0$, $r(s,b)=0$ and $N(s,b)>0$, define a modified policy $f^s_b(\pi_r)$ as 
 \begin{align}
     {f^s_b(\pi_r)}(s)&=b,\\
     {f^s_b(\pi_r)}(x)&=\pi_r(x), \text{ for } x\neq s. 
 \end{align}
Then the modified policy $f^s_b(\pi_r)$ is also optimal. 
\end{lemma}
 \begin{proof}
Define a modified uncertainty set $\tilde{\cp^b_s}$ as
\begin{align}
    (\tilde{\cp^b_s})^b_s&=\Delta(\mcs),\\
    (\tilde{\cp^b_s})^a_x&=\hat{\cp}^a_x, \text{ for } (x,a)\neq (s,b). 
\end{align}
Then we have that $ \hat{\cp}^a_x\subset(\tilde{\cp^b_s})^a_x$, for any $(x,a)\in\mcs\times\mca$, which further implies that
\begin{align}\label{eq:>}
    V^{{f^s_b(\pi_r)}}_{\hat{\cp}} \geq V^{{f^s_b(\pi_r)}}_{\tilde{\cp^b_s}}.
\end{align}

Now we have that 
\begin{align}
    V^{{f^s_b(\pi_r)}}_{\tilde{\cp^b_s}}(s)=Q^{{f^s_b(\pi_r)}}_{\tilde{\cp^b_s}}(s,b)=r(s,b)+\gamma\sigma_{(\tilde{\cp^b_s})^b_s}(V^{{f^s_b(\pi_r)}}_{\tilde{\cp^b_s}})\leq \gamma V^{{f^s_b(\pi_r)}}_{\tilde{\cp^b_s}}(s),
\end{align}
which further implies $V^{{f^s_b(\pi_r)}}_{\tilde{\cp^b_s}}(s)=0$. Note that in \cref{eq:98}, we have shown $V^{\pi_r}_{\hat{\cp}}(s)=0$, hence $V^{{f^s_b(\pi_r)}}_{\tilde{\cp^b_s}}(s)=V^{\pi_r}_{\hat{\cp}}(s)=0$.

Now consider the two robust Bellman operator $\mathbf{T}_b^sV(x)=\sum_a {f^s_b(\pi_r)}(a|x) ( \hat{r}(x,a)+\gamma \sigma_{(\tilde{\cp^b_s})^a_x}(V))$ and $\mathbf{T}V(x)=\hat{r}(x,\pi_r(x))+\gamma \sigma_{\hat{\cp}^{\pi_r(x)}_x}(V)$. It is known that   $V^{{f^s_b(\pi_r)}}_{\tilde{\cp^b_s}}$ is the unique fixed point of the robust Bellman operator $\mathbf{T}_b^s$ and $V^{\pi_r}_{\hat{\cp}}$ is the unique fixed point of $\mathbf{T}$.

When $x\neq s$, 
\begin{align}\label{eq:x not s}
    \mathbf{T}_b^s V^{\pi_r}_{\hat{\cp}}(x)&=\sum_a {f^s_b(\pi_r)}(a|x) ( \hat{r}(x,a)+\gamma \sigma_{(\tilde{\cp^b_s})^a_x}(V^{\pi_r}_{\hat{\cp}}))\nn\\
    &\overset{(a)}{=}\sum_a \pi_r(a|x) ( \hat{r}(x,a)+\gamma \sigma_{(\tilde{\cp^b_s})^a_x}(V^{\pi_r}_{\hat{\cp}}))\nn\\
    &=\hat{r}(x,\pi_r(x))+\gamma \sigma_{(\tilde{\cp^b_s})^{\pi_r(x)}_x}(V^{\pi_r}_{\hat{\cp}})\nn\\
    &\overset{(b)}{=}\hat{r}(x,\pi_r(x))+\gamma \sigma_{\hat{\cp}^{\pi_r(x)}_x}(V^{\pi_r}_{\hat{\cp}})\nn\\
    &=\mathbf{T}V^{\pi_r}_{\hat{\cp}}(x)=V^{\pi_r}_{\hat{\cp}}(x),
\end{align}
where $(a)$ is from ${f^s_b(\pi_r)}(x)=\pi_r(x)$ when $x\neq s$, $(b)$ is from $(\tilde{\cp^b_s})^{\pi_r(x)}_x=\hat{\cp}^{\pi_r(x)}_x$.

  And for $s$, recall that $N(s,\pi_r(s))=0$, hence $\hat{r}(s,\pi_r(s))=0$ and $\hat{\cp}^{\pi_r(s)}_s=\Delta(\mcs)$. Thus 
\begin{align}\label{eq:s}
    \mathbf{T}_b^s V^{\pi_r}_{\hat{\cp}}(s)&= \hat{r}(s,b)+\gamma \sigma_{(\tilde{\cp^b_s})^b_s}(V^{\pi_r}_{\hat{\cp}})\nn\\
    &\overset{(a)}{=} \hat{r}(s,\pi_r(s))+\gamma \sigma_{\Delta(\mcs)}(V^{\pi_r}_{\hat{\cp}})\nn\\
    &\overset{(b)}{=} \hat{r}(s,\pi_r(s))+\gamma \sigma_{\hat{\cp}^{\pi_r(s)}_s}(V^{\pi_r}_{\hat{\cp}})\nn\\
    &=\mathbf{T}V^{\pi_r}_{\hat{\cp}}(s)\nn\\
    &=V^{\pi_r}_{\hat{\cp}}(s),
\end{align}
 where $(a)$ is from $(\tilde{\cp^b_s})^b_s=\Delta(\mcs)$ and $\hat{r}(s,b)=r(s,b)=0=\hat{r}(s,\pi_r(s))$, and $(b)$ follows from the fact $\hat{\cp}^{\pi_r(s)}_s=\Delta(\mcs)$.

Combining \cref{eq:x not s} and \cref{eq:s} further implies that $V^{\pi_r}_{\hat{\cp}}$ is also a fixed point of $\mathbf{T}_b^s$. Hence it must be identical to $V^{{f^s_b(\pi_r)}}_{\tilde{\cp^b_s}}$, i.e., $V^{{f^s_b(\pi_r)}}_{\tilde{\cp^b_s}}=V^{\pi_r}_{\hat{\cp}}$. 

Combine with \cref{eq:>}, we have 
\begin{align}
    V^{{f^s_b(\pi_r)}}_{\hat{\cp}} \geq V^{{f^s_b(\pi_r)}}_{\tilde{\cp^b_s}}=V^{\pi_r}_{\hat{\cp}},
\end{align}
which implies that ${f^s_b(\pi_r)}$ is also optimal with $N(s,{f^s_b(\pi_r)}(s))=N(s,b)>0$. And since $f^s_b(\pi_r)(x)=\pi_r(x)$ for $x\neq s$, then $N(x,{f^s_b(\pi_r)}(x))=N(x,\pi_r(x))$. This thus completes the proof. 
 \end{proof}

\begin{theorem}(Restatement of Thm \ref{thm:2})\label{thm:res}
Consider the robust MDP $\hat{\mathcal{M}}$, there exist an optimal robust policy $\pi_r$ and some universal constant $K$, such that if $N\geq \frac{1}{(1-\gamma)KSC^{\pi^*}\mu^2_{\min}}$, it holds that 
\begin{align}\label{eq:401}
        V^{\pi^*}_\kp(\rho)-V^{\pi_r}_\kp(\rho)\leq \sqrt{\frac{KSC^{\pi^*}\log\frac{4SAN}{(1-\gamma)\delta}}{(1-\gamma)^3N}}+\frac{384\log^2\frac{4SAN}{(1-\gamma)\delta}}{(1-\gamma)^2N\mu_{\min}},
\end{align}
with probability at least $1-4\delta$.
\end{theorem}
\begin{proof}
We first have that
    \begin{align}
        V^{\pi^*}_\kp-V^{\pi_r}_\kp&=\underbrace{V^{\pi^*}_\kp-V^{\pi_r}_{\hat{\cp}}}_{\Delta_1}+\underbrace{V^{\pi_r}_{\hat{\cp}}-V^{\pi_r}_\kp}_{\Delta_2}. 
    \end{align}
Note that \eqref{eq:401} can be directly obtained if \begin{align}
    N\leq \frac{SKC^{\pi^*}\log\frac{NS}{\delta}}{1-\gamma}
\end{align}
or 
\begin{align}
    N\leq \frac{384\log^2\frac{4SAN}{(1-\gamma)\delta}}{(1-\gamma)\mu_{\min}}.
\end{align}
Hence in the following proof, we only focus on the case when
\begin{align}\label{eq:N large}
    N>\max\left\{\frac{SKC^{\pi^*}\log\frac{NS}{\delta}}{1-\gamma},\frac{1}{(1-\gamma)KSC^{\pi^*}\mu^2_{\min}}, \frac{384\log^2\frac{4SAN}{(1-\gamma)\delta}}{(1-\gamma)\mu_{\min}}\right\}.
\end{align}
Clearly it holds that $\frac{8\log\frac{8SA}{\delta}}{\mu_{\min}}$, and thus Lemma 8 of \citep{shi2022distributionally} states that under \cref{eq:N large}, with probability $1-\delta$, it holds that
\begin{align}\label{eq:N(s,a)0}
    N(s,a)\geq \frac{N\mu(s,a)}{8\log\frac{4SA}{\delta}}, \forall (s,a). 
\end{align}
This moreover implies that  with probability $1-\delta$,  if $\mu(s,a)>0$, then $N(s,a)>0$.  We hence focus on the case when this event holds.   

The remaining proof can be completed by combining the following two theorems.
\end{proof}

\begin{theorem}\label{lemma:delta 1}
With probability at least $1-2\delta$, it holds that
\begin{align}
    \rho^\top\Delta_1\leq  {\frac{2c_2SC^{\pi^*}\log\frac{4SAN}{(1-\gamma)\delta}}{(1-\gamma)^2N}}+\frac{80Sc_1C^{\pi^*}\log\frac{NS}{\delta}}{(1-\gamma)^2N} +\frac{96}{\gamma}\sqrt{\frac{48SC^{\pi^*}\log\frac{4SAN}{(1-\gamma)\delta}}{(1-\gamma)^3N}}.
\end{align}
\end{theorem}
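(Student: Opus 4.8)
The plan is to follow the skeleton of the Hoeffding-style proof of \Cref{thm:1} but to replace Hoeffding's inequality with a variance-sensitive Bernstein bound, since $\Delta_1$ has exactly the same structure as the gap analyzed there. First I would reproduce the recursive comparison: combining the Bellman equation $V^{\pi^*}_\kp(s)=r(s,\pi^*(s))+\gamma\kp^{\pi^*(s)}_s V^{\pi^*}_\kp$ with the robust Bellman optimality $V^{\pi_r}_{\hat{\cp}}(s)\geq \hat{r}(s,\pi^*(s))+\gamma\sigma_{\hat{\cp}^{\pi^*(s)}_s}(V^{\pi_r}_{\hat{\cp}})$ and inserting $\pm\gamma\kp^{\pi^*(s)}_s V^{\pi_r}_{\hat{\cp}}$, I obtain, as in \eqref{eq:3}, the one-step inequality with residual $b^*(V)(s)=r(s,\pi^*(s))-\hat{r}(s,\pi^*(s))+\gamma(\kp^{\pi^*(s)}_s V-\sigma_{\hat{\cp}^{\pi^*(s)}_s}(V))$, and unrolling it against $d^{\pi^*}$ gives $\rho^\top\Delta_1\leq \frac{1}{1-\gamma}\langle d^{\pi^*},b^*(V^{\pi_r}_{\hat{\cp}})\rangle$. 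Everything then reduces to bounding this inner product more tightly than in the Hoeffding case.

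The key refinement is a pointwise Bernstein bound on $b^*(V^{\pi_r}_{\hat{\cp}})$. I would split the robust gap as $\kp^{\pi^*(s)}_s V-\sigma_{\hat{\cp}^{\pi^*(s)}_s}(V)=(\kp^{\pi^*(s)}_s-\hat{\kp}^{\pi^*(s)}_s)V+(\hat{\kp}^{\pi^*(s)}_s V-\sigma_{\hat{\cp}^{\pi^*(s)}_s}(V))$. The second, deterministic piece is controlled by the Bernstein radius, $\hat{\kp}^a_s V-\sigma_{\hat{\cp}^a_s}(V)\lesssim R^a_s\|V\|_\infty\lesssim \frac{\log(N/\delta)}{(1-\gamma)N(s,a)}$; to the first, random piece I would apply Bernstein's inequality to get $|(\kp^a_s-\hat{\kp}^a_s)V|\lesssim \sqrt{\frac{\var_{\kp^a_s}(V)\log(N/\delta)}{N(s,a)}}+\frac{\log(N/\delta)}{(1-\gamma)N(s,a)}$. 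As in the Hoeffding argument I would then partition the states into a rarely-visited set $\mcs_s$ and a well-visited set $\mcs_l$ according to a $\log(NS/\delta)$ threshold on $N\mu(s,\pi^*(s))$: on $\mcs_s$ the clipped concentrability of \Cref{def:C*} forces $d^{\pi^*}(s)$ to be $O(C^{\pi^*}\log(NS/\delta)/N)$, so the crude bound $b^*\leq 2/(1-\gamma)$ only contributes the lower-order $(1-\gamma)^{-2}N^{-1}$ terms (the $c_1,c_2$ terms), while on $\mcs_l$ the count concentration \eqref{eq: N(s,a)>0} gives $N(s,\pi^*(s))\gtrsim N\mu(s,\pi^*(s))$ and $\hat{r}=r$, leaving only the robust gap.

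To produce the dominant term I would sum the Bernstein variance contribution over $\mcs_l$ by Cauchy--Schwarz, $\sum_s d^{\pi^*}(s)\sqrt{\frac{\var_{\kp^{\pi^*(s)}_s}(V^{\pi_r}_{\hat{\cp}})}{N(s,\pi^*(s))}}\leq \sqrt{\sum_s \frac{d^{\pi^*}(s)}{N(s,\pi^*(s))}}\sqrt{\sum_s d^{\pi^*}(s)\var_{\kp^{\pi^*(s)}_s}(V^{\pi_r}_{\hat{\cp}})}$, handling the first factor with the concentrability relation $\frac{1}{N(s,a)}\lesssim \frac{C^{\pi^*}}{N}\bigl(\frac{1}{d^{\pi^*}(s)}+S\bigr)$ used in \eqref{eq:10} (which yields $\sqrt{SC^{\pi^*}/N}$) and the second factor with a law-of-total-variance bound $\sum_s d^{\pi^*}(s)\var_{\kp^{\pi^*(s)}_s}(\cdot)\lesssim (1-\gamma)^{-1}$. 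Multiplying by the $\frac{1}{1-\gamma}$ prefactor from the unrolling then produces exactly the $\sqrt{SC^{\pi^*}\log(4N/\delta)/((1-\gamma)^3N)}$ rate; note that it is the total-variance saving of one factor $(1-\gamma)^{-1}$ over the trivial $\|V\|_\infty^2\lesssim(1-\gamma)^{-2}$ that upgrades the Hoeffding $(1-\gamma)^{-4}$ rate to $(1-\gamma)^{-3}$.

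I expect the main obstacle to be this total-variance step, because the law of total variance applies to the one-step variance of a genuine value function of the evaluating dynamics, whereas here the variance is that of the \emph{robust} value function $V^{\pi_r}_{\hat{\cp}}$ taken under the \emph{true} kernel $\kp$ along the $\pi^*$-occupancy---a mismatch of policy, reward, and kernel. My plan to resolve it is to pass to $V^{\pi^*}_\kp$, which is a bona fide value function for $(\pi^*,\kp)$ with rewards in $[0,1]$ and hence obeys $\sum_s d^{\pi^*}(s)\var_{\kp^{\pi^*(s)}_s}(V^{\pi^*}_\kp)\lesssim (1-\gamma)^{-1}$, and to control $\var_{\kp}(V^{\pi_r}_{\hat{\cp}})$ by $\var_{\kp}(V^{\pi^*}_\kp)$ plus a correction governed by $\|V^{\pi_r}_{\hat{\cp}}-V^{\pi^*}_\kp\|$, showing the correction is of lower order on $\mcs_l$. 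A secondary technical nuisance is that $V^{\pi_r}_{\hat{\cp}}$ is itself a function of the data, so applying Bernstein to $(\kp^a_s-\hat{\kp}^a_s)V^{\pi_r}_{\hat{\cp}}$ requires decoupling the value estimate from the transitions, e.g.\ through a covering or leave-one-out argument, and carefully tracking the universal constants $c_1,c_2$ throughout.
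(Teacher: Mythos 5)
Most of your outline coincides with the paper's actual proof: the recursion unrolled against $d^{\pi^*}$ with the residual $b^*$ (and its clipped version $\tilde{b}=\max\{0,b^*\}$), the split of the robust gap into $(\kp^{\pi^*(s)}_s-\hat{\kp}^{\pi^*(s)}_s)V^{\pi_r}_{\hat{\cp}}$ plus a radius term of size $\frac{\log(N/\delta)}{(1-\gamma)N(s,\pi^*(s))}$, the partition into $\mcs_s$ and $\mcs_l$, the use of \eqref{eq:10}, the Bernstein bound for the data-dependent $V^{\pi_r}_{\hat{\cp}}$ via a covering/absorbing-MDP decoupling (this is exactly \Cref{lemma:ber ineq}), and the Cauchy--Schwarz aggregation. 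The genuine gap is precisely where you predicted trouble: the total-variance step, and your proposed fix does not go through. You want to control $\var_{\kp^{\pi^*(s)}_s}(V^{\pi_r}_{\hat{\cp}})$ by $\var_{\kp^{\pi^*(s)}_s}(V^{\pi^*}_\kp)$ plus a correction governed by $\|V^{\pi_r}_{\hat{\cp}}-V^{\pi^*}_\kp\|$, claiming the correction is lower order on $\mcs_l$. But $\|V^{\pi_r}_{\hat{\cp}}-V^{\pi^*}_\kp\|_\infty$ is (up to sign) the very sub-optimality gap the theorem is trying to bound, and a priori it can be $\Theta((1-\gamma)^{-1})$. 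Restricting to $s\in\mcs_l$ does not help: the variance at $s$ probes the value difference at the \emph{next} states $s'\sim\kp^{\pi^*(s)}_s$, and those next states can have arbitrarily small occupancy (i.e., lie in $\mcs_s$), where the only available bound on the difference is the trivial $2/(1-\gamma)$. Plugging that in gives $\sum_s d^{\pi^*}(s)\var_{\kp^{\pi^*(s)}_s}(V^{\pi_r}_{\hat{\cp}})\lesssim (1-\gamma)^{-2}$, which merely reproduces the Hoeffding rate $(1-\gamma)^{-4}$; the entire claimed improvement is lost in this one step. Note also that under the Bernstein radius $\kp\notin\hat{\cp}$ in general, so there is no pointwise ordering between $V^{\pi_r}_{\hat{\cp}}$ and $V^{\pi^*}_\kp$, and the absolute difference cannot be converted into the one-sided gap your unrolling controls.

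The paper resolves this obstacle by a self-bounding argument that never compares to $V^{\pi^*}_\kp$. It first shows (using the robust Bellman equation for $V^{\pi_r}_{\hat{\cp}}$ and the definition of $\tilde{b}$) the approximate Bellman inequality \eqref{eq:31}, $V^{\pi_r}_{\hat{\cp}}-\gamma \kp^{\pi^*} V^{\pi_r}_{\hat{\cp}}+2\tilde{b}(V^{\pi_r}_{\hat{\cp}})\geq 0$, so that $V^{\pi_r}_{\hat{\cp}}$ behaves like a value function with nonnegative rewards under $(\pi^*,\kp)$ up to the slack $2\tilde{b}$. Combining the resulting elementwise estimate \eqref{eq:34} with the telescoping identity $(d^{\pi^*})^\top(I-\gamma\kp^{\pi^*})=(1-\gamma)\rho^\top$ yields
\begin{align}
\sum_{s}d^{\pi^*}(s)\,\var_{\kp^{\pi^*(s)}_s}(V^{\pi_r}_{\hat{\cp}})\leq \frac{2}{\gamma^2(1-\gamma)}+\frac{4}{\gamma^2(1-\gamma)}\left\langle d^{\pi^*},\tilde{b}(V^{\pi_r}_{\hat{\cp}})\right\rangle ,
\end{align}
i.e., the variance sum is bounded \emph{in terms of the quantity being estimated}, not unconditionally. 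Substituting into the Cauchy--Schwarz bound gives an inequality of the form $X\leq A+B\sqrt{C+DX}$ for $X=\langle d^{\pi^*},\tilde{b}(V^{\pi_r}_{\hat{\cp}})\rangle$, which is then closed by AM--GM ($B\sqrt{DX}\leq \frac{1}{2}X+\frac{1}{2}B^2D$), as in \eqref{eq:36}. If you try to salvage your route, you will be forced into the same self-bounded structure, at which point you have reproduced the paper's argument; as written, the claim that the correction term is lower order is false and the proof does not close.
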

\begin{proof}
We first define the following set:
\begin{align}
    \mcs_0\triangleq \{ s: d^{\pi^*}(s)=0\}.
\end{align}
And for $s\notin \mcs_0$, it holds that $d^{\pi^*}(s)>0$.

We first consider $s\notin \mcs_0$. Due to the fact $d^{\pi^*}(s)>0$, 
hence $d^{\pi^*}(s,\pi^*(s))=d^{\pi^*}(s)\pi^*(\pi^*(s)|s)>0$. Thus it implies that $\mu(s,\pi^*(s))>0$, and \eqref{eq:N(s,a)0} further implies $N(s,\pi^*(s))>0$ and $\hat{r}(s,\pi^*(s))=r(s,\pi^*(s))$. Hence we have that
\begin{align}\label{eq:27}
    V^{\pi^*}_\kp(s)-V^{\pi_r}_{\hat{\cp}}(s)&= r(s,\pi^*(s))+\gamma \kp^{\pi^*(s)}_s V^{\pi^*}_\kp- V^{\pi_r}_{\hat{\cp}}(s)\nn\\
    &\overset{(a)}{\leq }\gamma \kp^{\pi^*(s)}_s V^{\pi^*}_\kp-\gamma\sigma_{\hat{\cp}^{\pi^*(s)}_{s}}(V^{\pi_r}_{\hat{\cp}})\nn\\
    &=\gamma \kp^{\pi^*(s)}_s V^{\pi^*}_\kp-\gamma \kp^{\pi^*(s)}_s V^{\pi_r}_{\hat{\cp}}+\gamma \kp^{\pi^*(s)}_s V^{\pi_r}_{\hat{\cp}} -\gamma\sigma_{\hat{\cp}^{\pi^*(s)}_{s}}(V^{\pi_r}_{\hat{\cp}})\nn\\
    &=\gamma \kp^{\pi^*(s)}_s (V^{\pi^*}_\kp-V^{\pi_r}_{\hat{\cp}})+ \gamma (\kp^{\pi^*(s)}_s V^{\pi_r}_{\hat{\cp}} -\sigma_{\hat{\cp}^{\pi^*(s)}_{s}}(V^{\pi_r}_{\hat{\cp}})),
\end{align}
where $(a)$ is from $V^{\pi_r}_{\hat{\cp}}(s)=\max_a Q^{\pi_r}_{\hat{\cp}}(s,a)\geq Q^{\pi_r}_{\hat{\cp}}(s,\pi^*(s))=\hat{r}(s,\pi^*(s))+\gamma\sigma_{\hat{\cp}^{\pi^*(s)}_{s}}(V^{\pi_r}_{\hat{\cp}})=r(s,\pi^*(s))+\gamma\sigma_{\hat{\cp}^{\pi^*(s)}_{s}}(V^{\pi_r}_{\hat{\cp}})$.

For $s\in\mcs_0$, it holds that 
\begin{align}\label{eq:281}
    V^{\pi^*}_\kp(s)-V^{\pi_r}_{\hat{\cp}}(s)&= r(s,\pi^*(s))+\gamma \kp^{\pi^*(s)}_s V^{\pi^*}_\kp- V^{\pi_r}_{\hat{\cp}}(s)\nn\\
    &\overset{(a)}{\leq }\gamma \kp^{\pi^*(s)}_s V^{\pi^*}_\kp-\gamma\sigma_{\hat{\cp}^{\pi^*(s)}_{s}}(V^{\pi_r}_{\hat{\cp}})+r(s,\pi^*(s))-\hat{r}(s,\pi^*(s))\nn\\
    &\leq \gamma \kp^{\pi^*(s)}_s (V^{\pi^*}_\kp-V^{\pi_r}_{\hat{\cp}})+ \gamma (\kp^{\pi^*(s)}_s V^{\pi_r}_{\hat{\cp}} -\sigma_{\hat{\cp}^{\pi^*(s)}_{s}}(V^{\pi_r}_{\hat{\cp}}))+1,
\end{align}
where $(a)$ follows similarly from \cref{eq:27}, and the last inequality is from $r(s,\pi^*(s))\leq 1$. 

Hence combining \cref{eq:27} and \cref{eq:281} implies 
\begin{align}\label{eq:delta 1 rec}
    V^{\pi^*}_\kp(s)-V^{\pi_r}_{\hat{\cp}}(s)\leq \gamma \kp^{\pi^*(s)}_s (V^{\pi^*}_\kp-V^{\pi_r}_{\hat{\cp}})+ b^*(V)(s),
\end{align}
where $b^*(V)(s)\triangleq \gamma\kp^{\pi^*(s)}_s V -\gamma\sigma_{\hat{\cp}^{\pi^*(s)}_{s}}(V)$ if $s\notin\mcs_0$, and  $b^*(V)(s)\triangleq 1+\gamma\kp^{\pi^*(s)}_s V -\gamma\sigma_{\hat{\cp}^{\pi^*(s)}_{s}}(V)$ for $s\in\mcs_0$.

Moreover we set\begin{align}
    \tilde{b}(V^{\pi_r}_{\hat{\cp}})(s)=\max\{0, b^*(V^{\pi_r}_{\hat{\cp}})(s)\},
\end{align} 
then it holds that $b^*(V^{\pi_r}_{\hat{\cp}})\leq \tilde{b}(V^{\pi_r}_{\hat{\cp}})$.

Then apply \cref{eq:delta 1 rec} recursively and we have that
\begin{align}
 \rho^\top\Delta_1\leq   \frac{1}{1-\gamma}\left\langle d^{\pi^*}, \tilde{b}(V^{\pi_r}_{\hat{\cp}}) \right\rangle,
\end{align}

Not that for $s\in\mcs_0$, it holds that $d^{\pi^*}(s)=0$, and
\begin{align}
     \left\langle d^{\pi^*},\tilde{b}(V^{\pi_r}_{\hat{\cp}}) \right\rangle=\sum_{s\notin\mcs_0} d^{\pi^*}(s) \tilde{b}(V^{\pi_r}_{\hat{\cp}})(s).
\end{align}
This implies that we only need to focus on $s\notin\mcs_0$. We further defined the following sets:
\begin{align}
    \mcs_{s}&\triangleq \left\{s\notin \mcs_0: N\mu(s,\pi^*(s))\leq 8\log\frac{NS}{\delta} \right\},\\
    \mcs_{l}&\triangleq \left\{s\notin \mcs_0: N\mu(s,\pi^*(s))> 8\log\frac{NS}{\delta} \right\}.
\end{align}

For $s\in\mcs_s$, we have that
\begin{align}
    \min \left\{d^{\pi^*}(s), \frac{1}{S}\right\}\leq C^{\pi^*}\mu(s,\pi^*(s))\leq \frac{8C^{\pi^*}\log\frac{NS}{\delta}}{N}\overset{(a)}{<}\frac{1}{S},
\end{align}
where $(a)$ is due to the fact \cref{eq:N large}. 

This further implies that $d^{\pi^*}(s)\leq \frac{8C^{\pi^*}\log\frac{NS}{\delta}}{N}$. Hence
\begin{align}\label{eq:21}
    \sum_{s\in\mcs_s} d^{\pi^*}(s)\tilde{b}(V^{\pi_r}_{\hat{\cp}})(s)\leq \frac{2}{1-\gamma}\frac{8SC^{\pi^*}\log\frac{NS}{\delta}}{N}, 
\end{align}
which is due to $\|\kp^{\pi^*(s)}_s V^{\pi_r}_{\hat{\cp}} -\sigma_{\hat{\cp}^{\pi^*(s)}_{s}}(V^{\pi_r}_{\hat{\cp}})\|\leq \frac{2}{1-\gamma}$.

We then consider $s\in\mcs_l$. Note that from the definition and \eqref{eq:N(s,a)0}, it holds that $N(s,\pi^*(s))>0$ for $s\in\mcs_l$. 

Therefore it holds that
\begin{align}\label{eq:24}
    \tilde{b}(V^{\pi_r}_{\hat{\cp}})(s)&\leq |\gamma \kp^{\pi^*(s)}_s V^{\pi_r}_{\hat{\cp}} -\gamma \sigma_{\hat{\cp}^{\pi^*(s)}_{s}}(V^{\pi_r}_{\hat{\cp}})|\nn\\
    &\leq |\gamma \kp^{\pi^*(s)}_s V^{\pi_r}_{\hat{\cp}}-\gamma \hat{\kp}^{\pi^*(s)}_s V^{\pi_r}_{\hat{\cp}}| +|\gamma \hat{\kp}^{\pi^*(s)}_s V^{\pi_r}_{\hat{\cp}} -\gamma \sigma_{\hat{\cp}^{\pi^*(s)}_{s}}(V^{\pi_r}_{\hat{\cp}})|\nn\\
    &\leq \gamma \big| \kp^{\pi^*(s)}_s V^{\pi_r}_{\hat{\cp}}-\hat{\kp}^{\pi^*(s)}_s V^{\pi_r}_{\hat{\cp}}\big|+\sqrt{\frac{\log\frac{4SAN}{(1-\gamma)\delta}\var_{\hat{\kp}^{\pi^*(s)}_s}(V^{\pi_r}_{\hat{\cp}})}{N(s,\pi^*(s))}}, 
\end{align}
where the last inequality is shown as follows.

Since $\hat{\kp}^{\pi^*(s)}_s\in \hat{\cp}^a_s$, we have that  $\hat{\kp}^{\pi^*(s)}_s V^{\pi_r}_{\hat{\cp}} \geq  \sigma_{\hat{\cp}^{\pi^*(s)}_{s}}(V^{\pi_r}_{\hat{\cp}})$. Now note that it is shown in  \citep{iyengar2005robust,shi2023curious} that 
\begin{align}
    \sigma_{\hat{\cp}^{\pi^*(s)}_{s}}(V^{\pi_r}_{\hat{\cp}})=\max_{\mu\in[0,V^{\pi_r}_{\hat{\cp}}]} \left\{\hat{\kp}^{\pi^*(s)}_{s} (V^{\pi_r}_{\hat{\cp}} -\mu)-\sqrt{R^{\pi^*(s)}_s\var_{\hat{\kp}^{\pi^*(s)}_{s}}(V^{\pi_r}_{\hat{\cp}} -\mu)} \right\}.
\end{align}
Thus 
\begin{align}
    &|\gamma \hat{\kp}^{\pi^*(s)}_s V^{\pi_r}_{\hat{\cp}} -\gamma \sigma_{\hat{\cp}^{\pi^*(s)}_{s}}(V^{\pi_r}_{\hat{\cp}})|\nn\\
    &= \gamma \hat{\kp}^{\pi^*(s)}_s V^{\pi_r}_{\hat{\cp}} -\gamma \sigma_{\hat{\cp}^{\pi^*(s)}_{s}}(V^{\pi_r}_{\hat{\cp}})\nn\\
    &= \gamma \hat{\kp}^{\pi^*(s)}_s V^{\pi_r}_{\hat{\cp}} -\gamma \max_{\mu\in[0,V^{\pi_r}_{\hat{\cp}}]} \left\{\hat{\kp}^{\pi^*(s)}_{s} (V^{\pi_r}_{\hat{\cp}} -\mu)-\sqrt{R^{\pi^*(s)}_s\var_{\hat{\kp}^{\pi^*(s)}_{s}}(V^{\pi_r}_{\hat{\cp}} -\mu)} \right\}\nn\\
    &\overset{(a)}{\leq}  \gamma \hat{\kp}^{\pi^*(s)}_s V^{\pi_r}_{\hat{\cp}} -\gamma \left(\hat{\kp}^{\pi^*(s)}_{s} (V^{\pi_r}_{\hat{\cp}} )-\sqrt{R^{\pi^*(s)}_s\var_{\hat{\kp}^{\pi^*(s)}_{s}}(V^{\pi_r}_{\hat{\cp}})} \right)\nn\\
    &=\sqrt{R^{\pi^*(s)}_s\var_{\hat{\kp}^{\pi^*(s)}_{s}}(V^{\pi_r}_{\hat{\cp}})},
\end{align}
where $(a)$ is due to the maximum term is larger than the function value at $\mu=0$, and this inequality completes the proof of \Cref{eq:24}.

To further bound \cref{eq:24}, we invoke \Cref{lemma:ber ineq} and have that
\begin{align}\label{eq:26}
    &\big| \kp^{\pi^*(s)}_s V^{\pi_r}_{\hat{\cp}}-\hat{\kp}^{\pi^*(s)}_s V^{\pi_r}_{\hat{\cp}}\big|\nn\\
    &\leq 12\sqrt{\frac{\textbf{Var}_{\hat{\kp}^{\pi^*(s)}_s}(V^{\pi_r}_{\hat{\cp}})\log\frac{4SAN}{(1-\gamma)\delta}}{N(s,\pi^*(s))}}+\frac{74\log\frac{4SAN}{(1-\gamma)\delta}}{(1-\gamma)N(s,\pi^*(s))}\nn\\
    &\leq 12\sqrt{\frac{\log\frac{4SAN}{(1-\gamma)\delta}}{N(s,\pi^*(s))}\left( 2\var_{\kp^{\pi^*(s)}_s}(V^{\pi_r}_{\hat{\cp}})+\frac{41\log\frac{4SAN}{(1-\gamma)\delta}}{(1-\gamma)^2N(s,\pi^*(s))}\right)}+\frac{74\log\frac{4SAN}{(1-\gamma)\delta}}{(1-\gamma)N(s,\pi^*(s))}\nn\\
    &\leq 12\sqrt{\frac{2\log\frac{4SAN}{(1-\gamma)\delta}}{N(s,\pi^*(s))}\var_{\kp^{\pi^*(s)}_s}(V^{\pi_r}_{\hat{\cp}})}+\frac{(74+12\sqrt{41})\log\frac{4SAN}{(1-\gamma)\delta}}{(1-\gamma)N(s,\pi^*(s))},
\end{align}
where the last inequality is from $\sqrt{x+y}\leq \sqrt{x}+\sqrt{y}$.

Combine \cref{eq:24} and \cref{eq:26}, we further have that
\begin{align}\label{eq:25}
    \tilde{b}(V^{\pi_r}_{\hat{\cp}})(s)&\leq 24\sqrt{\frac{2\log\frac{4SAN}{(1-\gamma)\delta}}{N(s,\pi^*(s))}\var_{\kp^{\pi^*(s)}_s}(V^{\pi_r}_{\hat{\cp}})}+\frac{(74+12\sqrt{41})\log\frac{4SAN}{(1-\gamma)\delta}}{(1-\gamma)N(s,\pi^*(s))}+\frac{\log\frac{N}{\delta}}{(1-\gamma)N(s,\pi^*(s))}\nn\\
    &\leq 24\sqrt{\frac{2\log\frac{4SAN}{(1-\gamma)\delta}}{N(s,\pi^*(s))}\var_{\kp^{\pi^*(s)}_s}(V^{\pi_r}_{\hat{\cp}})}+\frac{c_1\log\frac{4SAN}{(1-\gamma)\delta}}{(1-\gamma)N(s,\pi^*(s))},
\end{align}
where $c_1=75+12\sqrt{41}$. 

Note that in \cref{eq:10}, we showed that $\frac{1}{N(s,\pi^*(s))}\leq \frac{12C^{\pi^*}}{N}(S+\frac{1}{d^{\pi^*}(s)})$. Hence plugging in \cref{eq:25} implies that
\begin{align}
    &\tilde{b}(V^{\pi_r}_{\hat{\cp}})(s)\leq  24\sqrt{\frac{24C^{\pi^*}\log\frac{4SAN}{(1-\gamma)\delta}}{N}\var_{\kp^{\pi^*(s)}_s}(V^{\pi_r}_{\hat{\cp}})}\left(\sqrt{S}+\frac{1}{\sqrt{d^{\pi^*}(s)}}\right)\nn\\
    &\quad +\frac{12c_1C^{\pi^*}\log\frac{4SAN}{(1-\gamma)\delta}}{(1-\gamma)N}\left(S+\frac{1}{d^{\pi^*}(s)}\right).
\end{align}
Firstly we have that
\begin{align}
    &\sum_{s\in\mcs_l} 24d^{\pi^*}(s) \sqrt{\frac{24C^{\pi^*}\log\frac{4SAN}{(1-\gamma)\delta}}{N}\var_{\kp^{\pi^*(s)}_s}(V^{\pi_r}_{\hat{\cp}})}\left(\sqrt{S}+\frac{1}{\sqrt{d^{\pi^*}(s)}}\right)\nn\\
    &=\sum_{s\in\mcs_l} 24d^{\pi^*}(s) \sqrt{\frac{24SC^{\pi^*}\log\frac{4SAN}{(1-\gamma)\delta}}{N}\var_{\kp^{\pi^*(s)}_s}(V^{\pi_r}_{\hat{\cp}})}+\sum_{s\in\mcs_l} 12\sqrt{d^{\pi^*}(s)} \sqrt{\frac{24C^{\pi^*}\log\frac{4SAN}{(1-\gamma)\delta}}{N}\var_{\kp^{\pi^*(s)}_s}(V^{\pi_r}_{\hat{\cp}})}\nn\\
    &=24 \sqrt{\frac{24C^{\pi^*}\log\frac{4SAN}{(1-\gamma)\delta}}{N}} \left(\sum_{s\in\mcs_l}\sqrt{d^{\pi^*}(s) \var_{\kp^{\pi^*(s)}_s}(V^{\pi_r}_{\hat{\cp}})} +\sum_{s\in\mcs_l}\sqrt{d^{\pi^*}(s)}\sqrt{Sd^{\pi^*}(s)\var_{\kp^{\pi^*(s)}_s}(V^{\pi_r}_{\hat{\cp}})}\right)\nn\\
    &\overset{(a)}{\leq} 24\sqrt{\frac{24C^{\pi^*}\log\frac{4SAN}{(1-\gamma)\delta}}{N}} \left(\sqrt{S}\sqrt{\sum_{s\in\mcs_l}d^{\pi^*}(s)\var_{\kp^{\pi^*(s)}_s}(V^{\pi_r}_{\hat{\cp}})}+\sqrt{\sum_{s\in\mcs_l} Sd^{\pi^*}(s) \var_{\kp^{\pi^*(s)}_s}(V^{\pi_r}_{\hat{\cp}})} \right)\nn\\
    &=48\sqrt{\frac{24SC^{\pi^*}\log\frac{4SAN}{(1-\gamma)\delta}}{N}}\sqrt{\sum_{s\in\mcs_l}d^{\pi^*}(s)\var_{\kp^{\pi^*(s)}_s}(V^{\pi_r}_{\hat{\cp}})},
\end{align}
where $(a)$ is from Cauchy's inequality and the fact $\sum_{s\in\mcs_l}d^{\pi^*}(s)\leq 1$. 

In addition, we have that 
\begin{align}
    \sum_{s\in\mcs_l} d^{\pi^*}(s)\frac{12c_1C^{\pi^*}\log\frac{4SAN}{(1-\gamma)\delta}}{(1-\gamma)N}\left(S+\frac{1}{d^{\pi^*}(s)}\right)\leq \frac{24c_1SC^{\pi^*}\log\frac{4SAN}{(1-\gamma)\delta}}{(1-\gamma)N}.
\end{align}

Combine the two inequalities above and we have that
\begin{align}\label{eq:29}
    &\sum_{s\in\mcs_l} d^{\pi^*}(s)\tilde{b}(V^{\pi_r}_{\hat{\cp}})(s)\nn\\
    &\leq 48\sqrt{\frac{24SC^{\pi^*}\log\frac{4SAN}{(1-\gamma)\delta}}{N}}\sqrt{\sum_{s\in\mcs_l}d^{\pi^*}(s)\var_{\kp^{\pi^*(s)}_s}(V^{\pi_r}_{\hat{\cp}})}+ \frac{24Sc_1C^{\pi^*}\log\frac{4SAN}{(1-\gamma)\delta}}{(1-\gamma)N}.
\end{align}
Then we combine \cref{eq:21} and \cref{eq:29}, and it implies that
\begin{align}\label{eq:d* b}
    &\langle d^{\pi^*}, \tilde{b}(V^{\pi_r}_{\hat{\cp}})\rangle\nn\\
    &= \sum_{s\in\mcs_s} d^{\pi^*}(s)\tilde{b}(V^{\pi_r}_{\hat{\cp}})(s)+\sum_{s\in\mcs_l} d^{\pi^*}(s)\tilde{b}(V^{\pi_r}_{\hat{\cp}})(s)\nn\\
    &\leq \frac{16SC^{\pi^*}\log\frac{NS}{\delta}}{(1-\gamma)N} +48\sqrt{\frac{24SC^{\pi^*}\log\frac{4SAN}{(1-\gamma)\delta}}{N}}\sqrt{\sum_{s\in\mcs_l}d^{\pi^*}(s)\var_{\kp^{\pi^*(s)}_s}(V^{\pi_r}_{\hat{\cp}})}+ \frac{24Sc_1C^{\pi^*}\log\frac{4SAN}{(1-\gamma)\delta}}{(1-\gamma)N}\nn\\
    &\leq \frac{40c_1SC^{\pi^*}\log\frac{NS}{\delta}}{(1-\gamma)N} +48\sqrt{\frac{24SC^{\pi^*}\log\frac{4SAN}{(1-\gamma)\delta}}{N}}\sqrt{\sum_{s\in\mcs}d^{\pi^*}(s)\var_{\kp^{\pi^*(s)}_s}(V^{\pi_r}_{\hat{\cp}})}.
\end{align}
We then bound the term $\sum_{s\in\mcs}d^{\pi^*}(s)\var_{\kp^{\pi^*(s)}_s}(V^{\pi_r}_{\hat{\cp}})$. We first claim the following inequality: 
\begin{align}\label{eq:31}
    V^{\pi_r}_{\hat{\cp}}-\gamma \kp^{\pi^*} V^{\pi_r}_{\hat{\cp}}+2\tilde{b}(V^{\pi_r}_{\hat{\cp}})\geq 0. 
\end{align}
To prove \cref{eq:31}, we note that 
\begin{align}
    V^{\pi_r}_{\hat{\cp}}(s)&=\max_a Q^{\pi_r}_{\hat{\cp}}(s,a)\nn\\
    &\geq Q^{\pi_r}_{\hat{\cp}}(s,\pi^*(s))\nn\\
    &=\hat{r}(s,\pi^*(s))+\gamma \sigma_{\hat{\cp}^{\pi^*(s)}_s}(V^{\pi_r}_{\hat{\cp}})\nn\\
    &=\hat{r}(s,\pi^*(s))+\gamma \kp^{\pi^*(s)}_s V^{\pi_r}_{\hat{\cp}}-\gamma \kp^{\pi^*(s)}_s V^{\pi_r}_{\hat{\cp}} +\gamma \sigma_{\hat{\cp}^{\pi^*(s)}_s}(V^{\pi_r}_{\hat{\cp}})\nn\\
    &\overset{(a)}{\geq} \hat{r}(s,\pi^*(s))+\gamma \kp^{\pi^*(s)}_s V^{\pi_r}_{\hat{\cp}}-b^*(V^{\pi_r}_{\hat{\cp}})(s)\nn\\
    &\geq \hat{r}(s,\pi^*(s))+\gamma \kp^{\pi^*(s)}_s V^{\pi_r}_{\hat{\cp}}-2\tilde{b}(V^{\pi_r}_{\hat{\cp}})(s),
\end{align} 
where $(a)$ is from $b^*(V^{\pi_r}_{\hat{\cp}})(s)\geq \gamma \kp^{\pi^*(s)}_s V^{\pi_r}_{\hat{\cp}} -\gamma \sigma_{\hat{\cp}^{\pi^*(s)}_{s}}(V^{\pi_r}_{\hat{\cp}})$. 

Hence for any $s\in\mcs$, 
\begin{align}
    V^{\pi_r}_{\hat{\cp}}(s)-\gamma \kp^{\pi^*(s)}_s V^{\pi_r}_{\hat{\cp}}+2\tilde{b}(V^{\pi_r}_{\hat{\cp}})(s)\geq \hat{r}(s,\pi^*(s))\geq 0,
\end{align} 
which proves \cref{eq:31}. 

Now with \cref{eq:31}, we first note that 
\begin{align}\label{eq:34}
    &(V^{\pi_r}_{\hat{\cp}} \circ V^{\pi_r}_{\hat{\cp}})- (\gamma \kp^{\pi^*} V^{\pi_r}_{\hat{\cp}})\circ(\gamma \kp^{\pi^*} V^{\pi_r}_{\hat{\cp}})\nn\\
    &=(V^{\pi_r}_{\hat{\cp}}-\gamma \kp^{\pi^*} V^{\pi_r}_{\hat{\cp}})\circ (V^{\pi_r}_{\hat{\cp}}+\gamma \kp^{\pi^*} V^{\pi_r}_{\hat{\cp}})\nn\\
    &\leq (V^{\pi_r}_{\hat{\cp}}-\gamma \kp^{\pi^*} V^{\pi_r}_{\hat{\cp}}+2\tilde{b}(V^{\pi_r}_{\hat{\cp}}))\circ (V^{\pi_r}_{\hat{\cp}}+\gamma \kp^{\pi^*} V^{\pi_r}_{\hat{\cp}})\nn\\
    &\leq \frac{2}{1-\gamma} (V^{\pi_r}_{\hat{\cp}}-\gamma \kp^{\pi^*} V^{\pi_r}_{\hat{\cp}}+2\tilde{b}(V^{\pi_r}_{\hat{\cp}})),
\end{align}
where the last inequality is due to the fact $\|V^{\pi_r}_{\hat{\cp}}+\gamma \kp^{\pi^*} V^{\pi_r}_{\hat{\cp}}\|\leq \frac{2}{1-\gamma}$ and \cref{eq:31}. 

We then have that
\begin{align}\label{eq:var}
    &\sum_{s\in\mcs}d^{\pi^*}(s)\var_{\kp^{\pi^*(s)}_s}(V^{\pi_r}_{\hat{\cp}})\nn\\
    &=\langle d^{\pi^*}, \kp^{\pi^*}(V^{\pi_r}_{\hat{\cp}} \circ V^{\pi_r}_{\hat{\cp}})- ( \kp^{\pi^*} V^{\pi_r}_{\hat{\cp}})\circ( \kp^{\pi^*} V^{\pi_r}_{\hat{\cp}})\rangle\nn\\
    &\overset{(a)}{\leq} \left\langle d^{\pi^*}, \kp^{\pi^*}(V^{\pi_r}_{\hat{\cp}} \circ V^{\pi_r}_{\hat{\cp}})-\frac{1}{\gamma^2} (V^{\pi_r}_{\hat{\cp}} \circ V^{\pi_r}_{\hat{\cp}})+\frac{2}{\gamma^2(1-\gamma)}(V^{\pi_r}_{\hat{\cp}}-\gamma \kp^{\pi^*} V^{\pi_r}_{\hat{\cp}}+ 2\tilde{b}(V^{\pi_r}_{\hat{\cp}}))\right\rangle\nn\\
    &\overset{(b)}{\leq} \left\langle d^{\pi^*}, \kp^{\pi^*}(V^{\pi_r}_{\hat{\cp}} \circ V^{\pi_r}_{\hat{\cp}})-\frac{1}{\gamma} (V^{\pi_r}_{\hat{\cp}} \circ V^{\pi_r}_{\hat{\cp}})+\frac{2}{\gamma^2(1-\gamma)}(I-\gamma\kp^{\pi^*}) V^{\pi_r}_{\hat{\cp}}+ \frac{4}{\gamma^2(1-\gamma)}\tilde{b}(V^{\pi_r}_{\hat{\cp}}))\right\rangle\nn\\
    &=\left\langle d^{\pi^*}, \frac{1}{\gamma}(\gamma\kp^{\pi^*}-I)(V^{\pi_r}_{\hat{\cp}} \circ V^{\pi_r}_{\hat{\cp}})+\frac{2}{\gamma^2(1-\gamma)}(I-\gamma\kp^{\pi^*}) V^{\pi_r}_{\hat{\cp}}+ \frac{4}{\gamma^2(1-\gamma)}\tilde{b}(V^{\pi_r}_{\hat{\cp}}))\right\rangle\nn\\
    &= (d^{\pi^*})^\top(I-\gamma\kp^{\pi^*})\left(-\frac{1}{\gamma}(V^{\pi_r}_{\hat{\cp}} \circ V^{\pi_r}_{\hat{\cp}})+\frac{2}{\gamma^2(1-\gamma)}V^{\pi_r}_{\hat{\cp}} \right)+\frac{4}{\gamma^2(1-\gamma)}\langle d^{\pi^*}, \tilde{b}(V^{\pi_r}_{\hat{\cp}}) \rangle\nn\\
    &\overset{(c)}{=} (1-\gamma)\rho^\top\left(-\frac{1}{\gamma}(V^{\pi_r}_{\hat{\cp}} \circ V^{\pi_r}_{\hat{\cp}})+\frac{2}{\gamma^2(1-\gamma)}V^{\pi_r}_{\hat{\cp}} \right)+\frac{4}{\gamma^2(1-\gamma)}\langle d^{\pi^*}, \tilde{b}(V^{\pi_r}_{\hat{\cp}}) \rangle\nn\\
    &\leq \frac{2}{\gamma^2}\rho^\top V^{\pi_r}_{\hat{\cp}}+\frac{4}{\gamma^2(1-\gamma)}\langle d^{\pi^*}, \tilde{b}(V^{\pi_r}_{\hat{\cp}}) \rangle\nn\\
    &\leq \frac{2}{\gamma^2(1-\gamma)}+\frac{4}{\gamma^2(1-\gamma)}\langle d^{\pi^*}, \tilde{b}(V^{\pi_r}_{\hat{\cp}}) \rangle,
\end{align}
where $(a)$ is from \cref{eq:34}, $(b)$ is due to $\gamma<1$, $(c)$ is from the definition of visitation distribution. 

Hence by plugging \cref{eq:var} in \cref{eq:d* b}, we have that
\begin{align} \label{eq:36}
    &\langle d^{\pi^*}, \tilde{b}(V^{\pi_r}_{\hat{\cp}})\rangle\nn\\
    &\leq \frac{40c_1SC^{\pi^*}\log\frac{NS}{\delta}}{(1-\gamma)N} +48\sqrt{\frac{24SC^{\pi^*}\log\frac{4SAN}{(1-\gamma)\delta}}{N}}\sqrt{\sum_{s\in\mcs}d^{\pi^*}(s)\var_{\kp^{\pi^*(s)}_s}(V^{\pi_r}_{\hat{\cp}})}\nn\\
    &\leq \frac{40c_1SC^{\pi^*}\log\frac{NS}{\delta}}{(1-\gamma)N} +48\sqrt{\frac{24SC^{\pi^*}\log\frac{4SAN}{(1-\gamma)\delta}}{N}}\sqrt{\frac{2}{\gamma^2(1-\gamma)}+\frac{4}{\gamma^2(1-\gamma)}\langle d^{\pi^*}, \tilde{b}(V^{\pi_r}_{\hat{\cp}}) \rangle}\nn\\
    &\leq \frac{40c_1SC^{\pi^*}\log\frac{NS}{\delta}}{(1-\gamma)N} +\frac{24}{\gamma}\sqrt{\frac{48SC^{\pi^*}\log\frac{4SAN}{(1-\gamma)\delta}}{(1-\gamma)N}}+\frac{48}{\gamma}\sqrt{\frac{96SC^{\pi^*}\log\frac{4SAN}{(1-\gamma)\delta}}{(1-\gamma)N}}\sqrt{\langle d^{\pi^*}, \tilde{b}(V^{\pi_r}_{\hat{\cp}}) \rangle}\nn\\
    &\overset{(a)}{\leq} \frac{1}{2}\langle d^{\pi^*}, \tilde{b}(V^{\pi_r}_{\hat{\cp}}) \rangle+ c_2{\frac{SC^{\pi^*}\log\frac{4SAN}{(1-\gamma)\delta}}{(1-\gamma)N}}+\frac{40Sc_1C^{\pi^*}\log\frac{NS}{\delta}}{(1-\gamma)N} +\frac{48}{\gamma}\sqrt{\frac{48SC^{\pi^*}\log\frac{4SAN}{(1-\gamma)\delta}}{(1-\gamma)N}},
\end{align}
where $(a)$ is from $x+y\geq 2\sqrt{xy}$ and $c_2=8*24^3=110592$. This inequality moreover implies that 
\begin{align}
    &\langle d^{\pi^*}, \tilde{b}(V^{\pi_r}_{\hat{\cp}})\rangle{\leq}2c_2{\frac{SC^{\pi^*}\log\frac{4SAN}{(1-\gamma)\delta}}{(1-\gamma)N}}+\frac{80Sc_1C^{\pi^*}\log\frac{NS}{\delta}}{(1-\gamma)N} +\frac{96}{\gamma}\sqrt{\frac{48SC^{\pi^*}\log\frac{4SAN}{(1-\gamma)\delta}}{(1-\gamma)N}}.
\end{align}
Recall the definition of $\Delta_1$, we hence have that 
\begin{align}
    \rho^\top\Delta_1\leq  {\frac{2c_2SC^{\pi^*}\log\frac{4SAN}{(1-\gamma)\delta}}{(1-\gamma)^2N}}+\frac{80Sc_1C^{\pi^*}\log\frac{NS}{\delta}}{(1-\gamma)^2N} +\frac{96}{\gamma}\sqrt{\frac{48SC^{\pi^*}\log\frac{4SAN}{(1-\gamma)\delta}}{(1-\gamma)^3N}}.
\end{align}
This hence completes the proof of the lemma. 
\end{proof}

\begin{theorem}
With probability at least $1-2\delta$, it holds that
\begin{align}\label{eq:48}
    \rho^\top\Delta_2\leq2\sqrt{\frac{384\log^2\frac{4SAN}{(1-\gamma)\delta}}{(1-\gamma)^3N^2\mu_{\min}}}+2\sqrt{\frac{384\log^2\frac{4SAN}{(1-\gamma)\delta}}{(1-\gamma)^3N^3\mu_{\min}}}+\frac{384\log^2\frac{4SAN}{(1-\gamma)\delta}}{(1-\gamma)^2N\mu_{\min}}.
\end{align}
\end{theorem}
\begin{proof}
We first define the following set:
\begin{align}
    \mcs^0\triangleq \{s\in\mcs: N(s)=0 \}. 
\end{align}
Note that $N(s)=\sum_a N(s,a)$, hence it holds that $N(s,a)=0$ for any $s\in\mcs^0$, $a\in\mca$.


We moreover construct an absorbing MDP $\Bar{M}=(\mcs,\mca,\hat{r},\Bar{\kp})$ as follows. For $s\in\mcs^0$, $\Bar{\kp}^a_{s,x}=\textbf{1}_{x=s}$; And for $s\notin\mcs^0$, set $\Bar{\kp}^a_{s,x}=\kp^a_{s,x}$.

Then for any $s\in\mcs^0$, from \Cref{lemma:part 2}, it holds that $V^{\pi_r}_{\hat{\cp}}(s)=0$, and hence
\begin{align}
    V^{\pi_r}_{\hat{\cp}}(s)-V^{\pi_r}_{\kp}(s)\leq 0.
\end{align}
It further implies that 
\begin{align}\label{eq:50}
    V^{\pi_r}_{\hat{\cp}}(s)-V^{\pi_r}_{\kp}(s)=\Bar{\kp}^{\pi_r(s)}_s (V^{\pi_r}_{\hat{\cp}}-V^{\pi_r}_{\kp}) \leq \gamma\Bar{\kp}^{\pi_r(s)}_s (V^{\pi_r}_{\hat{\cp}}-V^{\pi_r}_{\kp}).
\end{align}

On the other hand, for $s\notin\mcs^0$, \Cref{lemma:part 2} implies that $N(s,\pi_r(s))>0$, hence \eqref{eq:N(s,a)0} implies $N(s,\pi_r(s))>0$, $\hat{r}(s,\pi_r(s))=r(s,\pi_r(s))$. Hence
\begin{align}\label{eq:51}
    V^{\pi_r}_{\hat{\cp}}(s)-V^{\pi_r}_{\kp}(s)&\overset{(a)}{=}\gamma\sigma_{\hat{\cp}^{\pi_r(s)}_s}(V^{\pi_r}_{\hat{\cp}})- \gamma\kp^{\pi_r(s)}_s V^{\pi_r}_\kp\nn\\
    &=\gamma (\sigma_{\hat{\cp}^{\pi_r(s)}_s}(V^{\pi_r}_{\hat{\cp}})- \kp^{\pi_r(s)}_sV^{\pi_r}_{\hat{\cp}}+\kp^{\pi_r(s)}_sV^{\pi_r}_{\hat{\cp}}-\kp^{\pi_r(s)}_s V^{\pi_r}_\kp)\nn\\
    &=\gamma \kp^{\pi_r(s)}_s (V^{\pi_r}_{\hat{\cp}}-V^{\pi_r}_\kp)+ \gamma (\sigma_{\hat{\cp}^{\pi_r(s)}_s}(V^{\pi_r}_{\hat{\cp}})- \kp^{\pi_r(s)}_sV^{\pi_r}_{\hat{\cp}})\nn\\
    &\triangleq \gamma \Bar{\kp}^{\pi_r(s)}_s (V^{\pi_r}_{\hat{\cp}}-V^{\pi_r}_\kp)+c(s),
\end{align}
where $(a)$ is from $r(s,\pi_r(s))=\hat{r}(s,\pi_r(s))$, and $c(s)\triangleq \gamma (\sigma_{\hat{\cp}^{\pi_r(s)}_s}(V^{\pi_r}_{\hat{\cp}})- \kp^{\pi_r(s)}_sV^{\pi_r}_{\hat{\cp}})$. 

According to the bound we obtained in Lemma \ref{lemma:33}, it holds that 
\begin{align}
    c(s)\leq 2\sqrt{\frac{48\log\frac{4SAN}{(1-\gamma)\delta}\epsilon_1}{(1-\gamma)N(s,\pi_r(s))}}+2\epsilon_1\sqrt{\frac{48\log\frac{4SAN}{(1-\gamma)\delta}}{N(s,\pi_r(s))}}+\frac{48\log\frac{4SAN}{(1-\gamma)\delta}}{(1-\gamma)N(s,\pi_r(s))}.
\end{align}

Combine \cref{eq:50} and \cref{eq:51}, then 
\begin{align}\label{eq:53}
    V^{\pi_r}_{\hat{\cp}}(s)-V^{\pi_r}_{\kp}(s)\leq \gamma \Bar{\kp}^{\pi_r(s)}_s (V^{\pi_r}_{\hat{\cp}}-V^{\pi_r}_\kp)+\tilde{c}(s), 
\end{align}
where
 \begin{align}
    \tilde{c}(s)=\left\{ \begin{array}{lr}
      2\sqrt{\frac{48\log\frac{4SAN}{(1-\gamma)\delta}\epsilon_1}{(1-\gamma)N(s,\pi_r(s))}}+2\epsilon_1\sqrt{\frac{48\log\frac{4SAN}{(1-\gamma)\delta}}{N(s,\pi_r(s))}}+\frac{48\log\frac{4SAN}{(1-\gamma)\delta}}{(1-\gamma)N(s,\pi_r(s))},   & s\notin \mcs^0  \\
       0,  & s\in\mcs^0
    \end{array}
        \right.
\end{align}

Applying \cref{eq:53} recursively further implies
\begin{align}
    \rho^\top \Delta_2 \leq \frac{1}{1-\gamma} \langle \bar{d}^{\pi_r}, \tilde{c} \rangle,
\end{align}
where $\bar{d}^{\pi_r}$ is the discounted visitation distribution induced by $\pi_r$ and $\Bar{\kp}$. 

Note that \cref{eq:N large} and Lemma 8 of \citep{shi2022distributionally} state that with probability $1-\delta$, for any $(s,a)$ pair, 
\begin{align} 
    N(s,a)\geq \frac{N\mu(s,a)}{8\log\frac{4SA}{\delta}}.
\end{align}
Hence under this event, $\tilde{c}(s)$ can be bounded as 
\begin{align}
    \tilde{c}(s)\leq 2\sqrt{\frac{384\log^2\frac{4SAN}{(1-\gamma)\delta}\epsilon_1}{(1-\gamma)N\mu_{\min}}}+2\epsilon_1\sqrt{\frac{384\log^2\frac{4SAN}{(1-\gamma)\delta}}{N\mu_{\min}}}+\frac{384\log^2\frac{4SAN}{(1-\gamma)\delta}}{(1-\gamma)N\mu_{\min}}. 
\end{align}
Hence we have that
\begin{align}
    \rho^\top \Delta_2 &\leq \frac{1}{1-\gamma} \langle \bar{d}^{\pi_r}, \tilde{c} \rangle\nn\\
    &\leq 2\sqrt{\frac{384\log^2\frac{4SAN}{(1-\gamma)\delta}\epsilon_1}{(1-\gamma)^3N\mu_{\min}}}+2\epsilon_1\sqrt{\frac{384\log^2\frac{4SAN}{(1-\gamma)\delta}}{(1-\gamma)^3N\mu_{\min}}}+\frac{384\log^2\frac{4SAN}{(1-\gamma)\delta}}{(1-\gamma)^2N\mu_{\min}}\nn\\
    &\leq 2\sqrt{\frac{384\log^2\frac{4SAN}{(1-\gamma)\delta}}{(1-\gamma)^3N^2\mu_{\min}}}+2\sqrt{\frac{384\log^2\frac{4SAN}{(1-\gamma)\delta}}{(1-\gamma)^3N^3\mu_{\min}}}+\frac{384\log^2\frac{4SAN}{(1-\gamma)\delta}}{(1-\gamma)^2N\mu_{\min}}.
\end{align}
\end{proof}

\section{Auxiliary Lemmas}

\begin{lemma}[Lemma 4, \citep{li2022settling}]\label{lemma:2}
    For any $\delta$, with probability $1-\delta$, $\max \{12N(s,a),8\log\frac{NS}{\delta} \}\geq N\mu(s,a)$, $\forall s,a$. 
\end{lemma}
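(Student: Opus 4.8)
The plan is to recognize $N(s,a)$ as a Binomial random variable and apply a multiplicative Chernoff lower-tail bound, the only nontrivial point being the union bound, which I would arrange so that the logarithm carries $N$ rather than the (potentially large) number of actions $A$. Since the pairs $(s_i,a_i)$ are drawn i.i.d.\ from $\mu$, for each fixed $(s,a)$ we have $N(s,a)=\sum_{i=1}^N \mathbf{1}_{(s_i,a_i)=(s,a)}\sim\mathrm{Bin}(N,\mu(s,a))$ with mean $\mE[N(s,a)]=N\mu(s,a)$. The claimed inequality $\max\{12N(s,a),8\log\frac{NS}{\delta}\}\ge N\mu(s,a)$ is equivalent to the implication that whenever $N\mu(s,a)>8\log\frac{NS}{\delta}$ one has $N(s,a)\ge \frac{1}{12}N\mu(s,a)$. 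Thus it suffices to control, uniformly, the lower deviation of $N(s,a)$ below one twelfth of its mean, and only for those ``heavy'' pairs whose mean exceeds the threshold.

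First I would dispose of the ``light'' pairs. If $N\mu(s,a)\le 8\log\frac{NS}{\delta}$ (in particular if $\mu(s,a)=0$), then $\max\{12N(s,a),8\log\frac{NS}{\delta}\}\ge 8\log\frac{NS}{\delta}\ge N\mu(s,a)$ holds deterministically, with no appeal to randomness. For a heavy pair, I would invoke the standard multiplicative Chernoff bound for a sum of independent Bernoulli variables, namely $\mP\big(N(s,a)\le(1-\theta)N\mu(s,a)\big)\le\exp\!\big(-\tfrac{\theta^2}{2}N\mu(s,a)\big)$ for $\theta\in(0,1)$. Choosing $\theta=\frac{11}{12}$ makes $(1-\theta)N\mu(s,a)=\frac{1}{12}N\mu(s,a)$ and yields a per-pair failure probability at most $\exp\!\big(-\tfrac{121}{288}N\mu(s,a)\big)$; using $N\mu(s,a)>8\log\frac{NS}{\delta}$ on the heavy set, and $\tfrac{121}{288}\cdot 8=\tfrac{121}{36}$, this is bounded by $\big(\frac{\delta}{NS}\big)^{121/36}$.

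The main obstacle --- and the reason the threshold reads $\log\frac{NS}{\delta}$ rather than $\log\frac{SA}{\delta}$ --- is the union bound, since a naive sum over all $SA$ pairs would reintroduce a dependence on the potentially large number of actions $A$. I would instead bound the number of heavy pairs directly: because $\sum_{s,a}\mu(s,a)=1$, at most $\frac{N}{8\log(NS/\delta)}$ pairs can satisfy $\mu(s,a)>\frac{8\log(NS/\delta)}{N}$. Union-bounding over only these heavy pairs, each of which fails with probability at most $\big(\frac{\delta}{NS}\big)^{121/36}\le\frac{\delta}{NS}$ (the exponent exceeds one and $\frac{\delta}{NS}\le1$), gives a total failure probability at most $\frac{N}{8\log(NS/\delta)}\cdot\frac{\delta}{NS}=\frac{\delta}{8S\log(NS/\delta)}\le\delta$. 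Taking complements then establishes the inequality simultaneously for all $(s,a)$ with probability at least $1-\delta$. As this merely restates Lemma~4 of \cite{li2022settling}, I would keep the write-up brief and highlight the heavy-pair counting step as the crux, since it is exactly what trades the $A$-dependence for the benign $\log N$ factor.
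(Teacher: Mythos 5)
The paper itself offers no proof of this lemma: it is imported verbatim by citation as Lemma~4 of \cite{li2022settling}, so there is no in-paper argument to compare against. Your blind proof is correct and is, in substance, the argument behind the cited result. The reduction is right: the claimed bound is deterministic on ``light'' pairs with $N\mu(s,a)\le 8\log\frac{NS}{\delta}$, and on ``heavy'' pairs it amounts to $N(s,a)\ge \frac{1}{12}N\mu(s,a)$. Since $N(s,a)\sim \mathrm{Bin}(N,\mu(s,a))$, the multiplicative Chernoff lower tail with $\theta=\frac{11}{12}$ gives per-pair failure probability at most $\exp\left(-\tfrac{121}{288}N\mu(s,a)\right)\le \left(\tfrac{\delta}{NS}\right)^{121/36}\le \tfrac{\delta}{NS}$ on the heavy set, and your counting step---at most $\tfrac{N}{8\log(NS/\delta)}$ pairs can be heavy because $\sum_{s,a}\mu(s,a)=1$---is exactly the observation that removes the dependence on $A$ from the union bound and permits the threshold $\log\frac{NS}{\delta}$ in place of $\log\frac{SA}{\delta}$. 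You correctly identify this as the crux.

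One caveat, which is a defect of the lemma's statement rather than of your argument: your final inequality $\tfrac{\delta}{8S\log(NS/\delta)}\le\delta$ requires $8S\log\frac{NS}{\delta}\ge 1$, which holds whenever $NS\ge 2$, or whenever $\delta\le e^{-1/8}$, but can fail when $N=S=1$ and $\delta$ is close to $1$. In that corner the lemma as literally stated (``for any $\delta$'') is in fact false: take $S=1$, $A=2$, $\mu=(\tfrac12,\tfrac12)$, $N=1$, $\delta=0.99$; then $8\log\frac{NS}{\delta}\approx 0.08<\tfrac12=N\mu(s,a)$, yet the unsampled action always has $N(s,a)=0$, so the asserted event has probability $0<1-\delta$. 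Hence no proof can cover the entire range of $\delta$, and it would be worth flagging that your argument (like the cited one) is valid in the only regime anyone uses, e.g.\ $\delta\le e^{-1/8}$ or $NS\ge 2$.
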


\begin{lemma}[Lemma 9, \citep{li2022settling}]\label{lemma:4}
    For any $(s,a)$ pair with $N(s,a)>0$, if $V$ is an vector independent of $\hat{\kp^a_s}$  obeying $\|V\|\leq \frac{1}{1-\gamma}$, then with probability at least $1-\delta$, 
    \begin{align}
        |(\hat{\kp}^a_s-\kp^a_s)V|&\leq \sqrt{\frac{48\textbf{Var}_{\hat{\kp}^a_s}(V)\log\frac{4N}{\delta}}{N(s,a)}}+\frac{48\log\frac{4N}{\delta}}{(1-\gamma)N(s,a)},\\ 
        \var_{\hat{\kp}^a_s}(V)&\leq 2\var_{\kp^a_s}(V)+\frac{5\log\frac{4N}{\delta}}{3(1-\gamma)^2N(s,a)}. 
    \end{align}
\end{lemma}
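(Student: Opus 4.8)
The plan is to prove both inequalities by conditioning on the realized sample count at $(s,a)$, reducing each claim to a standard concentration statement for i.i.d.\ bounded scalars, and then removing the conditioning by a union bound over the possible values of $N(s,a)$.

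First I would fix an arbitrary $n\in\{1,\dots,N\}$ and condition on the event $\{N(s,a)=n\}$. On this event the next-state samples $s_1',\dots,s_n'$ recorded at $(s,a)$ are i.i.d.\ draws from $\kp^a_s$, and since $V$ is assumed independent of $\hat{\kp}^a_s$, the scalars $X_i:=V(s_i')$ are i.i.d., bounded by $\|V\|_\infty\le\frac{1}{1-\gamma}$, with mean $\mathbb{E}[X_i]=\kp^a_s V$ and variance $\var_{\kp^a_s}(V)$. Moreover $\hat{\kp}^a_s V=\frac1n\sum_i X_i$ and the sample variance of $\{X_i\}$ is exactly $\var_{\hat{\kp}^a_s}(V)$. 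This reduces both parts to controlling the empirical mean and empirical variance of bounded i.i.d.\ variables.

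For the second inequality I would write $\var_{\hat{\kp}^a_s}(V)-\var_{\kp^a_s}(V)=\big(\hat{\kp}^a_s(V\circ V)-\kp^a_s(V\circ V)\big)-\big((\hat{\kp}^a_s V)^2-(\kp^a_s V)^2\big)$ and bound each piece: the first is the deviation of the empirical mean of the bounded variables $X_i^2$ (range $\le(1-\gamma)^{-2}$), handled by Bernstein, while the second factors as $(\hat{\kp}^a_s V-\kp^a_s V)(\hat{\kp}^a_s V+\kp^a_s V)$ with $|\hat{\kp}^a_s V+\kp^a_s V|\le\frac{2}{1-\gamma}$. Applying $ab\le\tfrac12 a^2+\tfrac12 b^2$ to absorb the cross term into $\var_{\kp^a_s}(V)$ yields $\var_{\hat{\kp}^a_s}(V)\le 2\var_{\kp^a_s}(V)+\frac{c\log(\cdot)}{(1-\gamma)^2 n}$; running the same argument in reverse gives $\var_{\kp^a_s}(V)\le 2\var_{\hat{\kp}^a_s}(V)+\frac{c\log(\cdot)}{(1-\gamma)^2 n}$, which I will need next. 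For the first inequality I would apply Bernstein's inequality to the $X_i$ to obtain, for fixed $n$ and confidence $\delta'$, $|\hat{\kp}^a_s V-\kp^a_s V|\le\sqrt{\frac{2\var_{\kp^a_s}(V)\log(2/\delta')}{n}}+\frac{2\log(2/\delta')}{3(1-\gamma)n}$, then swap the true variance for the empirical one via the reverse comparison just derived (an empirical-Bernstein inequality of Maurer--Pontil type would also give this in one step); collecting all absolute constants into the crude factor $48$ produces the stated form. Finally, each fixed-$n$ bound holds with probability $\ge 1-\delta'$, so taking $\delta'=\delta/(2N)$ and a union bound over $n\in\{1,\dots,N\}$ (all values $N(s,a)$ can take, given $N(s,a)>0$) makes everything hold simultaneously for the realized count $N(s,a)$ with probability $\ge 1-\delta$, turning $\log(1/\delta')$ into $\log\frac{4N}{\delta}$.

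The delicate point is not the scalar Bernstein step but the bookkeeping around the random sample size: the summands must be conditionally i.i.d.\ and independent of $V$, and the final probability guarantee must refer to the data-dependent quantity $N(s,a)$ rather than a fixed $n$, which is exactly what the union bound over sample counts pays for with the extra factor $N$ inside the logarithm. I note that the independence of $V$ from $\hat{\kp}^a_s$ is essential and is precisely the hypothesis that fails for the robust value function $V^{\pi_r}_{\hat{\cp}}$ in the applications of this lemma, where it must be restored by an auxiliary decoupling argument; but for the lemma as stated that independence is assumed and the proof is purely the concentration argument above.
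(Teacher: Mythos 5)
The paper never proves this lemma: it is imported verbatim as Lemma~9 of \cite{li2022settling}, and only its consequence (\Cref{lemma:ber ineq}, proved via an absorbing-MDP and $\epsilon$-net decoupling) is established in the paper itself. Your reconstruction follows the same route as the proof in that cited source: condition on $\{N(s,a)=n\}$, under which the recorded next states at $(s,a)$ are i.i.d.\ draws from $\kp^a_s$ and independent of $V$; apply scalar Bernstein to $X_i=V(s_i')$ and to $X_i^2$; pass between $\var_{\kp^a_s}(V)$ and $\var_{\hat{\kp}^a_s}(V)$ via the two-sided comparison; and finally union bound over $n\in\{1,\dots,N\}$ and over the two inequalities with confidence $\delta/(2N)$ each, which is exactly where the factor $N$ inside $\log\frac{4N}{\delta}$ comes from. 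Your closing observation is also on target: the independence hypothesis is precisely what fails for $V^{\pi_r}_{\hat{\cp}}$, and restoring it is the entire content of the paper's proof of \Cref{lemma:ber ineq}.

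One quantitative caveat. Absorbing constants into $48$ is harmless for the first inequality (your route gives roughly $\sqrt{4\var_{\hat{\kp}^a_s}(V)\log(\cdot)/n}$ plus an additive term with constant about $5$--$6$), but the constant $\tfrac{5}{3}$ in the second inequality is not crude. Carrying out your decomposition
\begin{align}
\var_{\hat{\kp}^a_s}(V)-\var_{\kp^a_s}(V)=\big(\hat{\kp}^a_s-\kp^a_s\big)(V\circ V)-\big((\hat{\kp}^a_s V)^2-(\kp^a_s V)^2\big),
\end{align}
with Bernstein on each piece (note Hoeffding on the second piece would only give a $1/\sqrt{n}$ additive term, so Bernstein plus absorption is mandatory) and AM--GM tuned so the coefficient of $\var_{\kp^a_s}(V)$ stays at $2$, the additive term comes out on the order of $\frac{10\log\frac{4N}{\delta}}{(1-\gamma)^2N(s,a)}$, not $\frac{5}{3}\cdot\frac{\log\frac{4N}{\delta}}{(1-\gamma)^2N(s,a)}$. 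To recover $\tfrac{5}{3}$ literally one needs the sharper sample-variance concentration (e.g., a Maurer--Pontil-type bound on $\big|\sqrt{\var_{\hat{\kp}^a_s}(V)}-\sqrt{\var_{\kp^a_s}(V)}\big|$, which is how \cite{li2022settling} proceeds). This is a gap in constants only, not in ideas: every downstream use in this paper goes through \Cref{lemma:ber ineq}, whose own constants ($12$, $74$, $41$) have enough slack to absorb the weaker version you would obtain.
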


\begin{lemma}\label{lemma:ber ineq}
Suppose $\gamma\in[0.5,1)$, with probability at least $1-\delta$, it holds 
    \begin{align}
        |(\hat{\kp}^a_s-\kp^a_s)V^{\pi_r}_{\hat{\cp}}|&\leq 12\sqrt{\frac{\textbf{Var}_{\hat{\kp}^a_s}(V^{\pi_r}_{\hat{\cp}})\log\frac{4SAN}{(1-\gamma)\delta}}{N(s,a)}}+\frac{74\log\frac{4SAN}{(1-\gamma)\delta}}{(1-\gamma)N(s,a)},\\ 
        \var_{\hat{\kp}^a_s}(V^{\pi_r}_{\hat{\cp}})&\leq 2\var_{\kp^a_s}(V^{\pi_r}_{\hat{\cp}})+\frac{41\log\frac{4SAN}{(1-\gamma)\delta}}{(1-\gamma)^2N(s,a)}. 
    \end{align}
simultaneously for any pair $(s,a)\in\mcs\times\mca$.
\end{lemma}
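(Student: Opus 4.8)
The key obstacle is that $V^{\pi_r}_{\hat{\cp}}$ is a deterministic function of the entire dataset, and in particular of $\hat{\kp}^a_s$ itself, so \Cref{lemma:4} (the Bernstein bound for a vector independent of $\hat{\kp}^a_s$) cannot be applied directly. The plan is to break this statistical dependence by a leave-one-out construction combined with a one-dimensional discretization: I would replace $V^{\pi_r}_{\hat{\cp}}$ by a surrogate value function that is independent of $\hat{\kp}^a_s$, invoke \Cref{lemma:4} on the surrogate, and absorb the (tiny) approximation error into the lower-order $\frac{1}{(1-\gamma)N(s,a)}$ terms.

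Fix $(s,a)$. For a scalar $u\in[0,1]$ I would define an auxiliary robust MDP $\hat{\mathcal{M}}_{s,u}$ identical to $\hat{\mathcal{M}}$ except that state $s$ is made absorbing with per-step reward $u$ (a deterministic self-loop, so no uncertainty is attached there). Its optimal robust value $\tilde V_{s,u}$ then obeys the robust Bellman equations of $\hat{\mathcal{M}}$ at every $x\neq s$ and equals $u/(1-\gamma)$ at $s$. Since for $x\neq s$ the uncertainty set $\hat{\cp}^{a'}_x$ is built from samples disjoint from those defining $\hat{\kp}^a_s$, the vector $\tilde V_{s,u}$ is independent of $\hat{\kp}^a_s$ and satisfies $\|\tilde V_{s,u}\|\le \frac{1}{1-\gamma}$. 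Next I would choose $u^\star=(1-\gamma)V^{\pi_r}_{\hat{\cp}}(s)$ and verify that $V^{\pi_r}_{\hat{\cp}}$ is a fixed point of the ($\gamma$-contraction) robust Bellman operator of $\hat{\mathcal{M}}_{s,u^\star}$: at $x\neq s$ the two operators coincide, while at $s$ one has $u^\star+\gamma V^{\pi_r}_{\hat{\cp}}(s)=V^{\pi_r}_{\hat{\cp}}(s)$. Uniqueness of the fixed point gives $\tilde V_{s,u^\star}=V^{\pi_r}_{\hat{\cp}}$ exactly. Because $u^\star$ is data-dependent, I would discretize: taking $\mathcal U=\{0,\tfrac1N,\dots,1\}$ and $u_\circ\in\mathcal U$ with $|u^\star-u_\circ|\le \tfrac1N$, and using that shifting a per-step reward by $\tfrac1N$ perturbs the value by at most $\tfrac{1}{(1-\gamma)N}$, I get $\|V^{\pi_r}_{\hat{\cp}}-\tilde V_{s,u_\circ}\|_\infty\le \tfrac{1}{(1-\gamma)N}$.

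I would then apply both inequalities of \Cref{lemma:4} to each fixed, independent vector $\tilde V_{s,u}$ for $u\in\mathcal U$, and union bound over the $SA$ pairs and the $O(N)$ grid points; this union bound is exactly what produces the $\log\frac{4N}{\delta}$ factor. Splitting $(\hat{\kp}^a_s-\kp^a_s)V^{\pi_r}_{\hat{\cp}}=(\hat{\kp}^a_s-\kp^a_s)\tilde V_{s,u_\circ}+(\hat{\kp}^a_s-\kp^a_s)(V^{\pi_r}_{\hat{\cp}}-\tilde V_{s,u_\circ})$, the first term is controlled by \Cref{lemma:4} and the second by $\|\hat{\kp}^a_s-\kp^a_s\|_1\,\|V^{\pi_r}_{\hat{\cp}}-\tilde V_{s,u_\circ}\|_\infty\le \tfrac{2}{(1-\gamma)N}$, which folds into the $\frac{\cdot}{(1-\gamma)N(s,a)}$ term since $N(s,a)\le N$. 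Finally I would convert $\var_{\hat{\kp}^a_s}(\tilde V_{s,u_\circ})$ into $\var_{\hat{\kp}^a_s}(V^{\pi_r}_{\hat{\cp}})$ using that $V\mapsto\sqrt{\var_p(V)}$ is a seminorm (hence Lipschitz in $\|\cdot\|_\infty$), and similarly convert $\var_{\kp^a_s}(\tilde V_{s,u_\circ})$ in the second inequality; the hypothesis $\gamma\in[0.5,1)$ is used to clean up the resulting constants, which is why the constants $\sqrt{48},48,\tfrac53$ of \Cref{lemma:4} inflate to $12,74,41$ here.

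The hard part will be the decoupling step: confirming that making $s$ absorbing with reward $u^\star=(1-\gamma)V^{\pi_r}_{\hat{\cp}}(s)$ reproduces $V^{\pi_r}_{\hat{\cp}}$ \emph{exactly} under the robust (not merely the standard) Bellman operator, so that the surrogate is simultaneously independent of $\hat{\kp}^a_s$ and $\ell_\infty$-close to $V^{\pi_r}_{\hat{\cp}}$, together with the bookkeeping showing the uncertainty sets at all states $x\neq s$ are genuinely independent of $\hat{\kp}^a_s$. Once this leave-one-out identity is in place, the remaining concentration and variance-transfer steps are routine applications of \Cref{lemma:4} and the triangle inequality.
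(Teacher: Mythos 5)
Your proposal follows the paper's proof essentially step for step: the same leave-one-out surrogate robust MDP that makes state $s$ absorbing with self-loop reward $u$ (singleton uncertainty set at $s$), the same fixed-point identity showing the surrogate equals $V^{\pi_r}_{\hat{\cp}}$ exactly at $u^\star=(1-\gamma)V^{\pi_r}_{\hat{\cp}}(s)$, the same $\tfrac{1}{N}$-net over $u\in[0,1]$ with a union bound feeding \Cref{lemma:4}, and the same absorption of the $O\bigl(\tfrac{1}{(1-\gamma)N}\bigr)$ discretization error into the lower-order terms. The only cosmetic difference is that the paper delegates the final splitting and variance-transfer bookkeeping to Section A.4 of \cite{li2022settling}, whereas you sketch those steps directly; the mathematical content is the same.
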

\begin{proof}
When $N(s,a)=0$, the results hold naturally. We hence only consider $(s,a)$ with $N(s,a)>0$.
\textbf{Part 1}. 

Recall that $\hat{\mathsf{M}}=(\mcs,\mca,\hat{\kp}, \hat{r})$ is the estimated MDP. For any state $s$ and positive scalar $u>0$, we first construct an auxiliary state-absorbing MDP $\hat{\mathsf{M}}^{s,u}=(\mcs,\mca,\kp^{s,u}, r^{s,u})$ as follows.

For all states except $s$, the MDP structure of $\hat{\mathsf{M}}^{s,u}$ is identical to $\hat{\mathsf{M}}$, i.e., for any $x\neq s$ and $a\in\mca$,
\begin{align}
    (\kp^{s,u})^a_{x,\cdot}=\hat{\kp}^a_{x,\cdot}, r^{s,u}(x,a)=\hat{r}(x,a);
\end{align}
State $s$ is an absorbing state in $\hat{\mathsf{M}}^{s,u}$, namely, for any $a\in\mca$,
\begin{align}
    (\kp^{s,u})^a_{s,x}=\textbf{1}_{x=s}, r^{s,u}(s,a)=u.
\end{align}

We then define a robust MDP $\mathcal{M}^{s,u}=(\mcs,\mca,\cp^{s,u},r^{s,u})$ centered at $\hat{\mathsf{M}}^{s,u}$ as following: the uncertainty set $\cp^{s,u}$ is defined as $\hat{\cp}=\bigotimes_{x,a}(\cp^{s,u})_x^a$, where if $x\neq s$, 
\begin{align}
        (\cp^{s,u})^a_x=\left\{ q\in\Delta(\mcs): D(q,(\kp^{s,u})^a_x)\leq R^a_x\right\}
    \end{align}
and 
    \begin{align}
        ({\cp}^{s,u})^a_s=\{\textbf{1}_s\}. 
    \end{align}
The optimal robust value function of $\mathcal{M}^{s,u}$ is denoted by $V^{s,u}$. 

\textbf{Part 2}. 
We claim that if we choose $u^*=(1-\gamma)V^{\pi_r}_{\hat{\cp}}(s)$, then $V^{s,u^*}=V^{\pi_r}_{\hat{\cp}}$. We prove this as follows. 

Firstly note that the function $V^{s,u^*}$ is the unique fixed point of the operator $\mathbf{T}^{s,u^*}(V)(x)=\max_a \{r^{s,u^*}(x,a)+\gamma \sigma_{(\cp^{s,u^*})^a_x}(V) \}$.

For $x\neq s$, we note that 
\begin{align}\label{eq:28}
    \mathbf{T}^{s,u^*}(V^{\pi_r}_{\hat{\cp}})(x)&=\max_a \{r^{s,u^*}(x,a)+\gamma \sigma_{(\cp^{s,u^*})^a_x}(V^{\pi_r}_{\hat{\cp}})  \}\nn\\
    &= \max_a \{\hat{r}(x,a)+\gamma \sigma_{\hat{\cp}^a_x}(V^{\pi_r}_{\hat{\cp}})\}\nn\\
    &=V^{\pi_r}_{\hat{\cp}}(x),
\end{align}
which is because $r^{s,u^*}(x,a)=\hat{r}(x,a)$ and $(\cp^{s,u^*})^a_x=\hat{\cp}^a_x$ for $x\neq s$. 

For $s$, we have that
\begin{align}\label{eq:30}
    \mathbf{T}^{s,u^*}(V^{\pi_r}_{\hat{\cp}})(s)&=\max_a \{r^{s,u^*}(s,a)+\gamma \sigma_{(\cp^{s,u^*})^a_s}(V^{\pi_r}_{\hat{\cp}})  \}\nn\\
    &= \max_a \{u^*+\gamma \sigma_{(\cp^{s,u^*})^a_s}(V^{\pi_r}_{\hat{\cp}})\}\nn\\
    &= \max_a \{ (1-\gamma)V^{\pi_r}_{\hat{\cp}}(s)+\gamma (V^{\pi_r}_{\hat{\cp}})(s)\}\nn\\
    &=V^{\pi_r}_{\hat{\cp}}(s),
\end{align}
which from $(\cp^{s,u^*})^a_s=\{\textbf{1}_s\}$. 

Hence combining with \cref{eq:28} implies that $V^{\pi_r}_{\hat{\cp}}$ is also a fixed point of $ \mathbf{T}^{s,u^*}$, and hence it must be identical to $V^{s,u^*}$, which proves our claim. 

\textbf{Part 3}. 
Define a set $U_c\triangleq \left\{ \frac{i}{N}| i=1,...,N\right\}$. Clearly, $U_c$ is a $\frac{1}{N}$-net \citep{vershynin2018high,li2022settling} of the interval $[0,1]$. 

Note that for any $u\in U_c$,  $\cp^{s,u}$ is independent with $\hat{\kp}^a_s$, hence $V^{s,u}$ is also independent with $\hat{\kp}^a_s$. Also, since $u\leq 1$, $\|V^{s,u}\|_\infty\leq \frac{1}{1-\gamma}$.

Then invoking \Cref{lemma:4} implies that for any $N(s,a)>0$, with probability at least $1-\delta$,  it holds simultaneously for all $u\in U_c$ that
\begin{align}
        |(\hat{\kp}^a_s-\kp^a_s)V^{s,u}|&\leq \sqrt{\frac{48\textbf{Var}_{\hat{\kp}^a_s}(V^{s,u})\log\frac{4N^2}{\delta}}{N(s,a)}}+\frac{48\log\frac{4N^2}{\delta}}{(1-\gamma)N(s,a)},\\ 
        \var_{\hat{\kp}^a_s}(V^{s,u})&\leq 2\var_{\kp^a_s}(V^{s,u})+\frac{5\log\frac{4N^2}{\delta}}{3(1-\gamma)^2N(s,a)}. 
    \end{align}

\textbf{Part 4}. Since $u^*=(1-\gamma)V^{\pi_r}_{\hat{\cp}}\leq 1$, then there exists $u_0\in U_c$, such that $|u_0-u^*|\leq \frac{1}{N}$. Moreover, we claim that
\begin{align}\label{eq:40}
    \|V^{s,u^*}-V^{s,u_0}\|_\infty\leq \frac{1}{N(1-\gamma)}.
\end{align}
To prove \cref{eq:40}, first note that
\begin{align}\label{eq:35}
    |V^{s,u^*}(s)-V^{s,u_0}(s)|&\leq \max_a |(u^*-u_0)+\gamma (\sigma_{(\cp^{s,u^*})^a_s}(V^{s,u^*})-\sigma_{(\cp^{s,u_0})^a_s}(V^{s,u_0}))|\nn\\
    &\overset{(a)}{\leq} |u^*-u_0|+ \gamma \max_a |\sigma_{(\cp^{s,u^*})^a_s}(V^{s,u^*})-\sigma_{(\cp^{s,u^*})^a_s}(V^{s,u_0})|\nn\\
     &\overset{(b)}{\leq} |u^*-u_0|+ \gamma \|V^{s,u^*}-V^{s,u_0}\|_\infty,
\end{align}
where $(a)$ is because $(\cp^{s,u_0})^a_s=(\cp^{s,u^*})^a_s=\{\textbf{1}_s\}$, and $(b)$ is due to the non-expansion of the support function (Lemma 1, \citep{panaganti2022sample}). 

For $x\neq s$, we have that
\begin{align}\label{eq:366}
    |V^{s,u^*}(x)-V^{s,u_0}(x)|&\leq \max_a |\hat{r}(x,a)-\hat{r}(x,a)+\gamma (\sigma_{\hat{\cp}^a_x}(V^{s,u^*})-\sigma_{\hat{\cp}^a_x}(V^{s,u_0}))|\nn\\
    &\leq \gamma \|V^{s,u^*}-V^{s,u_0}\|_\infty. 
\end{align}
Thus by combining \cref{eq:35} and \cref{eq:366}, we have that
\begin{align}
    \|V^{s,u^*}-V^{s,u_0}\|_\infty\leq \frac{1}{N}+\gamma\|V^{s,u^*}-V^{s,u_0}\|_\infty, 
\end{align}
and hence proof the claim \cref{eq:40}. 

Therefore, 
\begin{align}
    &\var_{\kp^a_s}(V^{s,u_0})-  \var_{\kp^a_s}(V^{s,u^*})\nn\\
    &=\kp^a_s\left( (V^{s,u_0}-\kp^a_sV^{s,u_0})\circ (V^{s,u_0}-\kp^a_sV^{s,u_0}) - (V^{s,u^*}-\kp^a_sV^{s,u^*})\circ (V^{s,u^*}-\kp^a_sV^{s,u^*}) \right)\nn\\
    &\overset{(a)}{\leq} \kp^a_s\left( (V^{s,u_0}-\kp^a_sV^{s,u^*})\circ (V^{s,u_0}-\kp^a_sV^{s,u^*}) - (V^{s,u^*}-\kp^a_sV^{s,u^*})\circ (V^{s,u^*}-\kp^a_sV^{s,u^*}) \right)\nn\\
    &\leq \kp^a_s\left( (V^{s,u_0}-\kp^a_sV^{s,u^*}+V^{s,u^*}-\kp^a_s V^{s,u^*})\circ (V^{s,u_0}-V^{s,u^*})\right)\nn\\
    &\leq \frac{2}{1-\gamma}|\kp^a_s (V^{s,u_0}-V^{s,u^*})|\nn\\
    &\leq \frac{2}{N(1-\gamma)^2},
\end{align}
where $(a)$ is due to the fact $\mathbb{E}[X]=\argmin_c \mathbb{E}[(X-c)^2]$, and the last inequality is due to $\|V^{s,u_0}\|_\infty\leq \frac{1}{1-\gamma}, \|V^{s,u^*}\|_\infty\leq \frac{1}{1-\gamma}$. 

Similarly, swapping $V^{s,u_0}$ and $V^{s,u^*}$ implies 
\begin{align}
     \var_{\kp^a_s}(V^{s,u^*})-\var_{\kp^a_s}(V^{s,u_0})
    \leq \frac{2}{N(1-\gamma)^2},
\end{align}
and further 
\begin{align}\label{eq:86}
     |\var_{\kp^a_s}(V^{s,u^*})-\var_{\kp^a_s}(V^{s,u_0})|
    \leq \frac{2}{N(1-\gamma)^2}.
\end{align}

We note that \cref{eq:86} is exactly identical to (159) in Section A.4 of \citep{li2022settling}, and hence the remaining proof can be obtained by following the proof in Section A.4 in  \citep{li2022settling}, and is omitted here. 
\end{proof}

\begin{lemma}\label{lemma:33}
With probability at least $1-\delta$, it holds that for any $s,a$,
\begin{align}
    &\hat{\kp}^a_sV^{\pi_r}_{\hat{\cp}}-\kp^a_sV^{\pi_r}_{\hat{\cp}}\nn\\
    &\leq \hat{\kp}^a_sV^{\pi_r}_{\hat{\cp}}-\sigma_{\hat{\cp}^a_s}(V^{\pi_r}_{\hat{\cp}}) + 2\sqrt{\frac{48\log\frac{4SAN}{(1-\gamma)\delta}\epsilon_1}{(1-\gamma)N(s,a)}}+2\epsilon_1\sqrt{\frac{48\log\frac{4SAN}{(1-\gamma)\delta}}{N(s,a)}}+\frac{96\log\frac{4SAN}{(1-\gamma)\delta}}{(1-\gamma)N(s,a)}.
\end{align}    
\end{lemma}
\begin{proof}
We first show the inequality above holds for any $V\in\left[0,\frac{1}{1-\gamma}\right]$ that is independent with $\hat{\kp}^a_s$.
    
From the duality form of the $\sigma_\cp(V)$ \citep{iyengar2005robust}, it holds that
    \begin{align}\label{eq:74}
        \hat{\kp}^a_sV-\sigma_{\hat{\cp}^a_s}(V)&=\hat{\kp}^a_sV-\max_{\alpha\in [V_{\min},V_{\max}]}\bigg\{\hat{\kp}^a_s V_\alpha -\sqrt{R^a_s\var_{\hat{\kp}^a_s}(V_\alpha)} \bigg\}\nn\\
        &=\min_{\alpha\in [V_{\min},V_{\max}]}\bigg\{\hat{\kp}^a_s (V-V_\alpha)+\sqrt{R^a_s\var_{\hat{\kp}^a_s}(V_\alpha)}\bigg\},
    \end{align}
where $V_\alpha\in\mathbb{R}^S$ and $V_\alpha(s)=\min\{ V(s),\alpha\}$.
    
We denote the optimum of the optimization by $\alpha^*$, i.e., 
\begin{align}
    \hat{\kp}^a_s (V-V_{\alpha^*})+\sqrt{R^a_s\var_{\hat{\kp}^a_s}(V_{\alpha^*})}=\min_{\alpha\in [V_{\min},V_{\max}]}\bigg\{\hat{\kp}^a_s (V-V_\alpha)+\sqrt{R^a_s\var_{\hat{\kp}^a_s}(V_\alpha)}\bigg\}.
\end{align}
Then \cref{eq:74} can be further bounded as 
\begin{align}\label{eq:76}
    \hat{\kp}^a_sV-\sigma_{\hat{\cp}^a_s}(V)&=\hat{\kp}^a_s (V-V_{\alpha^*})+\sqrt{R^a_s\var_{\hat{\kp}^a_s}(V_{\alpha^*})}\nn\\
    &\geq \hat{\kp}^a_s (V-V_{\alpha^*})- \kp^a_s (V-V_{\alpha^*})+\sqrt{R^a_s\var_{\hat{\kp}^a_s}(V_{\alpha^*})},
\end{align}
where the inequality is due to the fact that $V(s)\geq V_\alpha(s)$.

On the other hand, for any $\alpha\in [V_{\min},V_{\max}]$ that is fixed and independent with $\hat{\kp}^a_s$, we have that
\begin{align}
    \hat{\kp}^a_sV-\kp^a_sV&= \hat{\kp}^a_sV_\alpha-\kp^a_sV_\alpha+\hat{\kp}^a_s(V-V_\alpha)-\kp^a_s(V-V_\alpha)\nn\\
    &\leq \hat{\kp}^a_s(V-V_\alpha)-\kp^a_s(V-V_\alpha)+ \frac{48\log\frac{4SAN}{(1-\gamma)\delta}}{(1-\gamma)N(s,a)}+\sqrt{\frac{48\log\frac{4SAN}{(1-\gamma)\delta}\var_{\hat{\kp}^a_s}(V_\alpha)}{N(s,a)}},
\end{align}
which is due to $\alpha$ is independent from $\hat{\kp}^a_s$, and applying the Bernstein's inequality and (102) of \citep{shi2023curious}. Moreover,  it implies that 
\begin{align}\label{eq:78}
    &\hat{\kp}^a_sV-\kp^a_sV\nn\\
    &\leq \hat{\kp}^a_s(V-V_{\alpha^*})-\kp^a_s(V-V_{\alpha^*})+ \frac{48\log\frac{4SAN}{(1-\gamma)\delta}}{(1-\gamma)N(s,a)}+\sqrt{\frac{48\log\frac{4SAN}{(1-\gamma)\delta}\var_{\hat{\kp}^a_s}(V_{\alpha^*})}{N(s,a)}}\nn\\
    &\quad+ \left( \sqrt{\frac{48\log\frac{4SAN}{(1-\gamma)\delta}\var_{\hat{\kp}^a_s}(V_{\alpha})}{N(s,a)}}-\sqrt{\frac{48\log\frac{4SAN}{(1-\gamma)\delta}\var_{\hat{\kp}^a_s}(V_{\alpha^*})}{N(s,a)}}\right)+ (\hat{\kp}^a_s(V_{\alpha^*}-V_\alpha)-\kp^a_s(V_{\alpha^*}-V_\alpha)).
\end{align}
We now construct an $\epsilon_1$-Net \citep{vershynin2018high} of $\left[0,\frac{1}{1-\gamma}\right]$ with $\epsilon_1=\frac{1}{N}$. Specifically, there exists $\mathcal{U}=\left\{\alpha_1,\alpha_2,...,\alpha_m| \alpha_i\in \left[0,\frac{1}{1-\gamma}\right] \right\}$, such that for any $\alpha\in \left[0,\frac{1}{1-\gamma}\right]$, there exists $\alpha_j\in\mathcal{U}$ with $|\alpha-\alpha_j|\leq \epsilon_1$. Since $\alpha^*\in \left[0,\frac{1}{1-\gamma}\right]$, there exists $\beta\in\mathcal{U}$ with $|\beta-\alpha^*|\leq \epsilon_1$. 

It is straightforward to see that 
\begin{align}
    \|V_{\alpha^*}-V_\beta\|\leq |\beta-\alpha^*|\leq \epsilon_1,
\end{align}
and similarly following (207) of \citep{shi2023curious} implies that 
\begin{align}
    \left|\sqrt{\var_{\hat{\kp}^a_s}(V_\beta)}-\sqrt{\var_{\hat{\kp}^a_s}(V_{\alpha^*})} \right|\leq 2\sqrt{\frac{\epsilon_1}{1-\gamma}}.
\end{align}
Hence we set $\alpha=\beta$ in \cref{eq:78}, take the union bound over $\mcs,\mca$ and $\mathcal{U}$, and plug in the two inequalities above, we have that 
\begin{align}
    &\hat{\kp}^a_sV-\kp^a_sV\nn\\
    &\leq \hat{\kp}^a_s(V-V_{\alpha^*})-\kp^a_s(V-V_{\alpha^*})+ \frac{48\log\frac{4SAN}{(1-\gamma)\delta}}{(1-\gamma)N(s,a)}+\sqrt{\frac{48\log\frac{4SAN}{(1-\gamma)\delta}\var_{\hat{\kp}^a_s}(V_{\alpha^*})}{N(s,a)}}\nn\\
    &\quad+ 2\sqrt{\frac{48\log\frac{4SAN}{(1-\gamma)\delta}\epsilon_1}{(1-\gamma)N(s,a)}}+2\epsilon_1\sqrt{\frac{48\log\frac{4SAN}{(1-\gamma)\delta}}{N(s,a)}}.
\end{align}
Involving \cref{eq:76} further implies that 
\begin{align}
    &\hat{\kp}^a_sV-\kp^a_sV\nn\\
    &\leq \hat{\kp}^a_sV-\sigma_{\hat{\cp}^a_s}(V) + 2\sqrt{\frac{48\log\frac{4SAN}{(1-\gamma)\delta}\epsilon_1}{(1-\gamma)N(s,a)}}+2\epsilon_1\sqrt{\frac{48\log\frac{4SAN}{(1-\gamma)\delta}}{N(s,a)}}+\frac{48\log\frac{4SAN}{(1-\gamma)\delta}}{(1-\gamma)N(s,a)},
\end{align}
and hence 
\begin{align}
    \sigma_{\hat{\cp}^a_s}(V)-\kp^a_sV \leq 2\sqrt{\frac{48\log\frac{4SAN}{(1-\gamma)\delta}\epsilon_1}{(1-\gamma)N(s,a)}}+2\epsilon_1\sqrt{\frac{48\log\frac{4SAN}{(1-\gamma)\delta}}{N(s,a)}}+\frac{48\log\frac{4SAN}{(1-\gamma)\delta}}{(1-\gamma)N(s,a)}.
\end{align}
Now we consider $V^{\pi_r}_{\hat{\cp}}$. Following the $\frac{1}{N}$-net $\mathcal{U}_2$ we constructed in \Cref{lemma:ber ineq}, $V^{\pi_r}_{\hat{\cp}}=V^{s,u}$ for some $u\in[0,1]$. Hence there exists some $u_i\in\mathcal{U}_2$ with $|u-u_i|\leq \frac{1}{N}$. Note that 
\begin{align}\label{eq:118}
    &\sigma_{\hat{\cp}^a_s}(V^{\pi_r}_{\hat{\cp}})-\kp^a_sV^{\pi_r}_{\hat{\cp}}\nn\\
    &=\sigma_{\hat{\cp}^a_s}(V^{s,u})-\kp^a_sV^{s,u}\nn\\
    &=\sigma_{\hat{\cp}^a_s}(V^{s,u_i})-\kp^a_sV^{s,u_i}\nn\\
    &\quad + \sigma_{\hat{\cp}^a_s}(V^{s,u})-\sigma_{\hat{\cp}^a_s}(V^{s,u_i})+\kp^a_sV^{s,u_i}-\kp^a_sV^{s,u}\nn\\
    &\leq  2\sqrt{\frac{48\log\frac{4SAN}{(1-\gamma)\delta}\epsilon_1}{(1-\gamma)N(s,a)}}+2\epsilon_1\sqrt{\frac{48\log\frac{4SAN}{(1-\gamma)\delta}}{N(s,a)}}+\frac{48\log\frac{4SAN}{(1-\gamma)\delta}}{(1-\gamma)N(s,a)}\nn\\
     &\quad + \sigma_{\hat{\cp}^a_s}(V^{s,u})-\sigma_{\hat{\cp}^a_s}(V^{s,u_i})+\kp^a_sV^{s,u_i}-\kp^a_sV^{s,u},
\end{align}
which is due to $V^{s,u}$ is independent with $\hat{\kp}^a_s$. Moreover, note that both $\sigma_{\hat{\cp}^a_s}(V)$ and $\hat{\kp}^a_s V$ are 1-Lipschitz, thus
\begin{align}
    \sigma_{\hat{\cp}^a_s}(V^{s,u})-\sigma_{\hat{\cp}^a_s}(V^{s,u_i})&\leq \|V^{s,u}-V^{s,u_i}\|_\infty, \\
    \kp^a_sV^{s,u_i}-\kp^a_sV^{s,u}&\leq \|V^{s,u}-V^{s,u_i}\|_\infty.
\end{align}
Now from \Cref{eq:40}, it follows that
\begin{align}
    \sigma_{\hat{\cp}^a_s}(V^{s,u})-\sigma_{\hat{\cp}^a_s}(V^{s,u_i})&\leq \frac{1}{N(1-\gamma)}, \\
    \kp^a_sV^{s,u_i}-\kp^a_sV^{s,u}&\leq\frac{1}{N(1-\gamma)}.
\end{align}
Hence combine with \Cref{eq:118}, and we have that 
\begin{align}
    &\sigma_{\hat{\cp}^a_s}(V^{\pi_r}_{\hat{\cp}})-\kp^a_sV^{\pi_r}_{\hat{\cp}}\nn\\
    &\leq  2\sqrt{\frac{48\log\frac{4SAN}{(1-\gamma)\delta}\epsilon_1}{(1-\gamma)N(s,a)}}+2\epsilon_1\sqrt{\frac{48\log\frac{4SAN}{(1-\gamma)\delta}}{N(s,a)}}+\frac{48\log\frac{4SAN}{(1-\gamma)\delta}}{(1-\gamma)N(s,a)}\nn\\
     &\quad + \sigma_{\hat{\cp}^a_s}(V^{s,u})-\sigma_{\hat{\cp}^a_s}(V^{s,u_i})+\kp^a_sV^{s,u_i}-\kp^a_sV^{s,u}\nn\\
     &\leq 2\sqrt{\frac{48\log\frac{4SAN}{(1-\gamma)\delta}\epsilon_1}{(1-\gamma)N(s,a)}}+2\epsilon_1\sqrt{\frac{48\log\frac{4SAN}{(1-\gamma)\delta}}{N(s,a)}}+\frac{48\log\frac{4SAN}{(1-\gamma)\delta}}{(1-\gamma)N(s,a)}+\frac{2}{N(1-\gamma)}\nn\\
     &\leq 2\sqrt{\frac{48\log\frac{4SAN}{(1-\gamma)\delta}\epsilon_1}{(1-\gamma)N(s,a)}}+2\epsilon_1\sqrt{\frac{48\log\frac{4SAN}{(1-\gamma)\delta}}{N(s,a)}}+\frac{96\log\frac{4SAN}{(1-\gamma)\delta}}{(1-\gamma)N(s,a)},
\end{align}
which completes the proof. 
\end{proof}

\end{document}